\newtheorem{theorem}{Theorem}[section]
\newtheorem{definition}{Definition}[section]
\newtheorem{lemma}{Lemma}[section]
\newtheorem{corollary}{Corollary}[section]
\newtheorem{assumption}{Assumption}[section]
\DeclareMathOperator*{\argmax}{\arg\!\max}
\title{PoleStack: Robust Pole Estimation of Irregular Objects from Silhouette Stacking}
\author{Jacopo Villa \footnote{Professional Research Assistant, Laboratory for Atmospheric and Space Physics, 1234 Innovation Dr, Boulder, CO 80303} and Jay W. McMahon\footnote{Associate Professor, Colorado Center for Astrodynamics Research, 3775 Discovery Dr, Boulder, CO 80303}}
\affil{University of Colorado Boulder, Boulder, CO 80303, USA}
\author{Issa A. D. Nesnas\footnote{Principal Technologist, Jet Propulsion Laboratory, California Institute of Technology, 4800 Oak Grove Drive, Pasadena, CA 91109}}
\affil{Jet Propulsion Laboratory, California Institute of Technology, Pasadena, CA 91109, USA}
\begin{document}

\maketitle

\begin{abstract}
We present an algorithm to estimate the rotation pole of a principal-axis rotator using silhouette images collected from multiple camera poses. First, a set of images is stacked to form a single silhouette-stack image, where the object’s rotation introduces reflective symmetry about the imaged pole direction. We estimate this projected-pole direction by identifying maximum symmetry in the silhouette stack. To handle unknown center-of-mass image location, we apply the Discrete Fourier Transform to produce the silhouette-stack amplitude spectrum, achieving translation invariance and increased robustness to noise. Second, the 3D pole orientation is estimated by combining two or more projected-pole measurements collected from different camera orientations. We demonstrate degree-level pole estimation accuracy using low-resolution imagery, showing robustness to severe surface shadowing and centroid-based image-registration errors. The proposed approach could be suitable for pole estimation during both the approach phase toward a target object and while hovering.
\end{abstract}

\section{Introduction}

Estimating the rotational motion of a space object is a necessary task for various spaceflight applications. For missions to small celestial bodies (e.g., asteroids), the target body's rotation axis---referred to as \textit{pole} hereafter---must be accurately determined to define a body-fixed reference frame and compute surface-relative quantities supporting proximity operations, terrain-relative navigation, topography modeling, and characterization of the dynamical environment\cite{gaskell2008characterizing,palmer2022practical,adam2023stereophotoclinometry}. Similarly, for rendezvous, proximity operations, and docking (RPOD) scenarios at an artificial object, an accurate estimate of the rotation axis is needed to process measurements of the target surface, such as scans from a LIDAR sensor or reference points observed from a camera, and enable target-relative navigation by registering such measurements with a target-fixed reference frame\cite{opromolla2017pose}. Rotating bodies can be subdivided into principal-axis and non-principal-axis rotators (or tumblers), based on whether the rotation-axis direction is fixed or varies with respect to an inertial frame, respectively. In this work, we address pole estimation for principal-axis rotators, whereby the angular-momentum and angular-velocity vectors are both aligned with each other and with one of the principal axes of inertia. It has been shown that a large portion of small celestial bodies undergoes principal-axis rotation\cite{harris1994tumbling}; hence, we present results based on small-body imagery as a case study. In principle, however, the proposed technique is also applicable to other irregularly-shaped objects, such as artificial spacecraft.

\subsection{Pole Estimation During the Approach Phase}

For missions to unknown or poorly known targets, it is often of interest to obtain a pole estimate during the target-approach phase, as the spacecraft nears its destination. Knowledge of the pole direction enables the definition of a rotating, target-fixed reference frame supporting other navigation-and-characterization tasks, such as shape reconstruction and surface-relative navigation\cite{palmer2022practical}. During approach, the target body often appears in onboard imagery as a low-to-medium resolution object---spanning from less than one pixel to hundreds of pixels---for an extended period of time. Once the target is resolved, the \textit{silhouette} of its irregular shape becomes visible in imagery. We define an object's silhouette as the occupancy mask (a binary image) of the target object in front of the imaged background. Unlike surface landmarks, which usually become distinguishable at higher resolutions, silhouettes can be extracted from both low- and high-resolution imagery.

\subsection{Related Work}

A variety of image-based methods have been proposed and effectively employed for small-body pole estimation. These can be subdivided into three categories, based on the image resolution of interest: (1) lightcurve-based methods, for unresolved-object imagery, (2) silhouette-based methods, for low- and medium-resolution images of the object, and (3) landmark- or pattern-based methods, for high-resolution images of the object.

\subsubsection{Pole Estimation for Unresolved Objects}
Lightcurve analysis has been extensively used to estimate the pole, shape, and rotation periods of small celestial bodies, typically through ground-based observations\cite{kaasalainen2001optimization,chng2022globally}. The lightcurve is the evolution of total brightness for an imaged object as it is observed over time. The key principle is that lightcurve signatures depend on the shape and rotational motion of the associated object. Such techniques are typically designed for unresolved-object observations, where the irregular shape is not directly observable, and require modeling surface-reflectance properties to constrain the optimization problem. This approach can lead to multiple shape and pole hypotheses, which cannot be disambiguated with limited data or unfavorable observation geometries. Furthermore, lightcurve-based estimates are sensitive to surface mismodeling, e.g., due to heterogeneities in albedo and reflectance properties.

\subsubsection{Pole Estimation for Low and Medium-resolution Objects}

Previous work investigated the use of silhouettes for pole estimation, suitable for lower-resolution imagery. Bandyopadhyay et al. propose a Pole-from-Silhouette technique where multiple pole-orientation hypotheses are evaluated through a grid search by matching predicted and observed silhouettes\cite{bandyopadhyay2021light}. For each pole hypothesis, and assuming knowledge of the rotation rate, silhouette predictions are computed by firstly reconstructing the 3D visual hull of the target object, using a Shape-from-Silhouette method, and then reprojecting the visual hull's silhouette onto the camera plane. This technique has been demonstrated as part of an autonomous-navigation pipeline for small-body exploration\cite{nesnas2021autonomous}. However, said approach is computationally expensive as it requires 3D-shape reconstruction for each pole hypothesis through the grid search, and relies on the assumption that the object's center-of-mass location relative to the camera is known.

\subsubsection{Pole Estimation for High-resolution Objects}

For sufficiently resolved target images, the pole orientation can be estimated by tracking surface landmarks across multiple images. The underlying principle here is that landmark tracks projected onto the camera plane depend on the orientation between the camera reference frame and the pole that landmarks rotate with respect to. Computing landmark tracks requires detecting and matching the same set of landmarks across multiple images, which can be challenging when surface lighting conditions and camera poses evolve across observations. For small-body missions, state-of-the-practice techniques rely on estimating surface-topography models, typically using a method known as Stereophotoclinometry (SPC)\cite{palmer2022practical,adam2023stereophotoclinometry}, which are then used to compute the appearance of landmarks and perform landmark tracking. To date, techniques such as SPC rely on complex ground-based operations and are not suitable for autonomous characterization.

Autonomous pole-estimation approaches have been proposed, such as the use of visual-feature-tracking algorithms---e.g., SIFT \cite{lowe2004distinctive}---for model-free landmark tracking\cite{panicucci2023vision,villa2020optical}. One limitation is that feature tracks are subject to drift when tracked across multiple images, especially for objects characterized by challenging lighting conditions and irregular surface topography\cite{morrell2020autonomous}, which can reduce the accuracy of the associated pole estimates. Further, tracking surface features requires high-resolution images, which are typically unavailable until the spacecraft is in close proximity to the body. In addition to visual features, \textit{circle-of-latitude} patterns, i.e., elliptical ``streaks" produced by stacking consecutive images over time, have been proposed as image patterns to estimate the pole orientation\cite{kuppa2024initial,christian2024pole}. With such methods, pole estimates can be sensitive to the quality of the extracted circle-of-latitude patterns, which in turn depend on image-alignment errors and surface appearance.

\subsection{Proposed Approach}

In this work, we present an algorithm to estimate the pole orientation by stacking silhouettes of the target object; we name this approach PoleStack. By imaging the silhouette evolution over time, as the target object rotates about its pole, the object's pole can be determined. As such, silhouettes allow to extract an early pole estimate during the long-lasting approach phase to determine the target's rotational motion before arrival. The proposed method is suitable for low-resolution imagery, challenging lighting and shadowing conditions, and common image-alignment errors (e.g., resulting from using the center of brightness). Such conditions can be encountered during approach as well as other mission phases.

This work is subdivided as follows: the problem formulation is presented in Section \ref{sec:problem_formulation}; the theoretical background supporting the proposed approach is described in Section \ref{sec:theor_dev}; the PoleStack algorithm is presented in Section \ref{sec:algorithm_overview}; experimental results showcasing PoleStack performance are reported in Section \ref{sec:results}; lastly, the work's conclusions are reported in Section \ref{sec:conclusions}. While a thorough theoretical discussion is provided, the actual algorithm consists of a few steps (see Algorithm \ref{alg:in-plane}).

\section{Problem Formulation}
\label{sec:problem_formulation}

\subsection{Preliminaries}

Consider an irregular object with surface $\Omega\subset \mathbb{R}^3$ rotating about a pole direction $\boldsymbol{\omega}$, where $\|\boldsymbol{\omega}\|=1$. Given a vector $\mathbf{v}$ and a reference frame $\mathcal{A}$, the notation $\mathbf{v}_\mathcal{A}$ indicates $\mathbf{v}$ expressed with respect to $\mathcal{A}$. Let $\mathcal{N}$ denote the inertial reference frame and $\mathcal{B}$ denote the body-fixed reference frame attached to the surface $\Omega$. The z-axis of $\mathcal{B}$ is parallel to $\boldsymbol{\omega}$, such that $\boldsymbol{\omega}_\mathcal{B}=[0,0,1]^\top$; the symbol $\top$ denotes the transposition operator. The origin of both $\mathcal{N}$ and $\mathcal{B}$ coincides with the object's center of mass. In this work, 3D vectors are expressed with respect to the body-fixed frame $\mathcal{B}$, unless otherwise specified, in which case the reference-frame notation is often omitted for brevity (e.g., we write $\mathbf{v}$ instead of $\mathbf{v}_\mathcal{B}$). 






Suppose that an observing camera acquires images of the surface $\Omega$. The camera pose is defined by the position $\mathbf{r} \in \mathbb{R}^3$ and the rotation matrix $\left[ \mathcal{B} \mathcal{C} \right] \in \mathbb{R}^{3\times 3}$ which transforms a vector from the camera-fixed reference frame $\mathcal{C}$ to the body-fixed frame $\mathcal{B}$, i.e., $\mathbf{v}_\mathcal{B} = \left[ \mathcal{B} \mathcal{C} \right] \mathbf{v}_\mathcal{C}$. In this work, we follow the OpenCV convention \cite{opencv_library} to define the camera frame $\mathcal{C}$, where the camera x-axis and y-axis are oriented from left to right and from top to bottom with respect to the image, whereas the z-axis points toward the observed scene.

Assuming the pinhole camera model \cite{hartley2003multiple}, a 3D surface point $\mathbf{p}=[p_1,p_2,p_3]^\top\in\Omega$ is observed in the image as a projected point $\mathbf{u}_\mathbf{p}=[u_\mathbf{p},v_\mathbf{p}]^\top\in\mathbb{P}^2$\footnote{$\mathbb{P}^2$ denotes the 2D projective space\cite{henry2023absolute}.}, where $[u_\mathbf{p},v_\mathbf{p}]^\top$ are the image coordinates expressed in units of pixels, given by:

\begin{equation}
    \label{eq:u_Cp}
    \bar{\mathbf{u}}_\mathbf{p} = C \bar{\mathbf{p}}
\end{equation}

where $\bar{\mathbf{p}}$ and $\bar{\mathbf{u}}_\mathbf{p}$ are $\mathbf{p}$ and $\mathbf{u}_\mathbf{p}$ expressed in homogeneous coordinates, respectively. $C\in \mathbb{R}^{3\times 4}$ is the camera-projection matrix, defined as: 

\begin{equation}
\label{eq:C_nu}
    C = K \bigl[ \left[ \mathcal{C} \mathcal{B} \right] \mid -\mathbf{r}_\mathcal{C} \bigr]
\end{equation}

where $K\in \mathbb{R}^{3\times 3}$ is the camera intrinsic matrix containing the calibration parameters. $\bigl[ \left[ \mathcal{C} \mathcal{B} \right] \mid -\mathbf{r}_\mathcal{C} \bigr] \in \mathbb{R}^{3\times 4}$ is known as the camera-extrinsic matrix, where $\left[ \mathcal{C} \mathcal{B} \right] = \left[ \mathcal{B} \mathcal{C} \right]^\top$ is the rotation matrix transforming a vector from $\mathcal{B}$ to $\mathcal{C}$ and $-\mathbf{r}_\mathcal{C}$ is the location of the object's center of mass with respect to the camera position, expressed in $\mathcal{C}$.

\subsection{Hovering-Camera Model}
\label{sec:hovering_camera_model}

For our purposes, it is convenient to express the body-fixed camera position $\mathbf{r}_\mathcal{B}$ in terms of spherical coordinates:

\begin{equation}
    \mathbf{r}_\mathcal{B} = r \begin{bmatrix}
        \mathrm{cos}(\lambda)\mathrm{cos}(\phi)\\
        \mathrm{cos}(\lambda)\mathrm{sin}(\phi)\\
        \mathrm{sin}(\lambda)
    \end{bmatrix}
\end{equation}

where $r = \|\mathbf{r}\|$ is the camera distance, $\lambda \in [-\frac{\pi}{2},\frac{\pi}{2}]$ is the camera latitude, and $\phi \in [0,2\pi)$ is the camera longitude. Then, we define a set of \textit{hovering camera views}, $\mathcal{V}_j$, as:

\begin{equation}
    \mathcal{V}_j = \{ \mathbf{r}(r,\lambda,\phi) \; | \; \phi \in [\phi_0,\phi_f],\, r = r_j,\, \lambda = \lambda_j \}
\end{equation}

where $[\phi_0,\phi_f]$ is a camera-longitude interval whereas $r_j$ and $\lambda_j$ are constant radius and latitude values, respectively. That is, $\mathcal{V}_j$ represents a set of camera positions located at a fixed distance and latitude but varying longitude with respect to the body-fixed frame. This condition is encountered when the camera inertial position is fixed---a hovering state---and the body rotates about its pole.

This model is particularly relevant for scenarios where the evolution of the surface appearance is driven by the target-object rotation. Such conditions commonly arise during spacecraft approach or hover phases relative to the target. During approach, variations in camera distance and latitude typically progress more slowly than the object's rotational motion, which drives changes in camera longitude. In practice, small variations in the object's apparent size can be compensated for using prior knowledge of the camera trajectory, supporting the ``hovering" assumption. Additionally, when using long-range observations, perspective effects of the imaged surface are negligible.

\subsection{Problem Statement}

In this work, we address the following problem. An irregular object with surface $\Omega$ rotating about its pole $\boldsymbol{\omega}$ is given. Consider a hovering-camera view set $\mathcal{V}_j = \{ \mathbf{r}(\phi_{j,1}), \dots, \mathbf{r}(\phi_{j,M_j}) \}$ and the associated image set $\mathcal{I}_j = \{ I_{j,1}, \dots, I_{j,M_j} \}$, where $I_{j,k}\in\mathbb{R}^{N\times N}$ is an image of the surface $\Omega$ collected from the camera position $\mathbf{r}(\phi_{j,k})$\footnote{For simplicity and without loss of generality, we carry out the discussion for square ($N\times N$) images, instead of rectangular ($M\times N$) images.}. Then, the objective is to estimate the inertial pole direction $\boldsymbol{\omega}_\mathcal{N}$ using image sets $\{ \mathcal{I}_1, \dots, \mathcal{I}_\aleph \}$ collected from the corresponding camera-view sets, $\{ \mathcal{V}_1, \dots, \mathcal{V}_\aleph \}$\footnote{In this work, we formulate the problem as estimating $\boldsymbol{\omega}$ with respect to the inertial frame, $\mathcal{N}$, but it is easy to show that $\boldsymbol{\omega}$ could equivalently be estimated with respect to a camera reference frames as well.}.

The rationale for of using a hovering-camera model is that, under certain assumptions (see Section \ref{sec:assumptions}), the pole direction observed across the hovering-camera viewset $\mathcal{V}_j$ remains constant across images $\{ I_{j,1}, \dots, I_{j,M_j} \}$. The observed pole direction is then estimated by combining the information in $\{ I_{j,1}, \dots, I_{j,M_j} \}$ to extract the evolution of the object rotating about its pole. 

\subsection{Assumptions}
\label{sec:assumptions}

The proposed approach is based on the following assumptions:

\begin{enumerate}
    \item The camera orientation with respect to the inertial frame, defined by $\left[ \mathcal{N} \mathcal{C} \right]$, is known. (If another frame $\mathcal{A}$ is used to estimate the pole direction $\boldsymbol{\omega}_\mathcal{A}$, then the rotation $\left[ \mathcal{A} \mathcal{C} \right]$ is known, instead.)
    \item The imaged surface $\Omega$ is entirely contained within the camera field of view.
    \item The inertial camera attitude $\left[ \mathcal{N} \mathcal{C} \right]$ is constant throughout the corresponding hovering-camera view set, and is such that the camera-boresight axis points toward the center of mass of the rotating object. While this assumption allows to simplify the theoretical development (Section \ref{sec:theor_dev}), we will relax it later on and generalize results to the case where the target center-of-mass location is unknown a priori. In practice, the center-of-mass image location need not be known, as discussed in Section \ref{sec:imaging_error_sources} and demonstrated in Section \ref{sec:results}.
    

    
    \item The silhouette has been previously extracted from the image, e.g., using thresholding techniques.
    \item The camera is perfectly calibrated, i.e., the camera intrinsic matrix $K$ is known.
\end{enumerate}

Note that the accuracy of the hovering-camera approximation depends on the time interval spanned by $\{ I_{j,1}, \dots, I_{j,M_j} \}$ and is generally more accurate for smaller time frames.

\section{Theoretical Development}
\label{sec:theor_dev}

This section presents the theoretical foundations of the PoleStack algorithm. We provide a mathematical model of the silhouette-stack image obtained from a hovering-camera viewset, describe its symmetry properties, error sources affecting overall symmetry, and the use of the Discrete Fourier Transform (DFT) for robust symmetry detection.

\subsection{Summary of Results}
\label{sec:summary_of_results}
The key results leveraged by the proposed approach can be summarized as follows:

\begin{enumerate}
    \item The image obtained by stacking---also known as co-adding---silhouette images collected across some camera-longitude range exhibits some level of reflective symmetry with respect to the pole direction projected onto the camera plane (see Figures \ref{fig:silh_stack_360deg}-\ref{fig:silh_stack_360deg_centrd}). The projected-pole direction can then be estimated by finding the direction of maximum symmetry in the silhouette-stack image. Note that this estimate only provides the pole-direction component on the camera plane.
    \item We identify three key error sources affecting silhouette-stack symmetry: partial camera-longitude coverage, surface shadowing, and silhouette-alignment errors. These effects manifest as an additive error term applied on the silhouette-stack image.
    \item The amplitude spectrum of the silhouette-stack image, obtained through the Discrete Fourier Transform (DFT), preserves the reflective symmetry of the silhouette-stack image while exhibiting translation invariance. This translation invariance makes frequency-domain symmetry detection particularly valuable when the image location of the object's center of mass is not well known.
    \item A 3D pole-direction estimate, including the out-of-plane component, can be obtained by combining (``triangulating") multiple in-plane pole estimates computed from different hovering-camera viewsets.
\end{enumerate}

\subsection{Symmetry of Silhouette Stacks}
\label{sec:perfect_silh}

In this section, we show that the evolution of silhouette images observed from a hovering camera across some longitude range exhibits some level of symmetry about the pole direction. In the presented formalism, we model images and camera-longitude intervals as continuous quantities, i.e., we neglects discretization effects due to image quantization and the finite number of camera views. In practice, this model can be representative of scenarios with sufficient image resolution and overlap between consecutive images across longitude, as empirically shown in Section \ref{sec:results}.

\subsubsection{Silhouette Observation Model}
\label{sec:silhouette_obs_model}

Consider a set of hovering-camera views $\mathcal{V}_j$ defined according to Section \ref{sec:hovering_camera_model}. To begin, we describe a perfect silhouette by neglecting surface-shadowing effects corrupting its appearance.

\begin{definition}
   \label{def:Omega_v}
    Let $\Omega\subset \mathbb{R}^3$ be a surface, $\mathbf{p}\in\Omega$ a surface point, and $\mathbf{r}(\phi)$ the position of a hovering camera located at longitude $\phi$. We define the visible surface $\Omega_v(\phi) \subset \Omega$ as observed from $\mathbf{r}(\phi)$ as:

    \begin{equation}
        \Omega_v(\phi) = \{ \mathbf{p}\in\Omega \; | \; \forall t \in (0,1), \mathbf{r}(\phi)+t(\mathbf{p}-\mathbf{r}(\phi))\notin \Omega \}
    \end{equation}
\end{definition}

Definition \ref{def:Omega_v} implies that the set of visible surface points $\mathbf{p}\in\Omega_v(\phi)$ are such that the segment connecting $\mathbf{r}(\phi)$ with $\mathbf{p}$ does not does not intersect the surface $\Omega$, i.e., $\mathbf{p}$ is not physically occluded by other surface points, when observed from $\mathbf{r}(\phi)$.

\begin{definition}
    \label{def:s_v}
    Let $C(\phi)$ be the camera-projection matrix (Equation \ref{eq:C_nu}) associated with a hovering-camera view $\mathbf{r}(\phi)$ and let $\Omega_v(\phi)$ be the corresponding visible surface. We define the silhouette $\mathcal{S}(\phi) \subset \mathbb{P}^2$ of the surface $\Omega$, as seen from $\mathbf{r}(\phi)$, as:

    \begin{equation}
        \mathcal{S}(\phi) = \{ \mathbf{u} \in \mathbb{P}^2 \; | \; \bar{\mathbf{u}}=C(\phi)\bar{\mathbf{p}},\,\mathbf{p}\in\Omega_v(\phi) \}
    \end{equation}

    where $\bar{\mathbf{u}}$ and $\bar{\mathbf{p}}$ are the homogeneous coordinates of $\mathbf{u}$ and $\mathbf{p}$, as shown by Equation \ref{eq:u_Cp}.
\end{definition}

According to Definition $\ref{def:s_v}$, a silhouette represent the image region containing the visible surface, coinciding with the image foreground. In this work, we represent silhouette regions using indicator functions that denote their occupancy in the image.

\begin{definition}
\label{def:indicator}
Let $\mathcal{P}$ and $\mathcal{Q}$ be two sets, such that $\mathcal{P} \subseteq \mathcal{Q}$, and let $\mathbf{q}\in\mathcal{Q}$. The indicator function of $\mathcal{P}$, denoted as $\mathbf{1}_\mathcal{P}:\mathcal{Q}\rightarrow \mathbb{B}$, is defined as:
    
    \begin{equation}
        \mathbf{1}_\mathcal{P}(\mathbf{q}) = \begin{cases}
    1 & \text{if } \;\; \mathbf{q} \in \mathcal{P} \\
    0   & \text{otherwise }
  \end{cases}
    \end{equation}

    where $\mathbb{B}=\{0,1\}$ is the Boolean set.
\end{definition}

\begin{definition}
\label{def:1_S}
    Given a silhouette $\mathcal{S}(\phi)$ (Definition \ref{def:s_v}), we define its indicator function $\mathbf{1}_{\mathcal{S}}(u,v; \phi)$ with parameter $\phi$ as:

    \begin{equation}
        \mathbf{1}_\mathcal{S}(u,v; \phi) =
        \begin{cases}
            1 & \text{if} \;\; \mathbf{u}=[u,v]^\top \in \mathcal{S}(\phi)\\
            0 & \text{otherwise}
        \end{cases}
    \end{equation}

    according to Definition \ref{def:indicator}.
    
\end{definition}

To facilitate the analysis, we introduce an image reference frame centered at the object's center of mass and oriented along the projected-pole direction, as described below.

\begin{definition}
\label{def:omega_proj}
    Given a camera-projection matrix $C$, we define the corresponding pole projection in the image plane as the direction $\boldsymbol{\omega}_\mathrm{proj}\in\mathbb{P}^2,\,\| 
\boldsymbol{\omega}_\mathrm{proj} = 1 \|$, such that:

    \begin{equation}
        \bar{\boldsymbol{\omega}}'_\mathrm{proj} = C \bar{\boldsymbol{\omega}}
    \end{equation}

    where $\bar{\boldsymbol{\omega}}'_\mathrm{proj}$ and $\bar{\boldsymbol{\omega}}$ are the vectors $\boldsymbol{\omega}'_\mathrm{proj}$ and $\boldsymbol{\omega}$ expressed in homogeneous coordinates, respectively, and

    \begin{equation}
    \label{eq:omega_proj=omega'/norm_omega'}
        \boldsymbol{\omega}_\mathrm{proj} = \dfrac{\boldsymbol{\omega}'_\mathrm{proj}}{\|\boldsymbol{\omega}'_\mathrm{proj}\|}.
    \end{equation}
    
\end{definition}

Note that $\boldsymbol{\omega}_\mathrm{proj}$ is normalized (Equation \ref{eq:omega_proj=omega'/norm_omega'}) after the projection step to ensure unit norm.

\begin{definition}
\label{def:alpha=atan2}
    We define the pole-projection angle $\alpha$ as the angle describing the direction of $\boldsymbol{\omega}_\mathrm{proj} = [\omega_{\mathrm{proj},u}, \omega_{\mathrm{proj},v}]^\top$ in the image plane, with respect to the image vertical axis, as

    \begin{equation}
        \alpha = \mathrm{atan2}(-\omega_{\mathrm{proj},u},-\omega_{\mathrm{proj},v}).
    \end{equation}

\end{definition}

Observe that, from Definition \ref{def:alpha=atan2}, $\alpha=0$ when $\boldsymbol{\omega}_\mathrm{proj}$ points upwards in the image, i.e., along the $-v$ direction. 

\begin{definition}
    \label{def:u'}
    We define the pole-oriented image coordinates $\mathbf{u}'=[u',v']^\top$ as the coordinate set such that, for an image point $\mathbf{u}=[u,v]^\top$:

    \begin{equation}
        \begin{bmatrix}
            u \\ v
        \end{bmatrix} =
        \begin{bmatrix}
            \mathrm{cos}(\alpha) & \mathrm{sin}(\alpha) \\
            -\mathrm{sin}(\alpha) & \mathrm{cos}(\alpha)
        \end{bmatrix}
        \begin{bmatrix}
            u' \\ v'
        \end{bmatrix}
    \end{equation}
\end{definition}

The quantities introduced so far are illustrated in Figure \ref{fig:silh_scheme}.

\begin{figure}
    \centering
    \includegraphics[width=0.5\linewidth]{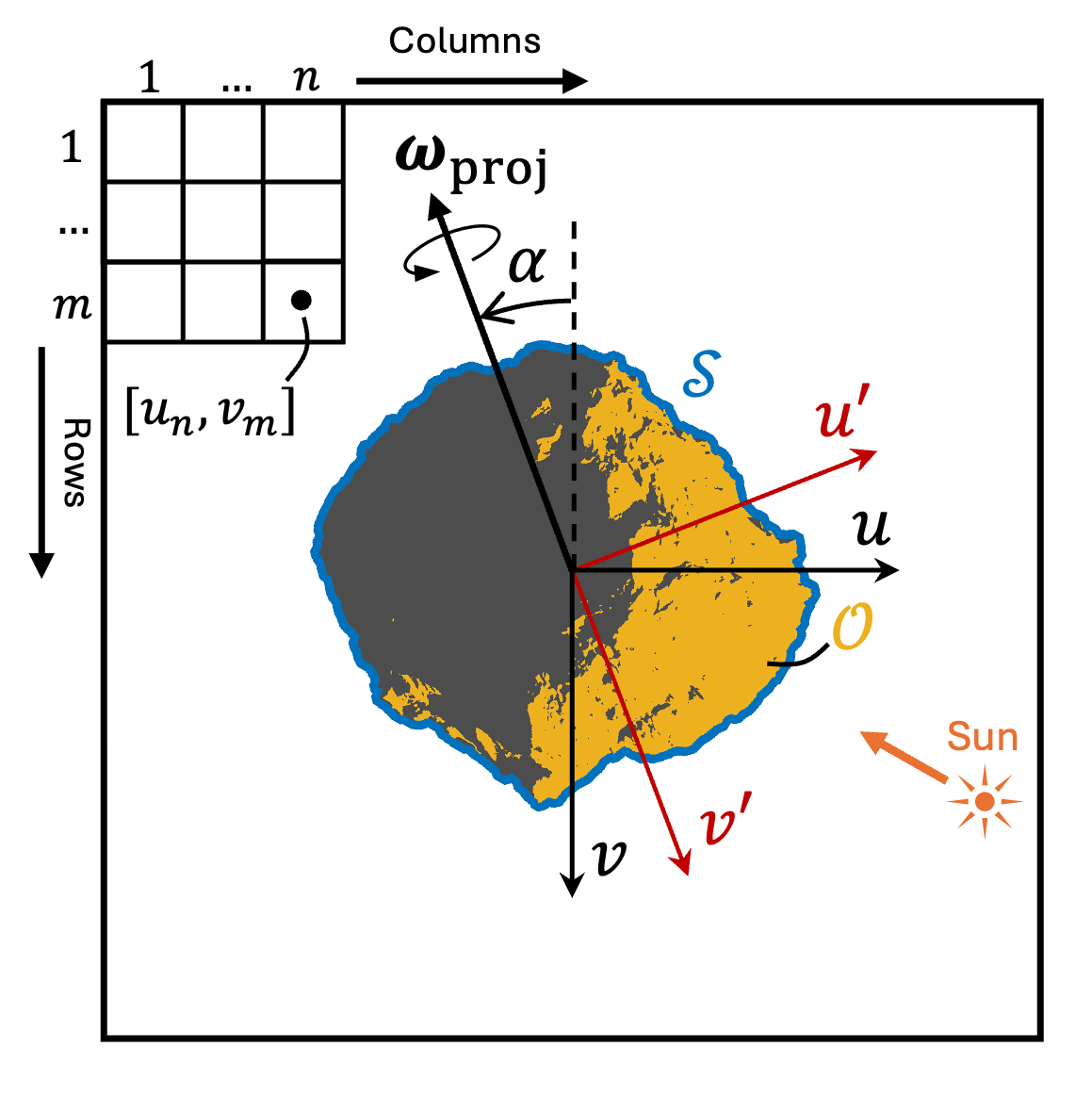}
    \caption{Schematic of an irregular object's silhouette observed in the image plane. Key parameters used in this work are reported: perfect silhouette ($\mathcal{S}$), observed silhouette ($\mathcal{O}$), projected-pole direction $\boldsymbol{\omega}_\mathrm{proj}$, pole-projection angle ($\alpha$), image coordinate system ($u$, $v$), pole-aligned coordinate system ($u'$, $v'$). A notional image array is also reported, where the center of the $mn$-th pixel ($m$-th row, $n$-th column) has image coordinates $[u_n, v_m]^\top$. The gray area corresponds to the shadowed silhouette region.}
    \label{fig:silh_scheme}
\end{figure}

We will now show that when viewed from a hovering camera, the trajectories of surface points $\mathbf{p}$ rotating about the pole $\boldsymbol{\omega}$ exhibit reflective symmetry with respect to the pole's projection $\boldsymbol{\omega}_\mathrm{proj}$.

\begin{definition}
\label{def:phi'}
    We define the camera-relative longitudinal coordinate $\phi'$ as
    
    \begin{equation}
        \phi' = \varphi - \phi
    \end{equation}

    where $\varphi$ is a generic longitude and $\phi$ is the camera longitude.
\end{definition}

\begin{definition}
    Given a visible surface point $\mathbf{p}\in\Omega_v$ with longitude coordinate $\phi_\mathbf{p}$, we define the camera-relative longitude of $\mathbf{p}$ as:

    \begin{equation}
        \phi'_\mathbf{p}=\phi_\mathbf{p}-\phi
    \end{equation}
\end{definition}

\begin{lemma}
    \label{lemma:s_v}

    Let $\mathbf{p}\in\Omega_v$ be a visible surface point and let $\mathbf{u}'_\mathbf{p}=[u'_\mathbf{p},v'_\mathbf{p}]^\top$ be its projection onto the image plane, expressed in pole-oriented coordinates. Then, the silhouette indicator function $\mathbf{1}_\mathcal{S}(u'_\mathbf{p},v'_\mathbf{p}; \phi')$ satisfies the following symmetry property:

    \begin{equation}
        \mathbf{1}_\mathcal{S}(-u'_\mathbf{p},v'_\mathbf{p}; -\phi'_\mathbf{p}) = \mathbf{1}_\mathcal{S}(u'_\mathbf{p},v'_\mathbf{p}; \phi'_\mathbf{p}) = 1, \, \phi'_\mathbf{p} = [-\pi,\pi]
    \end{equation}
\end{lemma}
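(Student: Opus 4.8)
The plan is to exhibit the two sign flips in the statement, namely $u'_\mathbf{p}\mapsto-u'_\mathbf{p}$ in the image and $\phi'_\mathbf{p}\mapsto-\phi'_\mathbf{p}$ in the rotation angle, as the two shadows of a single rigid reflection of the three-dimensional scene that leaves the camera invariant. The right-hand equality $\mathbf{1}_\mathcal{S}(u'_\mathbf{p},v'_\mathbf{p};\phi'_\mathbf{p})=1$ is immediate: $\mathbf{p}\in\Omega_v$ projects to $\mathbf{u}'_\mathbf{p}$, so that image location is occupied by the silhouette by Definitions \ref{def:s_v}--\ref{def:1_S}. For the nontrivial left-hand equality, I would first invoke rotational invariance about $\boldsymbol{\omega}$: since the hovering viewset $\mathcal{V}_j$ keeps $r$ and $\lambda$ fixed, a common rotation of body and camera about the pole leaves the image (and the projected pole) unchanged, so the silhouette expressed in pole-oriented coordinates depends only on the camera-relative longitude. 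This justifies writing $\mathbf{1}_\mathcal{S}(\cdot,\cdot;\phi')$ and comparing the two rotational states $\phi'_\mathbf{p}$ and $-\phi'_\mathbf{p}$ against the \emph{same} camera geometry.

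Next I would introduce the reflection $\sigma$ of $\mathbb{R}^3$ across the meridional plane spanned by $\boldsymbol{\omega}$ and the camera position $\mathbf{r}(\phi)$. Because the camera lies in this plane and its boresight points at the center of mass (Assumption 3), $\sigma$ fixes the camera center, the optical axis, and the pole, and merely swaps the two azimuthal half-spaces about $\boldsymbol{\omega}$; hence it sends a body point of camera-relative longitude $\phi'$ to one of camera-relative longitude $-\phi'$ while preserving its cylindrical radius and height. Applied to the physical point, $\sigma(\mathbf{p})$ is exactly the location that $\mathbf{p}$ occupies in the rotational state $-\phi'_\mathbf{p}$. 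On the image side, with the convention of Definitions \ref{def:omega_proj}--\ref{def:u'} in which the pole projects onto the image vertical (the $v'$-axis), $\sigma$ fixes the camera $y$- and $z$-axes and flips the camera $x$-axis, so it acts on pole-oriented coordinates as the horizontal flip $(u',v')\mapsto(-u',v')$. Combining the two facts, $\sigma(\mathbf{p})$ is a surface point present in the $-\phi'_\mathbf{p}$ state whose projection is $(-u'_\mathbf{p},v'_\mathbf{p})$.

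The step I expect to be the genuine obstacle is \emph{visibility}: $\sigma$ is an isometry for the camera but not a symmetry of the irregular surface $\Omega$, so $\sigma(\mathbf{p})$ need not be unoccluded in the $-\phi'_\mathbf{p}$ state even though $\mathbf{p}$ was visible in the $+\phi'_\mathbf{p}$ state, and a naive reflection of the occlusion condition transfers visibility only for the reflected surface $\sigma(\Omega)\neq\Omega$. The resolution I would use is that $\mathbf{1}_\mathcal{S}$ encodes \emph{occupancy}, not identity of the visible point: the camera ray through $(-u'_\mathbf{p},v'_\mathbf{p})$ passes through $\sigma(\mathbf{p})\in\Omega$, so its first intersection with $\Omega$ is some point $\mathbf{q}\in\Omega_v$ projecting to the same pixel (possibly $\mathbf{q}=\sigma(\mathbf{p})$ itself). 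Therefore the pixel is occupied and $\mathbf{1}_\mathcal{S}(-u'_\mathbf{p},v'_\mathbf{p};-\phi'_\mathbf{p})=1$, independent of whether $\sigma(\mathbf{p})$ is self-occluded. A careful write-up would confirm that $\sigma(\mathbf{p})$ has positive camera depth and that the tangential-grazing case is negligible, and would note that the hovering assumption (constant $r,\lambda$) is precisely what guarantees the two rotational states are compared through an identical camera so that $\sigma$ closes the argument.
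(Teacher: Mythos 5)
Your proposal is correct, and its skeleton matches the paper's proof: both arguments compare the same physical point $\mathbf{p}$ in the two rotational states $\pm\phi'_\mathbf{p}$ and show the projections are mirror images across the projected pole, then conclude via Definition \ref{def:1_S}. The differences are in how the key fact is established and how carefully the edge cases are treated. The paper simply asserts, by direct projection through $C$ (its Equation \ref{eq:-uv=Cp,uv=Cp}), that $\mathbf{p}(-\phi'_\mathbf{p})$ projects to $(-u'_\mathbf{p},v'_\mathbf{p})$, whereas you derive the same identity from a reflection $\sigma$ across the meridional plane containing $\boldsymbol{\omega}$ and $\mathbf{r}(\phi)$, which fixes the camera and acts as $(u',v')\mapsto(-u',v')$ on pole-oriented image coordinates; this is a cleaner, more structural justification of the same computation. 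More importantly, you explicitly close the visibility gap: $\mathbf{p}$ visible at $+\phi'_\mathbf{p}$ does not imply $\mathbf{p}$ visible at $-\phi'_\mathbf{p}$, yet the indicator still equals $1$ because the first intersection of the camera ray through $(-u'_\mathbf{p},v'_\mathbf{p})$ with the (rotated) surface is itself a visible point projecting to the same pixel. The paper's proof of Lemma \ref{lemma:s_v} is silent on this point and only acknowledges the occlusion-occupancy argument informally later, in the discussion following Definition \ref{def:s_signal}, so your write-up is the more complete of the two.
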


\begin{proof}
    Let $\mathbf{p}(-\phi'_\mathbf{p})$ and $\mathbf{p}(\phi'_\mathbf{p})$ be the surface point $\mathbf{p}$ located at camera-relative longitudes $-\phi'_\mathbf{p}$ and $\phi'_\mathbf{p}$, respectively. By projecting such points onto the camera plane (Equation \ref{eq:u_Cp}), it is easy to show that

    \begin{equation}
    \label{eq:-uv=Cp,uv=Cp}
        [-\bar{u}'_\mathbf{p},\bar{v}'_\mathbf{p}]=C\bar{\mathbf{p}}(-\phi'_\mathbf{p}),\;\; [\bar{u}'_\mathbf{p},\bar{v}'_\mathbf{p}]=C\bar{\mathbf{p}}(\phi'_\mathbf{p})
    \end{equation}
    
    Combining Equation \ref{eq:-uv=Cp,uv=Cp} and Definition \ref{def:1_S}, the conclusion follows.
\end{proof}

Lemma \ref{lemma:s_v} articulates the root principle underlying our proposed pole-estimation technique: surface-point trajectories, when observed from the camera perspective, exhibit symmetry about the pole direction. In 3D space, a surface point $\mathbf{p}$ traces a circular arc centered around $\boldsymbol{\omega}$. The projection of this trajectory onto the image plane is typically observed as an elliptical arc\footnote{The projection of a circle onto a plane can theoretically yield a parabola or hyperbola\cite{christian2021tutorial}. However, these cases are more rarely encountered during scenarios of interested and are not discussed in this work.} whose minor axis aligns with $\boldsymbol{\omega}_\mathrm{proj}$. The image symmetry described in Lemma \ref{lemma:s_v} can be seen as a direct consequence of an ellipse's symmetry about its minor axis, which in our case coincides with $\boldsymbol{\omega}_\mathrm{proj}$, as illustrated in Figure \ref{fig:arc_symmetries}.

\begin{figure*}[t!]
    \centering
    \begin{subfigure}[t]{0.3\textwidth}
        \centering
        \includegraphics[width=\textwidth, keepaspectratio]{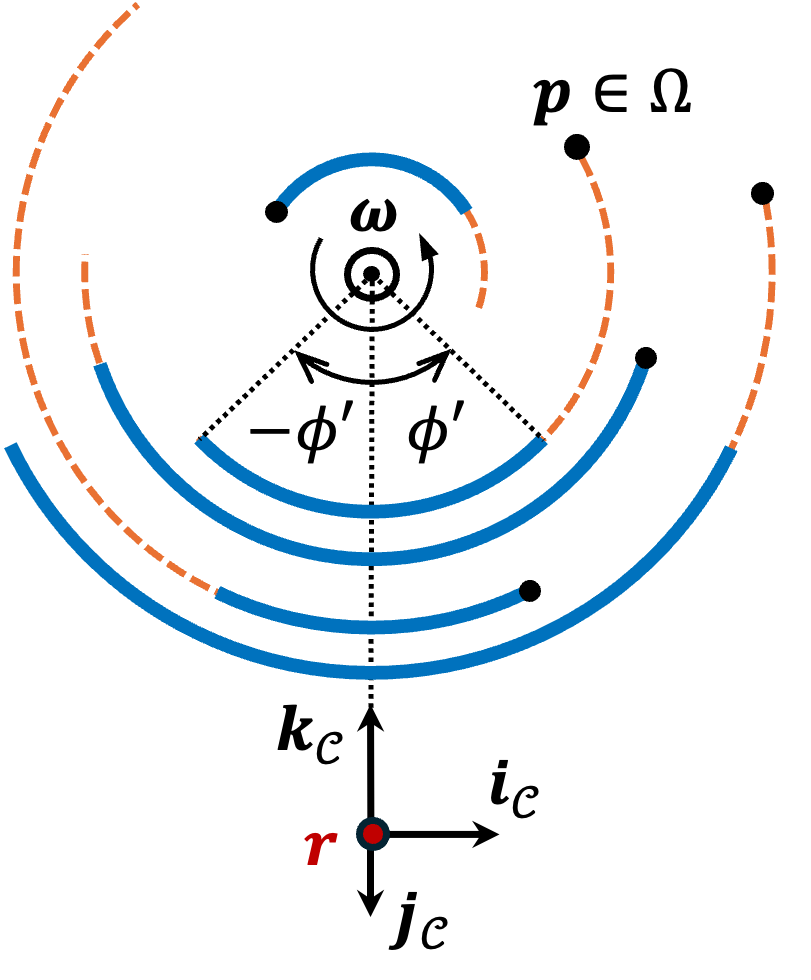}
        \caption{Top-down view. The pole direction $\boldsymbol{\omega}$ is oriented normal to the plane, pointing out of the page.}
    \end{subfigure}%
    ~ 
    \begin{subfigure}[t]{0.4\textwidth}
        \centering
        \includegraphics[width=\textwidth, keepaspectratio]{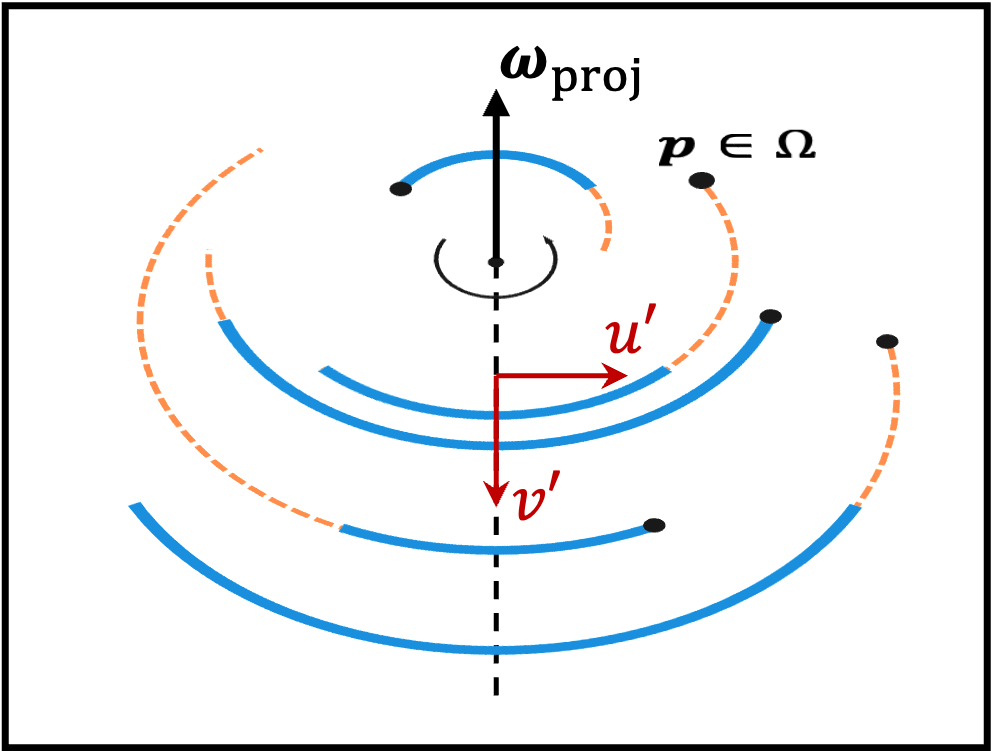}
        \caption{Camera view. Surface-point trajectories are observed as arcs of ellipse whose symmetric component exhibits reflective symmetry with respect to the projected pole $\boldsymbol{\omega}_\mathrm{proj}$.}
    \end{subfigure}
    \caption{Trajectory arcs traced by a subset of surface points $\mathbf{p}\in\Omega$ rotating about the pole $\boldsymbol{\omega}$, as observed from a hovering camera at position $\mathbf{r}$ with camera-frame axes $\mathbf{i}_\mathcal{C}, \mathbf{j}_\mathcal{C}, \mathbf{k}_\mathcal{C}$. Each arc begins at a distinct camera-relative longitudinal coordinate yet spans the same total longitude range. The symmetric (blue) and asymmetric (orange, dashed) components of each trajectory arc are shown as functions of the camera-relative longitude $\phi'$.}
    \label{fig:arc_symmetries}
\end{figure*}

\subsubsection{Silhouette-stack Model}
\label{sec:coadded_silh_model}

Section \ref{sec:silhouette_obs_model} describes the projective symmetry of surface-point trajectories. The proposed approach is based on processing silhouettes as a whole, rather than individual points. Hence, we now show that such a reflective symmetry is maintained when stacking silhouette observations across a camera-longitude range.

\begin{definition}
    \label{def:1_bar}
    Given an indicator function $\mathbf{1}_\mathcal{P}(\mathbf{q}; \zeta)$ (Definition \ref{def:indicator}) with parameter $\zeta \in \mathbb{R}$, we define the integral indicator function $\bar{\mathbf{1}}(\mathbf{q}; \zeta_0, \zeta_f):\mathcal{Q} \rightarrow \mathbb{R}$ in the interval $[\zeta_0,\zeta_f]$ as:

    \begin{equation}
        \bar{\mathbf{1}}(\mathbf{q}; \zeta_0, \zeta_f) = \int_{\zeta_0}^{\zeta_f} \mathbf{1}(\mathbf{q};\zeta) d\zeta.
    \end{equation}
    
\end{definition}

\begin{definition}
    \label{def:s_bar_v}
    Consider a set of silhouettes $\{\mathcal{S}(\phi) | \phi\in[\phi_0,\phi_f])\}$ and the corresponding indicator functions $\mathbf{1}_\mathcal{S}(u,v;\phi)$. The integral indicator function associated with the camera-longitude range $[\phi_0, \phi_f]$ is the function $\bar{\mathbf{1}}(u,v;\phi_0,\phi_f)$ given by

    \begin{equation}
    \label{eq:s_bar}
        \bar{\mathbf{1}}_\mathcal{S}(u,v;\phi_0,\phi_f) = \int_{\phi_0}^{\phi_f} \mathbf{1}_\mathcal{S}(u,v;\phi)\, d\phi.
    \end{equation}
\end{definition}

In this work, we use $\bar{\mathbf{1}}_\mathcal{S}(u,v;\phi_0,\phi_f)$ to represent the image intensity resulting from stacking silhouettes observed in $[\phi_0,\phi_f]$. Individual silhouettes are modeled by their indicator function (see Definition \ref{def:1_S}), and hence are represented as a Boolean occupancy mask in the image. The image intensity in $\bar{\mathbf{1}}_\mathcal{S}(u,v;\phi_0,\phi_f)$ then originates from stacking (i.e., co-adding) multiple Boolean masks $\mathbf{1}_\mathcal{S}(u,v;\phi)$. The decision to stack binary masks (i.e., silhouettes) rather than original images arises from the inherently geometric nature of surface-point trajectories, which binary representations capture more accurately. In contrast, variations in surface photometry carried by the original images can introduce effects which undermine this geometric symmetry.

Next, we demonstrate that the integral intensity function $\bar{\mathbf{1}}_\mathcal{S}$ exhibits reflective symmetry about the pole direction $\boldsymbol{\omega}_\mathrm{proj}$. Its symmetric component consists of surface-point trajectories that are mirrored about $\boldsymbol{\omega}_\mathrm{proj}$. As we will discuss below, the symmetric and asymmetric portions of a trajectory arc depend on the longitude of surface points relative to the observing camera.

\begin{definition}
    \label{def:s_signal}
     We define the symmetry-preserving silhouette region $\mathcal{S}_s(\phi) \subseteq \mathcal{S}(\phi)$ as:

    \begin{equation}
        \mathcal{S}_s(\phi) = \{ \mathbf{u}_\mathbf{p}(\phi'_\mathbf{p}),\mathbf{p}\in\Omega_v \; | \; \exists \; -\phi'_\mathbf{p},\; \phi\in [\phi_0,\phi_f]  \}
    \end{equation}

    Conversely, the symmetry-disrupting silhouette region $\mathcal{S}_n(\phi) \subseteq \mathcal{S}(\phi)$ is defined as:

    \begin{equation}
        \mathcal{S}_n(\phi) = \{ \mathbf{u}_\mathbf{p}(\phi'_\mathbf{p}) \; | \; \mathbf{u}_\mathbf{p}(\phi'_\mathbf{p}) \notin \mathcal{S}_s(\phi), \, \mathbf{p}\in\Omega_v \}
    \end{equation}

    where $\mathcal{S}_s(\phi)\cap \mathcal{S}_n(\phi)=\emptyset$. $\mathcal{S}_s$ and $\mathcal{S}_n$ are described by their respective indicator functions $\mathbf{1}_{\mathcal{S}_s}$ and $\mathbf{1}_{\mathcal{S}_n}$.
\end{definition}

It should be noted that the term ``symmetry" in Definition \ref{def:s_signal} refers to the \textit{evolution} of surface points across $\mathcal{S}(\phi)$, rather than the \textit{instantaneous} appearance of a single silhouette $\mathcal{S}(\phi)$. For any instantaneous silhouette, surface points belonging to the symmetry-preserving region $\mathcal{S}_s(\phi)$ are those observed from two symmetric camera-relative longitudes, $\phi'_\mathbf{p}$ and $-\phi'_\mathbf{p}$, as these result in symmetric silhouette regions with respect to $\boldsymbol{\omega}_\mathrm{proj}$. This effect can also be described in terms of individual surface-point trajectories, observed in the image plane as elliptical arcs. For any given surface point, its arc portion symmetric about $\boldsymbol{\omega}_\mathrm{proj}$ contributes to symmetry-preserving silhouettes, whereas the asymmetric portion contributes to symmetry-disrupting silhouette regions. Observe that Definition \ref{def:s_signal} is also valid when surface points are occluded by other points in any of the corresponding silhouettes, since occluding points lead to the same image coordinates, thereby preserving overall silhouette symmetry and lack thereof. The distinction between symmetry-preserving and symmetry-disrupting surface-point trajectories is illustrated in Figure \ref{fig:arc_symmetries}.

The symmetry-preserving silhouette $\mathcal{S}_s(\phi)$ provides information about the pole direction $\boldsymbol{\omega}_\mathrm{proj}$; hence, we treat $\mathcal{S}_s(\phi)$ as the signal extracted from the silhouette stack. Conversely, the symmetry-disrupting component $\mathcal{S}_n(\phi)$ decreases the overall level of symmetry about the pole, and hence is treated as the ``noise" component.

\begin{lemma}
\label{lemma:symm_+_asym_=1}
    \begin{equation}
        \mathbf{1}_{\mathcal{S}_s}(u,v;\phi) + \mathbf{1}_{\mathcal{S}_n}(u,v;\phi) = 1,\; \forall u,v,\phi
    \end{equation}
\end{lemma}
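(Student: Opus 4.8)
The plan is to obtain the identity directly from the partition structure of the silhouette, after fixing which containing set the indicator functions of Definition \ref{def:s_signal} are taken relative to. Per Definition \ref{def:indicator}, every indicator $\mathbf{1}_\mathcal{P}$ is defined relative to some ambient set $\mathcal{Q}\supseteq\mathcal{P}$ and returns $0$ only for points of $\mathcal{Q}$ lying outside $\mathcal{P}$. For $\mathbf{1}_{\mathcal{S}_s}$ and $\mathbf{1}_{\mathcal{S}_n}$ the natural ambient set is the silhouette $\mathcal{S}(\phi)$ itself, since both $\mathcal{S}_s(\phi)$ and $\mathcal{S}_n(\phi)$ are subsets of $\mathcal{S}(\phi)$ populated only by image points $\mathbf{u}_\mathbf{p}$ arising from visible surface points $\mathbf{p}\in\Omega_v$. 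Making this choice explicit is precisely what renders the right-hand side the constant $1$ rather than $\mathbf{1}_\mathcal{S}$.

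First I would establish that $\{\mathcal{S}_s(\phi),\mathcal{S}_n(\phi)\}$ is a partition of $\mathcal{S}(\phi)$. Disjointness, $\mathcal{S}_s(\phi)\cap\mathcal{S}_n(\phi)=\emptyset$, is asserted directly in Definition \ref{def:s_signal}. For the covering, I would observe that the collection of all silhouette points $\{\mathbf{u}_\mathbf{p}(\phi'_\mathbf{p}) : \mathbf{p}\in\Omega_v\}$ coincides with $\mathcal{S}(\phi)$ by Definition \ref{def:s_v}, while $\mathcal{S}_n(\phi)$ is defined in Definition \ref{def:s_signal} as exactly those silhouette points not contained in $\mathcal{S}_s(\phi)$. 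Hence $\mathcal{S}_n(\phi)=\mathcal{S}(\phi)\setminus\mathcal{S}_s(\phi)$, and together with disjointness this yields $\mathcal{S}_s(\phi)\cup\mathcal{S}_n(\phi)=\mathcal{S}(\phi)$.

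Next I would evaluate both indicators pointwise over the shared domain $\mathcal{Q}=\mathcal{S}(\phi)$. For any $\mathbf{u}=[u,v]^\top\in\mathcal{S}(\phi)$, the partition guarantees that $\mathbf{u}$ belongs to exactly one of $\mathcal{S}_s(\phi)$ or $\mathcal{S}_n(\phi)$; by Definition \ref{def:indicator} the associated indicator equals $1$ while the other equals $0$, so the two values sum to $1$. Because this holds at every point of the domain, the identity $\mathbf{1}_{\mathcal{S}_s}(u,v;\phi)+\mathbf{1}_{\mathcal{S}_n}(u,v;\phi)=1$ holds for all admissible $(u,v,\phi)$, which is the stated claim.

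The step requiring the most care is the domain bookkeeping at the outset: the right-hand side is the constant $1$ only because the two indicators are read relative to $\mathcal{S}(\phi)$, not relative to the full image plane $\mathbb{P}^2$. I would therefore flag this convention explicitly, noting that choosing the ambient set to be $\mathbb{P}^2$ would instead give $\mathbf{1}_{\mathcal{S}_s}+\mathbf{1}_{\mathcal{S}_n}=\mathbf{1}_\mathcal{S}$, which vanishes off the silhouette and is strictly weaker. Once the ambient set is pinned down, the conclusion is an immediate consequence of the disjoint-union decomposition and the two-valued nature of the indicator.
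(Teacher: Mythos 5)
Your proof is correct and takes essentially the same approach as the paper, whose proof is just the one-line assertion that the lemma ``follows directly from Definition \ref{def:s_signal} and Definition \ref{def:1_S}''; your partition argument ($\mathcal{S}_n(\phi)=\mathcal{S}(\phi)\setminus\mathcal{S}_s(\phi)$, so exactly one of the two indicators equals $1$ at each silhouette point) is the natural expansion of that assertion. Your ambient-set bookkeeping is a legitimate and useful clarification: as you observe, the right-hand side can only be the constant $1$ if the indicators are read on the domain $\mathcal{S}(\phi)$, whereas the form actually invoked later (inside the integral in Theorem \ref{th:s_bar}, which ranges over the whole image plane) is $\mathbf{1}_{\mathcal{S}_s}+\mathbf{1}_{\mathcal{S}_n}=\mathbf{1}_\mathcal{S}$, so the paper's statement is best understood under exactly the convention you make explicit.
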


\begin{proof}
    Lemma \ref{lemma:symm_+_asym_=1} follows directly from Definition \ref{def:s_signal} and Definition \ref{def:1_S}.
\end{proof}

\begin{lemma}
    \label{lemma:u_p_in_S_symm}
    If $\mathbf{u}_\mathbf{p}(\phi)\in\mathcal{S}_s(\phi)$, then:

    \begin{equation}
        \mathbf{1}_{\mathcal{S}_s}(-u'_\mathbf{p},v'_\mathbf{p}; -\phi'_\mathbf{p}) = \mathbf{1}_{\mathcal{S}_s}(u'_\mathbf{p},v'_\mathbf{p}; \phi'_\mathbf{p})=1.
    \end{equation}
\end{lemma}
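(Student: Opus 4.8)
The plan is to split the claimed pair of equalities into the trivial one, which merely restates the hypothesis, and the nontrivial one, which asserts that the \emph{mirrored} observation also falls in the symmetry-preserving region. For the right-hand equality, I would observe that the hypothesis $\mathbf{u}_\mathbf{p}(\phi)\in\mathcal{S}_s(\phi)$ means precisely that the projection of $\mathbf{p}$, observed at camera-relative longitude $\phi'_\mathbf{p}$ and carrying pole-oriented coordinates $(u'_\mathbf{p},v'_\mathbf{p})$, belongs to the symmetry-preserving silhouette. Applying Definition \ref{def:1_S} to $\mathcal{S}_s$ (in place of $\mathcal{S}$), this membership is by construction equivalent to $\mathbf{1}_{\mathcal{S}_s}(u'_\mathbf{p},v'_\mathbf{p};\phi'_\mathbf{p})=1$, so that equality holds immediately.

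For the nontrivial equality, I would first locate the mirrored observation and then promote its membership from $\mathcal{S}$ to $\mathcal{S}_s$. By Lemma \ref{lemma:s_v}, the same surface point $\mathbf{p}$ observed at the negated camera-relative longitude $-\phi'_\mathbf{p}$ projects to the pole-oriented coordinates $(-u'_\mathbf{p},v'_\mathbf{p})$ and lies in the full silhouette, i.e. $\mathbf{1}_\mathcal{S}(-u'_\mathbf{p},v'_\mathbf{p};-\phi'_\mathbf{p})=1$. To upgrade this to symmetry-preserving membership, I would return to Definition \ref{def:s_signal} and verify its existential condition for the mirrored observation: a point observed at camera-relative longitude $-\phi'_\mathbf{p}$ qualifies as symmetry-preserving provided its own mirror, at longitude $-(-\phi'_\mathbf{p})=\phi'_\mathbf{p}$, also occurs within the covered interval $[\phi_0,\phi_f]$. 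But that required mirror is exactly the original observation $\mathbf{u}_\mathbf{p}(\phi'_\mathbf{p})$, whose existence is guaranteed by the hypothesis (since $\mathcal{S}_s(\phi)\subseteq\mathcal{S}(\phi)$, it is in particular observed). Hence the mirrored observation satisfies the symmetry-preserving criterion, belongs to $\mathcal{S}_s$, and therefore $\mathbf{1}_{\mathcal{S}_s}(-u'_\mathbf{p},v'_\mathbf{p};-\phi'_\mathbf{p})=1$, closing the chain of equalities.

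The crux of the argument — and the one place where care is needed — is the observation that the symmetry-preserving membership criterion in Definition \ref{def:s_signal} is itself invariant under the exchange $\phi'_\mathbf{p}\leftrightarrow-\phi'_\mathbf{p}$: demanding that the mirror of $\phi'_\mathbf{p}$ lie within coverage is the same demand as requiring the mirror of $-\phi'_\mathbf{p}$ to lie within coverage, because the two observations form a single mirror pair. I expect this reflexivity of the pairing to carry the entire weight of the proof, whereas the coordinate bookkeeping $(u'_\mathbf{p},v'_\mathbf{p})\mapsto(-u'_\mathbf{p},v'_\mathbf{p})$ is inherited verbatim from Lemma \ref{lemma:s_v} and requires no fresh computation. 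A minor point I would flag explicitly is the benign notational shift between the lemma's $\mathbf{u}_\mathbf{p}(\phi)$ and the definition's $\mathbf{u}_\mathbf{p}(\phi'_\mathbf{p})$, which I would reconcile at the outset by fixing $\phi'_\mathbf{p}=\phi_\mathbf{p}-\phi$ so that all statements refer to the same camera-relative parameterization.
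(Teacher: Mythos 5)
Your proposal is correct and follows essentially the same route as the paper, whose proof is a one-line assertion that the lemma ``follows from Lemma \ref{lemma:s_v} and Definition \ref{def:s_signal}''---exactly the two ingredients you use. Your elaboration, in particular the observation that the mirror-pair criterion in Definition \ref{def:s_signal} is involutive under $\phi'_\mathbf{p}\leftrightarrow-\phi'_\mathbf{p}$, is a faithful (and more explicit) filling-in of the step the paper leaves to the reader.
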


\begin{proof}
    It can easily be shown that Lemma \ref{lemma:u_p_in_S_symm} follows from Lemma \ref{lemma:s_v} and Definition \ref{def:s_signal}.
\end{proof}

\begin{corollary}
    \label{cor:s_noise}
    If $[\phi_0,\phi_f]=[0,2\pi)$, then $\mathbf{1}_{\mathcal{S}_n}(u,v;\phi)=0,\,\forall u,v,\phi$.
    
\end{corollary}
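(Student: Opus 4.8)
The plan is to show that full longitudinal coverage forces every projected silhouette point to be symmetry-preserving, so that $\mathcal{S}_n(\phi)$ is empty and its indicator vanishes identically. Concretely, I would argue that $\mathcal{S}_s(\phi) = \mathcal{S}(\phi)$ for every $\phi$; since Definition \ref{def:s_signal} defines $\mathcal{S}_n(\phi)$ as the projections of visible points not lying in $\mathcal{S}_s(\phi)$, i.e. $\mathcal{S}_n(\phi) = \mathcal{S}(\phi) \setminus \mathcal{S}_s(\phi)$, this equality immediately yields $\mathcal{S}_n(\phi) = \emptyset$ and hence $\mathbf{1}_{\mathcal{S}_n}(u,v;\phi) = 0$ for all $u,v,\phi$.

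The central step is a coverage argument on camera-relative longitudes. Fix a camera longitude $\phi$ and take any visible point $\mathbf{p} \in \Omega_v$ with body longitude $\phi_\mathbf{p}$, observed at camera-relative longitude $\phi'_\mathbf{p} = \phi_\mathbf{p} - \phi$ (Definition \ref{def:phi'}). By Definition \ref{def:s_signal}, the projection $\mathbf{u}_\mathbf{p}(\phi'_\mathbf{p})$ is symmetry-preserving precisely when the mirror camera-relative longitude $-\phi'_\mathbf{p}$ is realized by some camera longitude lying in $[\phi_0,\phi_f]$. I would solve $\phi_\mathbf{p} - \phi_2 = -\phi'_\mathbf{p}$ for the required camera longitude, obtaining $\phi_2 = 2\phi_\mathbf{p} - \phi \pmod{2\pi}$. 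When $[\phi_0,\phi_f] = [0,2\pi)$ this value necessarily lies in the interval, so the mirror observation always exists and $\mathbf{u}_\mathbf{p}(\phi'_\mathbf{p}) \in \mathcal{S}_s(\phi)$. Since $\mathbf{p}$ and $\phi$ were arbitrary, $\mathcal{S}_s(\phi) = \mathcal{S}(\phi)$.

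I expect the main subtleties to be two bookkeeping points rather than any deep difficulty. First, the membership $\phi_2 \in [0,2\pi)$ must be read modulo $2\pi$, so I would state explicitly that full coverage means every longitude class is attained and that $\phi_2$ can be reduced into $[0,2\pi)$. Second, one must address the possibility that $\mathbf{p}$ is occluded at the mirror camera position $\phi_2$ (so that $\mathbf{p}\notin\Omega_v$ there); here I would invoke the remark following Definition \ref{def:s_signal}, namely that occluding points share the same image coordinates, so the mirror image location remains occupied and the symmetry-preserving property holds at that pixel regardless of which surface point generates it. With these two observations in place, the identity $\mathbf{1}_{\mathcal{S}_n}\equiv 0$ follows. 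As a consistency check, one may confirm via Lemma \ref{lemma:symm_+_asym_=1} that $\mathbf{1}_{\mathcal{S}_s} + \mathbf{1}_{\mathcal{S}_n} = 1$ then forces $\mathbf{1}_{\mathcal{S}_s} = \mathbf{1}_{\mathcal{S}}$, i.e. the entire silhouette becomes symmetry-preserving under full coverage.
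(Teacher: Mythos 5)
Your proof is correct and takes essentially the approach the paper intends: the paper states this corollary without an explicit proof, and its surrounding prose (a full revolution makes every surface-point trajectory fully symmetric, so the symmetry-disrupting region is empty) is exactly the coverage argument you formalize by exhibiting the mirror camera longitude $\phi_2 = 2\phi_\mathbf{p} - \phi \pmod{2\pi}$, which always lies in $[0,2\pi)$ under full coverage. Your treatment of the two subtleties---modular reduction of $\phi_2$ and possible occlusion at the mirror longitude---also matches the paper's own remark following Definition \ref{def:s_signal}.
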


\begin{corollary}
    \label{cor:s_noise_no_motion}
    If $\phi_0=\phi_f$, then $\mathbf{1}_{\mathcal{S}_s}(u,v;\phi)=0,\,\forall u,v,\phi$.
    
\end{corollary}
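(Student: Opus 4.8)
The plan is to argue directly from Definition \ref{def:s_signal}, using the membership characterization of Lemma \ref{lemma:u_p_in_S_symm}. That lemma says a surface point belongs to the symmetry-preserving region only if its image is realized at \emph{both} camera-relative longitudes $\phi'_\mathbf{p}$ and $-\phi'_\mathbf{p}$, with the corresponding camera longitudes lying inside $[\phi_0,\phi_f]$. The entire corollary then reduces to showing that, once $\phi_0=\phi_f$, this two-sided requirement can be met only on a negligible set.

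First I would convert the membership condition into constraints on the camera longitude. Since $\phi'_\mathbf{p}=\phi_\mathbf{p}-\phi$, realizing the relative longitude $\phi'_\mathbf{p}$ demands the camera at $\phi=\phi_\mathbf{p}-\phi'_\mathbf{p}$, while realizing the mirror $-\phi'_\mathbf{p}$ demands $\phi=\phi_\mathbf{p}+\phi'_\mathbf{p}$; both values must lie in the admissible interval. Specializing to $\phi_0=\phi_f$ collapses that interval to the single value $\phi_0$, so the two constraints become $\phi_\mathbf{p}-\phi'_\mathbf{p}=\phi_0$ and $\phi_\mathbf{p}+\phi'_\mathbf{p}=\phi_0$, which force $\phi'_\mathbf{p}=0$. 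Hence the only candidate symmetry-preserving points are those on the single meridian aligned with the camera, whose images land on the projected-pole axis $u'_\mathbf{p}=0$.

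I would then close the argument by observing that this locus is a one-dimensional, measure-zero subset of the image plane, which the continuous model of Section \ref{sec:perfect_silh} neglects alongside discretization effects. Therefore $\mathbf{1}_{\mathcal{S}_s}(u,v;\phi)=0$ for all relevant $u,v,\phi$, as claimed. As a sanity check, combining this with Lemma \ref{lemma:symm_+_asym_=1} shows that the whole single-longitude silhouette is then symmetry-disrupting, which is exactly the dual of Corollary \ref{cor:s_noise} (full coverage makes everything symmetry-preserving; zero coverage makes everything symmetry-disrupting).

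The main obstacle I anticipate is interpretive rather than computational: Definition \ref{def:s_signal} states the symmetry condition somewhat informally through the clause ``$\exists\,-\phi'_\mathbf{p}$,'' so the crux is to pin down that it requires the mirrored relative longitude to be realized by an admissible camera longitude, and then to dispatch the degenerate $\phi'_\mathbf{p}=0$ meridian cleanly so that the literal ``for all $u,v,\phi$'' statement is justified in the measure-zero-neglecting sense adopted throughout the section.
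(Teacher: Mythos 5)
Your proof is correct and takes essentially the same route as the paper, which states this corollary without an explicit proof as an immediate consequence of Definition \ref{def:s_signal}: your unwinding of the two-sided requirement---that realizing both $\phi'_\mathbf{p}$ and $-\phi'_\mathbf{p}$ demands camera longitudes $\phi_\mathbf{p}-\phi'_\mathbf{p}$ and $\phi_\mathbf{p}+\phi'_\mathbf{p}$ in an interval collapsed to a single point, forcing $\phi'_\mathbf{p}=0$---is precisely that implicit argument made explicit. Your handling of the degenerate $\phi'_\mathbf{p}=0$ meridian (whose image lies on the projected-pole line $u'=0$, a measure-zero locus) is in fact more careful than the paper, which glosses over this edge case entirely.
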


Corollary \ref{cor:s_noise} shows that, in the edge case where the camera performs a full revolution about the surface $\Omega$, the trajectory of each surface point is fully symmetric with respect to $\boldsymbol{\omega}_\mathrm{proj}$ and the symmetry-disrupting silhouette region is empty for all observations. Conversely, as stated in Corollary \ref{cor:s_noise_no_motion}, if a single silhouette is collected, no silhouette region provides information on the pole direction.

Building on these principles, we can now show that silhouette-stack images exhibit reflective symmetry with respect to the projected-pole direction.

\begin{theorem}
\label{th:s_bar}
    Consider a silhouette set $\{\mathcal{S}(\phi) | \phi\in[\phi_0,\phi_f])\}$ collected through the camera-longitude interval $[\phi_0,\phi_f]$. Then, the integral indicator function $\bar{\mathbf{1}}_\mathcal{S}(u,v;\phi_0,\phi_f)$ is such that

    \begin{equation}
        \bar{\mathbf{1}}_\mathcal{S}(u,v;\phi_0,\phi_f) = \bar{\mathbf{1}}_{\mathcal{S}_s}(u,v;\phi_0,\phi_f) + \bar{\mathbf{1}}_{\mathcal{S}_n}(u,v;\phi_0,\phi_f)
    \end{equation}

    and $\bar{\mathbf{1}}_{\mathcal{S}_s}$ exhibits reflective symmetry with respect to $\boldsymbol{\omega}_\mathrm{proj}$, i.e.:

    \begin{equation}
    \label{eq:bar_1_S_symm}
        \bar{\mathbf{1}}_{\mathcal{S}_s}(-u',v';\phi_0,\phi_f) = \bar{\mathbf{1}}_{\mathcal{S}_s}(u',v';\phi_0,\phi_f)
    \end{equation}

\end{theorem}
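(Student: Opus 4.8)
The plan is to handle the two assertions separately. The additive decomposition is immediate: starting from the integral definition of the stacked silhouette (Definition \ref{def:s_bar_v}) and integrating the pointwise partition of the silhouette indicator into its symmetry-preserving and symmetry-disrupting parts (Lemma \ref{lemma:symm_+_asym_=1}, together with the disjointness $\mathcal{S}_s(\phi)\cap\mathcal{S}_n(\phi)=\emptyset$ from Definition \ref{def:s_signal}) over $[\phi_0,\phi_f]$, linearity of the integral yields $\bar{\mathbf{1}}_\mathcal{S} = \bar{\mathbf{1}}_{\mathcal{S}_s} + \bar{\mathbf{1}}_{\mathcal{S}_n}$ directly. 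The substantive claim is the reflective symmetry of $\bar{\mathbf{1}}_{\mathcal{S}_s}$, which I would establish by lifting the pointwise symmetry of Lemma \ref{lemma:u_p_in_S_symm} to the integrated image through a reflection of the longitude variable.

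For the symmetry I would write
$$\bar{\mathbf{1}}_{\mathcal{S}_s}(-u',v';\phi_0,\phi_f) = \int_{\phi_0}^{\phi_f} \mathbf{1}_{\mathcal{S}_s}(-u',v';\phi)\,d\phi,$$
and read the integrand as the total longitude-measure over which the mirrored image location $(-u',v')$ is covered by a symmetry-preserving point. The engine of the argument is the pairing of Lemma \ref{lemma:u_p_in_S_symm}: whenever a visible point $\mathbf{p}$ covers $(-u',v')$ at camera-relative longitude $-\phi'_\mathbf{p}$, the same point covers $(u',v')$ at the mirrored camera-relative longitude $\phi'_\mathbf{p}$, and both observations belong to $\mathcal{S}_s$. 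Writing $\phi'_\mathbf{p}=\phi_\mathbf{p}-\phi$ (Definition \ref{def:phi'}), the camera longitude realizing the partner observation is $\psi = 2\phi_\mathbf{p}-\phi$, so I would substitute $\phi \mapsto \psi$. This reflection is an involution with $|d\psi/d\phi|=1$, and the membership clause ``$\exists\,-\phi'_\mathbf{p},\ \phi\in[\phi_0,\phi_f]$'' built into the definition of $\mathcal{S}_s$ (Definition \ref{def:s_signal}) guarantees the partner longitude still lies in $[\phi_0,\phi_f]$. Hence the substitution should carry the coverage set of $(-u',v')$ onto that of $(u',v')$ while preserving longitude measure, producing Equation \ref{eq:bar_1_S_symm}.

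I expect the main obstacle to be rigorously lifting this pointwise correspondence to the integrated (stacked) image, because the reflection $\phi\mapsto 2\phi_\mathbf{p}-\phi$ depends on the body longitude $\phi_\mathbf{p}$ of the point supplying the match, which varies with $\phi$ and with the image location; it is therefore not a single global substitution but a family of locally measure-preserving ones. Making this airtight requires showing that the union-over-points occupancy (the silhouette is an occupancy mask, not an additive sum) transforms consistently under these reflections, including the case where several surface points project to the same pixel. The occlusion remark following Definition \ref{def:s_signal} is precisely what I would invoke here: occluding points share image coordinates and hence preserve both the presence and the reflective structure of the covered region, so the point-dependent reflections can be assembled into a single measure-preserving bijection between the coverage sets of $(u',v')$ and $(-u',v')$, completing the argument.
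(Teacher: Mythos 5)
Your proposal is correct and follows essentially the same route as the paper: the additive decomposition via linearity of the integral, followed by a reflection of the longitude variable (the paper's substitution $\phi \rightarrow \tilde{\phi}$ with $d\tilde{\phi} = -d\phi$) that pairs the coverage of $(-u',v')$ with that of $(u',v')$ using Definition \ref{def:s_signal} and Lemma \ref{lemma:u_p_in_S_symm}. If anything, you are more careful than the paper, which carries out the substitution as though $\tilde{\phi}$ were a single global affine reflection of $\phi$, whereas you correctly identify $\tilde{\phi} = 2\phi_\mathbf{p} - \phi$ as point-dependent and note that the family of local reflections must be assembled into a measure-preserving bijection between the two coverage sets.
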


\begin{proof}
    From Definition \ref{def:s_signal} and Lemma \ref{lemma:symm_+_asym_=1}, Equation \ref{eq:s_bar} can be expanded as:

    \begin{align}
        \bar{\mathbf{1}}_\mathcal{S}(u,v;\phi_0,\phi_f) &= \int_{\phi_0}^{\phi_f} \left( \mathbf{1}_{\mathcal{S}_s}(u,v;\phi) + \mathbf{1}_{\mathcal{S}_n}(u,v;\phi) \right)\, d\phi \\
        &= \int_{\phi_0}^{\phi_f} \mathbf{1}_{\mathcal{S}_s}(u,v;\phi)\, d\phi + \int_{\phi_0}^{\phi_f} \mathbf{1}_{\mathcal{S}_n}(u,v;\phi) \, d\phi \\
        &= \bar{\mathbf{1}}_{\mathcal{S}_s}(u,v;\phi_0,\phi_f) + \bar{\mathbf{1}}_{\mathcal{S}_n}(u,v;\phi_0,\phi_f)
    \end{align}
    
    where $\bar{\mathbf{1}}_{\mathcal{S}_s}$ and $\bar{\mathbf{1}}_{\mathcal{S}_n}$ are the integral indicator functions (Definition \ref{def:1_bar}) corresponding to $\mathbf{1}_{\mathcal{S}_s}$ and $\mathbf{1}_{\mathcal{S}_n}$, respectively. 
    
    Next, we study the symmetry of $\bar{\mathbf{1}}_{\mathcal{S}_s}$. Evaluating the left-hand side of Equation \ref{eq:bar_1_S_symm}:

    \begin{equation}
    \label{eq:bar_1_S_symm_v2}
        \bar{\mathbf{1}}_{\mathcal{S}_s}(-u',v';\phi_0,\phi_f) = \int_{\phi_0}^{\phi_f} \mathbf{1}_{\mathcal{S}_s}(-u',v';\phi)\, d\phi
    \end{equation}

    Definition \ref{def:s_signal} implies that:

    \begin{equation}
        \forall \phi\in[\phi_0,\phi_f] \; \exists \tilde{\phi}\in [\phi_0,\phi_f] \; | \; \mathbf{1}_{\mathcal{S}_s}(-u',v';\phi) = \mathbf{1}_{\mathcal{S}_s}(u',v';\tilde{\phi}) 
    \end{equation}

    Equation \ref{eq:bar_1_S_symm_v2} can then be rewritten as:

    \begin{equation}
        \bar{\mathbf{1}}_{\mathcal{S}_s}(-u',v';\phi_0,\phi_f) = \int_{\phi_0}^{\phi_f} \mathbf{1}_{\mathcal{S}_s}(u',v';\tilde{\phi})\, d\phi
    \end{equation}

    Performing a variable substitution $\phi \rightarrow \tilde{\phi}$, it is easy to show that $d\tilde{\phi} = -d\phi$ and that the limits of integration become $\phi_f$ (lower) and $\phi_0$ (upper), respectively. Hence, we have:

    \begin{align}
        \bar{\mathbf{1}}_{\mathcal{S}_s}(-u',v';\phi_0,\phi_f) &= - \int_{\phi_f}^{\phi_0} \mathbf{1}_{\mathcal{S}_s}(u',v';\tilde{\phi})\, d\tilde{\phi}\\
        &= \int_{\phi_0}^{\phi_f} \mathbf{1}_{\mathcal{S}_s}(u',v';\tilde{\phi})\, d\tilde{\phi}\\
        &= \bar{\mathbf{1}}_{\mathcal{S}_s}(u',v';\phi_0,\phi_f)
    \end{align}

\end{proof}

Theorem \ref{th:s_bar} describes the fundamental principle leveraged by the proposed PoleStack algorithm: a silhouette-stack image exhibits some level of reflective symmetry with respect to the pole projection $\boldsymbol{\omega}_\mathrm{proj}$, and hence can be used for pole estimation. When the surface is imaged through a full rotation ($\phi \in [0,2\pi)$) and error sources are neglected, the silhouette-stack model is fully symmetric with respect to the pole projection, $\boldsymbol{\omega}_\mathrm{proj}$. However, silhouette stacks collected through partial camera-longitude arcs ($0 < \phi_0 < \phi_f < 2\pi$) also exhibit some level of symmetry which, intuitively, increases with the camera-longitude span. In Section \ref{sec:results}, we empirically show that a full camera revolution around the surface is not necessary for effective pole estimation in practice. Figures \ref{fig:silh_stack_360deg} and \ref{fig:silh_stack_90deg} show examples of silhouette-stack images, and the corresponding symmetric and asymmetric components, for an irregular body observed across a full rotation and a partial rotation, respectively.

\begin{figure*}[t!]
    \centering
    \begin{subfigure}[t]{0.33\textwidth}
        \centering
        \includegraphics[width=\textwidth, keepaspectratio]{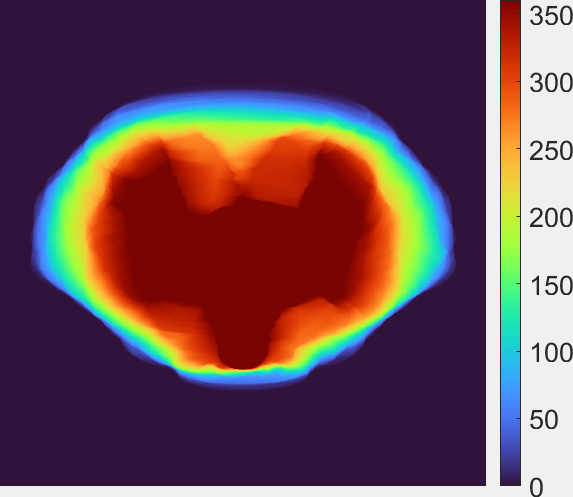}
        \caption{Silhouette-stack Image ($\bar{\mathbf{1}}_\mathcal{S}$)}
    \end{subfigure}%
    ~
    \begin{subfigure}[t]{0.33\textwidth}
        \centering
        \includegraphics[width=\textwidth, keepaspectratio]{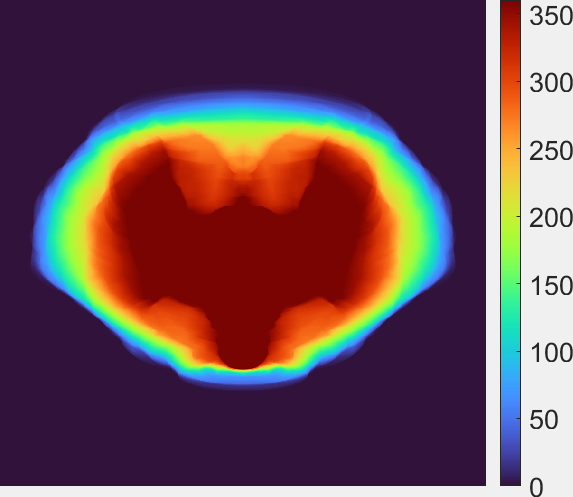}
        \caption{Symmetric Component ($\bar{\mathbf{1}}_{\mathcal{S}_s}$)}
    \end{subfigure}%
    ~
    \begin{subfigure}[t]{0.33\textwidth}
        \centering
        \includegraphics[width=\textwidth, keepaspectratio]{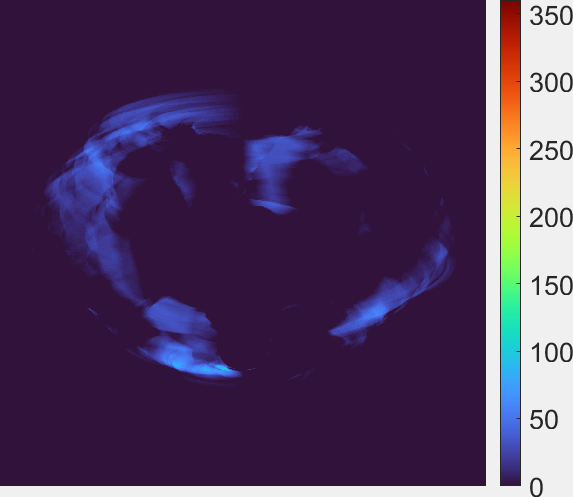}
        \caption{Asymmetric Component ($\bar{\mathbf{1}}_{\mathcal{S}_n}$)}
    \end{subfigure}
    
    \begin{subfigure}[t]{0.33\textwidth}
        \centering
        \includegraphics[width=\textwidth, keepaspectratio]{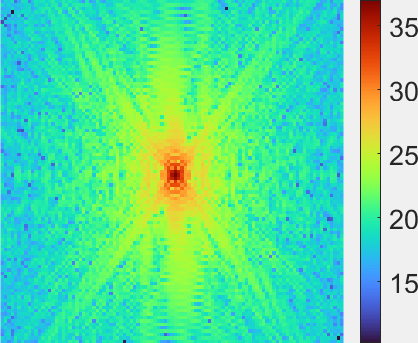}
        \caption{Log-power amplitude spectrum ($\mathrm{log}(1+\bar{\mathbf{1}}^2_\mathcal{S})$, see Section \ref{sec:fourier}, \ref{sec:algorithm_overview})}
    \end{subfigure}%
    ~
    \begin{subfigure}[t]{0.33\textwidth}
        \centering
        \includegraphics[width=\textwidth, keepaspectratio]{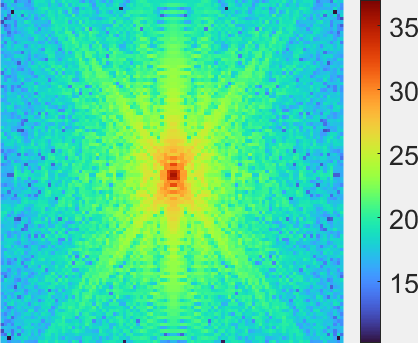}
        \caption{Symmetric Component}
    \end{subfigure}%
    ~
    \begin{subfigure}[t]{0.33\textwidth}
        \centering
        \includegraphics[width=\textwidth, keepaspectratio]{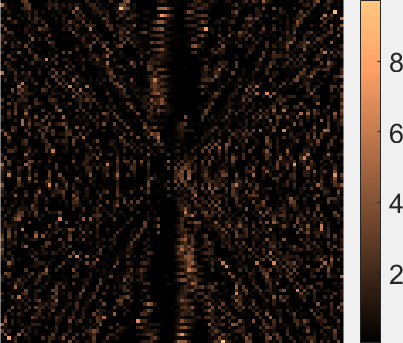}
        \caption{Asymmetric Component}
    \end{subfigure}
    \caption{Silhouette-stack image of comet 67P-C/G observed across a full rotation ($\phi=[0,2\pi]$), with a $1^\circ$ longitude interval between consecutive images, in the spatial domain (top) and frequency domain (bottom). Note that the image lacks perfect symmetry due to using a finite number of observations.}
    \label{fig:silh_stack_360deg}
\end{figure*}

\begin{figure*}[t!]
    \centering
    \begin{subfigure}[t]{0.33\textwidth}
        \centering
        \includegraphics[width=\textwidth, keepaspectratio]{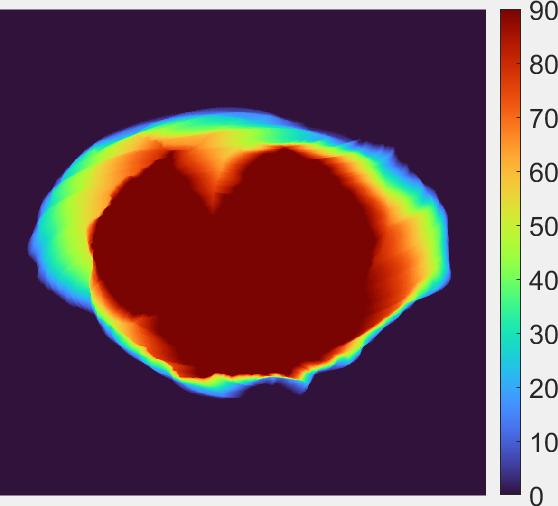}
        \caption{Silhouette-stack Image ($\bar{\mathbf{1}}_\mathcal{S}$)}
    \end{subfigure}%
    ~
    \begin{subfigure}[t]{0.33\textwidth}
        \centering
        \includegraphics[width=\textwidth, keepaspectratio]{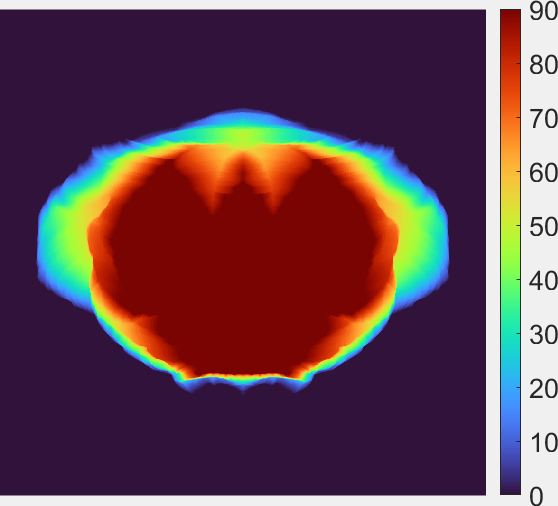}
        \caption{Symmetric Component ($\bar{\mathbf{1}}_{\mathcal{S}_s}$)}
    \end{subfigure}%
    ~
    \begin{subfigure}[t]{0.33\textwidth}
        \centering
        \includegraphics[width=\textwidth, keepaspectratio]{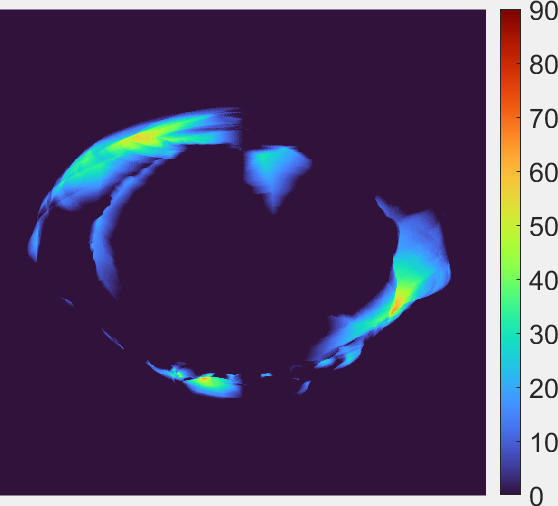}
        \caption{Asymmetric Component ($\bar{\mathbf{1}}_{\mathcal{S}_n}$)}
    \end{subfigure}
    
    \begin{subfigure}[t]{0.33\textwidth}
        \centering
        \includegraphics[width=\textwidth, keepaspectratio]{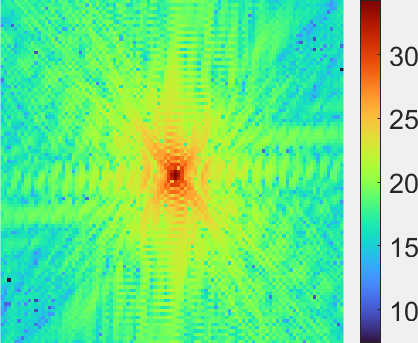}
        \caption{Log-power amplitude spectrum ($\mathrm{log}(1+\bar{\mathbf{1}}^2_\mathcal{S})$, see Sections \ref{sec:fourier}, \ref{sec:algorithm_overview})}
    \end{subfigure}%
    ~
    \begin{subfigure}[t]{0.33\textwidth}
        \centering
        \includegraphics[width=\textwidth, keepaspectratio]{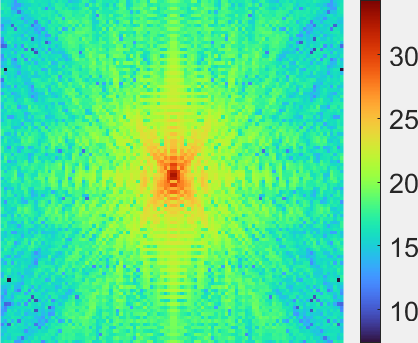}
        \caption{Symmetric Component}
    \end{subfigure}%
    ~
    \begin{subfigure}[t]{0.33\textwidth}
        \centering
        \includegraphics[width=\textwidth, keepaspectratio]{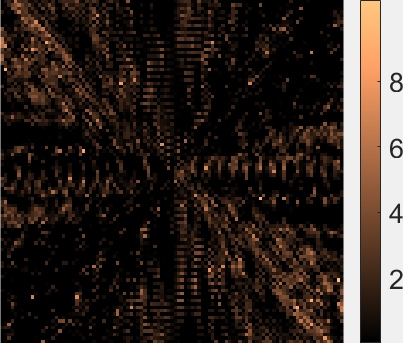}
        \caption{Asymmetric Component}
    \end{subfigure}
    \caption{Silhouette-stack image of comet 67P-C/G observed across a quarter of rotation ($\phi=[0,\pi/2]$), with a $1^\circ$ longitude interval between consecutive images, in the spatial domain (top) and frequency domain (bottom).}
    \label{fig:silh_stack_90deg}
\end{figure*}


Thus far, we developed a theoretical framework based on a perfect-silhouette model (Definition \ref{def:s_v}), neglecting error sources. Next, we relax this assumption and study common effects corrupting the symmetry of silhouette-stack images.

\subsection{Effect of Imaging Error Sources}
\label{sec:imaging_error_sources}


In practice, silhouette observations and the resulting silhouette stacks are corrupted by imaging error sources. In this work, we address two primary effects: surface shadowing and image-registration errors. Using a theoretical framework, we discuss the robustness of the proposed pole-estimation approach to these effects for practical applications.

\subsubsection{Surface Shadowing Effects}

The surface's irregular shape and topography create shadowing effects, where some regions cast shadows onto others, altering its observed appearance. Shadow patterns depend on the geometric relationship between camera, sun, and surface, and cannot be accurately predicted without a high-resolution surface model. The extent of shadowing typically increases with the sun phase angle\footnote{The sun phase angle is defined as the angle between the target-to-sun direction and the target-to-observer direction, as per Definition \ref{lemma:O_phi}}. When shadows extend to the limb, they can modify the silhouette shape, reducing the overall symmetry of silhouette stacks relative to the pole direction. In this section, we propose a model to account for surface shadowing and examine its impact on stacked silhouettes.





\begin{definition}
   \label{def:Omega_l}
    Let $\Omega\subset \mathbb{R}^3$ be a surface, $\mathbf{p}\in\Omega$ a surface point, and $\mathbf{r}_\mathrm{Sun}(\phi)$ the Sun position. We define the illuminated surface $\Omega_l(\phi) \subset \Omega$ as:

    \begin{equation}
        \Omega_l(\phi) = \{ \mathbf{p}\in\Omega \; | \; \forall t \in (0,1), \mathbf{r}_\mathrm{Sun}(\phi)+t(\mathbf{p}-\mathbf{r}_\mathrm{Sun}(\phi))\notin \Omega \}
    \end{equation}
\end{definition}




\begin{definition}
    \label{def:Omega_o}
    We define the observable surface $\Omega_o(\phi) \subset \Omega$ as the surface region which is both visible from the hovering camera $\mathbf{r}(\phi)$ and illuminated, i.e.:

    \begin{equation}
        \Omega_o(\phi) = \Omega_v(\phi) \cap \Omega_l(\phi)
    \end{equation}

\end{definition}

\begin{definition}
    \label{def:s_o}
    Given the observable surface $\Omega_o \subset \Omega$ and a hovering-camera view $\mathbf{r}(\phi)$, we define the observable silhouette $\mathcal{O}(\phi) \subset \mathbb{P}^2$ as:

    \begin{equation}
        \mathcal{O}(\phi) = \{ \mathbf{u} \in \mathbb{P}^2 \; | \; \bar{\mathbf{u}}=C(\phi)\bar{\mathbf{p}},\,\mathbf{p}\in\Omega_o(\phi) \}.
    \end{equation}

    $\mathcal{O}(\phi)$ is described by the indicator function $\mathbf{1}_\mathcal{O}(u,v;\phi)$.
\end{definition}

    






\begin{lemma}
\label{lemma:O_phi}
    Let $g=\mathrm{acos}\left( \dfrac{\mathbf{r}^\top_\mathrm{Sun}\mathbf{r}(\phi)}{\|\mathbf{r}_\mathrm{Sun}\|\|\mathbf{r}(\phi)\|} \right)\in [0,\pi]$ be the sun phase angle. Under the assumption of collimated incident sunlight, i.e., that the sun is infinitely far from the object, we have

    \begin{equation}
    \mathcal{O}(\phi) \begin{cases}
    =\mathcal{S}(\phi) & \text{if } \;\; g = 0 \\
    \subseteq \mathcal{S}(\phi)  & \text{otherwise }
  \end{cases}
    \end{equation}
\end{lemma}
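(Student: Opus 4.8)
The plan is to prove the two branches by first establishing the inclusion $\mathcal{O}(\phi)\subseteq\mathcal{S}(\phi)$ in full generality, and then upgrading it to an equality precisely when $g=0$. Observe that the ``otherwise'' branch is the easy half and in fact holds for every phase angle, so the only genuine content is the reverse inclusion $\mathcal{S}(\phi)\subseteq\mathcal{O}(\phi)$ at zero phase.

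First I would dispatch the universal inclusion. By Definition \ref{def:Omega_o} the observable surface is $\Omega_o(\phi)=\Omega_v(\phi)\cap\Omega_l(\phi)\subseteq\Omega_v(\phi)$. Since the observable and perfect silhouettes (Definitions \ref{def:s_o} and \ref{def:s_v}) are the images of $\Omega_o(\phi)$ and $\Omega_v(\phi)$ under the same projection $\bar{\mathbf{u}}=C(\phi)\bar{\mathbf{p}}$, and the image of a subset is a subset of the image, it follows that $\mathcal{O}(\phi)\subseteq\mathcal{S}(\phi)$ regardless of $g$. This already settles the second case and gives one direction of the first.

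It remains to show $\mathcal{S}(\phi)\subseteq\mathcal{O}(\phi)$ when $g=0$. I would argue pixel by pixel: fix $\mathbf{u}\in\mathcal{S}(\phi)$, so some visible point projects to $\mathbf{u}$, and let $\mathbf{p}^\star\in\Omega_v(\phi)$ be the one closest to the camera along that viewing ray. By Definition \ref{def:Omega_v} the open segment from $\mathbf{r}(\phi)$ to $\mathbf{p}^\star$ meets no surface, and because $\mathbf{p}^\star$ is the frontmost intersection the entire ray from $\mathbf{p}^\star$ back toward the camera is clear of $\Omega$. At $g=0$ the collimated-light assumption forces $\hat{\mathbf{r}}_\mathrm{Sun}=\hat{\mathbf{r}}(\phi)$, so the Sun-to-$\mathbf{p}^\star$ ray used in Definition \ref{def:Omega_l} coincides with that same cleared viewing ray; hence $\mathbf{p}^\star\in\Omega_l(\phi)$ and therefore $\mathbf{p}^\star\in\Omega_o(\phi)$, giving $\mathbf{u}\in\mathcal{O}(\phi)$. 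Combining the two inclusions yields $\mathcal{O}(\phi)=\mathcal{S}(\phi)$ at zero phase.

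The step I expect to be delicate is the claimed coincidence of the viewing and illumination rays at $\mathbf{p}^\star$. With the Sun at infinity but the camera at a finite distance, the direction from $\mathbf{p}^\star$ to the camera is $\mathbf{r}(\phi)-\mathbf{p}^\star$, which is parallel to $\hat{\mathbf{r}}_\mathrm{Sun}=\hat{\mathbf{r}}(\phi)$ only for on-axis points; the two rays therefore agree exactly only in the long-range, near-orthographic regime already adopted for the silhouette model in Section \ref{sec:hovering_camera_model}. I would make this dependence explicit and argue the equality in that limit where perspective effects are negligible, so that ``visible along $\hat{\mathbf{r}}(\phi)$'' and ``illuminated from $\hat{\mathbf{r}}_\mathrm{Sun}$'' reduce to the same clear-ray condition.
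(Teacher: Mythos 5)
Your proof is correct, and it is substantially more rigorous than the paper's, which offers no argument at all: the paper's proof reads, in full, ``It is easy to show that Lemma \ref{lemma:O_phi} follows from the geometry between the surface, sun, and observer.'' Your decomposition is exactly the right structure: the inclusion $\mathcal{O}(\phi)\subseteq\mathcal{S}(\phi)$ holds for every $g$ simply because $\Omega_o(\phi)=\Omega_v(\phi)\cap\Omega_l(\phi)\subseteq\Omega_v(\phi)$ and projection preserves inclusions, so the only real content is $\mathcal{S}(\phi)\subseteq\mathcal{O}(\phi)$ at $g=0$. More importantly, the delicate point you flag is genuine and is exactly what the paper glosses over: with the camera at finite distance (Definition \ref{def:Omega_v} tests the segment to $\mathbf{r}(\phi)$) and the sun collimated (illumination rays all parallel to $\hat{\mathbf{r}}_\mathrm{Sun}=\hat{\mathbf{r}}(\phi)$), the viewing segment of an off-axis point and its illumination ray are not parallel, so the frontmost visible point on a viewing ray need not be lit---e.g., a point near a crater rim at the limb can be seen along the inward-tilted viewing ray yet shadowed along the exactly radial sun ray, and since no other point of that viewing ray is visible, its pixel lies in $\mathcal{S}(\phi)\setminus\mathcal{O}(\phi)$. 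Hence the $g=0$ case is an exact equality only in the long-range/orthographic regime, where visibility and illumination reduce to the same clear-ray condition (and the frontmost-point argument is not even needed), or alternatively under a point-source sun co-located with the camera, in which case $\Omega_l(\phi)=\Omega_v(\phi)$ identically by the definitions. Your choice to state that limiting regime as an explicit hypothesis is the right fix, and it is consistent with how the paper actually operates: Section \ref{sec:hovering_camera_model} assumes perspective effects are negligible for long-range observations, and the experiments in Section \ref{sec:results} use an orthographic camera.
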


\begin{proof}
    It is easy to show that Lemma \ref{lemma:O_phi} follows from the geometry between the surface, sun, and observer. 
\end{proof}

The region $\mathcal{O}(\phi)$ represents the silhouette region observed in imagery when surface-shadowing effects are present, and hence represents silhouette observations extracted from images of space objects in practice.

\begin{lemma}
\label{lemma:O_symm}
    Let $\mathcal{O}_s(\phi)$ and $\mathcal{O}_n(\phi)$ be the symmetry-preserving and symmetry-disrupting silhouette portions of $\mathcal{O}(\phi)$, according to Definition \ref{def:s_signal}. Further, let $\bar{\mathbf{1}}_\mathcal{O}(u,v;\phi_0,\phi_f)$ be the integral indicator function associated with $\mathbf{1}_\mathcal{O}(u,v;\phi)$, for $\phi \in [\phi_0,\phi_f]$ (Definition \ref{def:1_bar}). Then, we have:
    
    \begin{equation}
    \label{eq:1_O_bar_symm_+_asym}
        \bar{\mathbf{1}}_\mathcal{O}(u,v;\phi_0,\phi_f) = \bar{\mathbf{1}}_{\mathcal{O}_s}(u,v;\phi_0,\phi_f) + \bar{\mathbf{1}}_{\mathcal{O}_n}(u,v;\phi_0,\phi_f)
    \end{equation}

    where $\bar{\mathbf{1}}_{\mathcal{O}_s}$ and $\bar{\mathbf{1}}_{\mathcal{O}_n}$ are the symmetric and asymmetric components of $\bar{\mathbf{1}}_\mathcal{O}$, respectively, as described in Theorem \ref{th:s_bar}. Furthermore,

    \begin{equation}
    \label{eq:1_O_symm_bar_leq_1_S_symm_bar}
        \bar{\mathbf{1}}_{\mathcal{O}_s}(u,v;\phi_0,\phi_f) \leq \bar{\mathbf{1}}_{\mathcal{S}_s}(u,v;\phi_0,\phi_f),\; u,v \in \mathbb{P}^2,\, 0 \leq \phi_0 < \phi_f < 2\pi
    \end{equation}
\end{lemma}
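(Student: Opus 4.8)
The plan is to prove the two assertions separately: the decomposition in Equation \ref{eq:1_O_bar_symm_+_asym} by mirroring the argument of Theorem \ref{th:s_bar}, and the inequality in Equation \ref{eq:1_O_symm_bar_leq_1_S_symm_bar} by exploiting the set containment $\Omega_o \subseteq \Omega_v$ that follows from Definition \ref{def:Omega_o}.

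For the decomposition, I would first observe that Definition \ref{def:s_signal}, applied verbatim with $\mathcal{O}$ in place of $\mathcal{S}$, partitions the observable silhouette into the disjoint regions $\mathcal{O}_s(\phi)$ and $\mathcal{O}_n(\phi)$ whose union is $\mathcal{O}(\phi)$. Their indicator functions therefore satisfy $\mathbf{1}_{\mathcal{O}_s}(u,v;\phi) + \mathbf{1}_{\mathcal{O}_n}(u,v;\phi) = \mathbf{1}_\mathcal{O}(u,v;\phi)$ pointwise, which is the exact analog of Lemma \ref{lemma:symm_+_asym_=1}. Substituting this identity into the integral indicator function $\bar{\mathbf{1}}_\mathcal{O}$ (Definition \ref{def:1_bar}) and splitting the integral over $[\phi_0,\phi_f]$ into its two additive contributions yields Equation \ref{eq:1_O_bar_symm_+_asym}. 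This step is routine and follows the same algebra as the proof of Theorem \ref{th:s_bar}.

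The substantive part is the inequality in Equation \ref{eq:1_O_symm_bar_leq_1_S_symm_bar}, which I would reduce to the pointwise bound $\mathbf{1}_{\mathcal{O}_s}(u,v;\phi) \leq \mathbf{1}_{\mathcal{S}_s}(u,v;\phi)$ at each fixed $\phi$, followed by integration. To establish the pointwise bound, suppose $(u,v) \in \mathcal{O}_s(\phi)$. By Definition \ref{def:s_signal} there is an observable surface point $\mathbf{p}\in\Omega_o(\phi)$ projecting to $(u,v)$ at camera-relative longitude $\phi'_\mathbf{p}$ whose mirror observation at $-\phi'_\mathbf{p}$ is also observable within $[\phi_0,\phi_f]$. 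Since $\Omega_o(\phi) = \Omega_v(\phi) \cap \Omega_l(\phi) \subseteq \Omega_v(\phi)$, the same point $\mathbf{p}$ is visible and its mirror counterpart is visible as well; hence $(u,v)$ satisfies the defining condition of $\mathcal{S}_s(\phi)$, so $(u,v) \in \mathcal{S}_s(\phi)$. Thus $\mathbf{1}_{\mathcal{O}_s}(u,v;\phi)=1$ forces $\mathbf{1}_{\mathcal{S}_s}(u,v;\phi)=1$, which is the claimed pointwise inequality since both indicators are Boolean. Integrating over $\phi\in[\phi_0,\phi_f]$ and invoking monotonicity of the integral then gives Equation \ref{eq:1_O_symm_bar_leq_1_S_symm_bar}.

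The main obstacle I anticipate is formalizing the membership argument cleanly when several surface points project to the same image coordinate through occlusion, or when a single pixel receives both symmetry-preserving and symmetry-disrupting contributions. The resolution is that $\mathcal{S}_s$ and $\mathcal{S}_n$ are defined as disjoint sets in Definition \ref{def:s_signal}, so membership of an image point in $\mathcal{S}_s$ depends only on the \emph{existence} of a single qualifying visible surface point with a visible mirror; because any observable point is a fortiori visible, this existence transfers from the $\mathcal{O}$-setting to the $\mathcal{S}$-setting, and the pointwise inequality survives irrespective of occlusion multiplicity. I would close with the remark that the restriction $0 \leq \phi_0 < \phi_f < 2\pi$ merely fixes the partial-arc regime of interest; the same reasoning applies for any common longitude range, since both symmetry-preserving regions are defined over the identical interval $[\phi_0,\phi_f]$.
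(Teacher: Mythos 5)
Your proof is correct and follows essentially the same route as the paper: the decomposition in Equation \ref{eq:1_O_bar_symm_+_asym} is obtained by repeating the argument of Theorem \ref{th:s_bar}, and the inequality rests on observability implying visibility ($\Omega_o(\phi) = \Omega_v(\phi)\cap\Omega_l(\phi) \subseteq \Omega_v(\phi)$, which is the content of Lemma \ref{lemma:O_phi}). The paper's own proof is terser---it simply cites Lemma \ref{lemma:O_phi} to conclude $\mathbf{1}_\mathcal{O}(u,v;\phi) \leq \mathbf{1}_\mathcal{S}(u,v;\phi)$---and your pointwise argument that membership in $\mathcal{O}_s(\phi)$ transfers to $\mathcal{S}_s(\phi)$ (because a point whose mirror observation is observable a fortiori has a visible mirror observation) makes explicit the step the paper leaves implicit.
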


\begin{proof}
    The proof for Equation \ref{eq:1_O_bar_symm_+_asym} follows an analogous approach to that presented for Theorem \ref{th:s_bar}. Equation \ref{eq:1_O_symm_bar_leq_1_S_symm_bar} follows from Lemma \ref{lemma:O_phi}, which implies

    \begin{equation}
        \mathbf{1}_\mathcal{O}(u,v;\phi) \leq \mathbf{1}_\mathcal{S}(u,v;\phi),\, \phi \in [0,2\pi).
    \end{equation}

\end{proof}

Equation \ref{eq:1_O_symm_bar_leq_1_S_symm_bar} mathematically confirms the intuitive understanding that surface shadows reduce silhouette stack symmetry relative to the pole. This effect suggests that silhouette observations should ideally be acquired at low sun phase angles to minimize shadowing. However, our empirical results (Section \ref{sec:results}) demonstrate that the proposed pole-estimation method remains robust even at high sun phases.

\subsubsection{Image Registration Errors}

The silhouette-stack model presented in Section \ref{sec:coadded_silh_model} assumes that the inertial camera attitude is fixed and pointing toward the object's center of mass (as assumed in Section \ref{sec:assumptions}). In practice, however, the center-of-mass image location is typically unknown a priori and can vary across observations due to camera-attitude dynamics. In these cases, an image-registration procedure is necessary to effectively align silhouette observations and produce the silhouette stack. In this section, we discuss the effect of image-alignment errors on silhouette-stack images, showing that silhouette stacks are inherently robust to common silhouette-alignment errors such as those originating from using the object's center of brightness for registration.

In general, the motion of the object's center of mass across the image plane results from changes in camera position and pointing relative to the center-of-mass location. For a hovering camera (as we assume in Section \ref{sec:assumptions}), the inertial camera position remains fixed, meaning the apparent center-of-mass motion stems solely from camera-attitude changes. When combining silhouette observations taken from different camera attitudes, these images require proper registration to generate an accurate silhouette-stack image.

The registration process can be accomplished through two distinct approaches. The first method transforms silhouettes to a common camera reference frame, assuming that the inertial attitude for each observation is known. The second method aligns silhouette \textit{centroids} by applying translation corrections in image space. In this context, a centroid represents a geometric quantity derived from image data that serves as a proxy for the object's center of mass when direct measurement is not possible. The brightness moment algorithm offers a standard approach for centroid extraction, calculating a weighted average of pixel locations based on their intensities within a specified image region\cite{owen2011methods}.

The choice between registration methods depends on camera attitude knowledge. The first approach requires accurate camera reference frame data, ideally at pixel or sub-pixel precision, making it optimal when reliable inertial attitude estimates are available. The second method, centroid-based alignment, proves more suitable when attitude uncertainties are significant. While this technique can mitigate registration errors along image-plane directions, it cannot correct for camera-twist attitude errors through centroid alignment alone. An important consideration is that the centroid location does not perfectly correspond to the actual center of mass, and their relative positions can vary between images based on specific pixel intensities. This misalignment introduces potential registration errors. Despite these limitations, our results in Section \ref{sec:results} demonstrate that centroid-based alignment remains an effective approach in practical applications.

In this work, silhouette-registration errors are modeled as an additive term, as formalized in Definition \ref{def:1_O_bar_tilde}. When registration is implemented effectively, these errors typically have less impact than the asymmetries introduced by partial surface coverage and surface shadowing discussed earlier. Small silhouette-registration errors typically affect the peripheral portion of the stacked silhouette, leaving most of the information and related symmetry intact. This effect is illustrated in Figure \ref{fig:silh_stack_360deg_centrd}.

\begin{definition}
\label{def:1_O_bar_tilde}
    Given the integral silhouette indicator function $\bar{\mathbf{1}}_\mathcal{O}(u,v;\phi_0,\phi_f)$, we define the integral indicator function of the aligned silhouettes, $\tilde{\bar{\mathbf{1}}}_\mathcal{O}$, as:

    \begin{equation}
        \tilde{\bar{\mathbf{1}}}_\mathcal{O}(u,v;\phi_0,\phi_f) = \bar{\mathbf{1}}_\mathcal{O}(u,v;\phi_0,\phi_f) + \nu_\mathrm{reg}(u,v;\phi_0,\phi_f)
    \end{equation}

    where $\nu_\mathrm{reg}: \mathbb{P}^2 \rightarrow \mathbb{R}$ is the function describing the image intensity originating from silhouette-registration errors.
    
\end{definition}

\begin{figure*}[t!]
    \centering
    \begin{subfigure}[t]{0.33\textwidth}
        \centering
        \includegraphics[width=\textwidth, keepaspectratio]{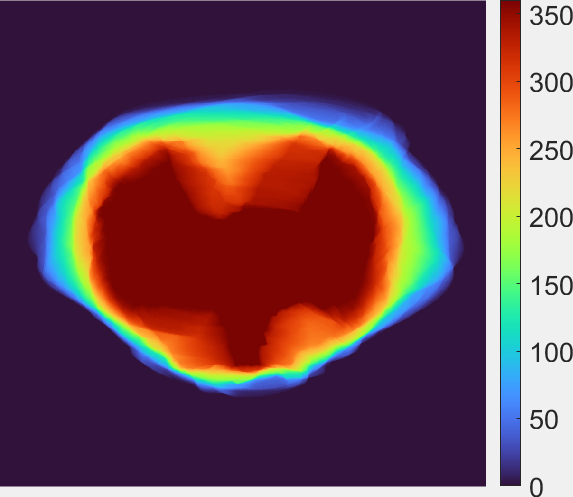}
        \caption{Silhouette-stack Image ($\tilde{\bar{\mathbf{1}}}_\mathcal{S}$)}
    \end{subfigure}%
    ~
    \begin{subfigure}[t]{0.33\textwidth}
        \centering
        \includegraphics[width=\textwidth, keepaspectratio]{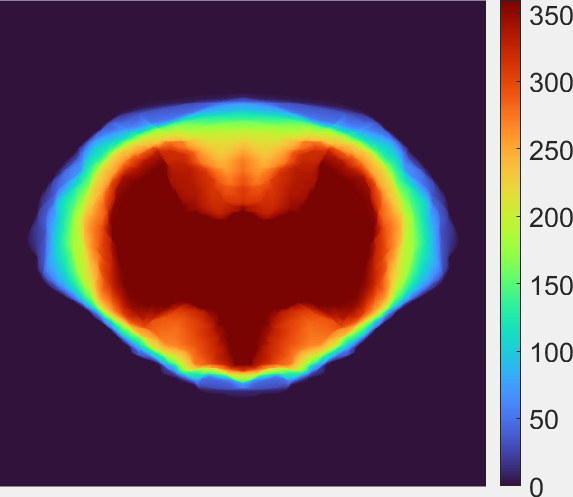}
        \caption{Symmetric Component ($\tilde{\bar{\mathbf{1}}}_{\mathcal{S}_s}$)}
    \end{subfigure}%
    ~
    \begin{subfigure}[t]{0.33\textwidth}
        \centering
        \includegraphics[width=\textwidth, keepaspectratio]{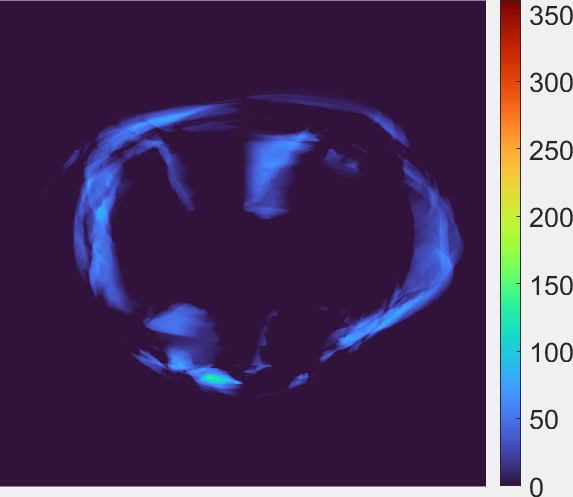}
        \caption{Asymmetric Component ($\tilde{\bar{\mathbf{1}}}_{\mathcal{S}_n}$)}
    \end{subfigure}
    
    \begin{subfigure}[t]{0.33\textwidth}
        \centering
        \includegraphics[width=\textwidth, keepaspectratio]{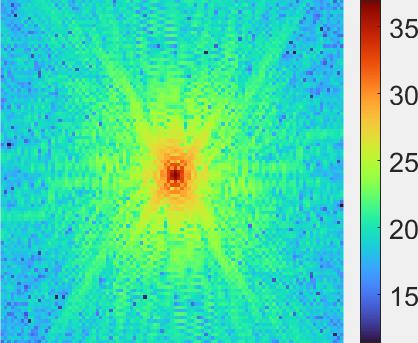}
        \caption{Log-power amplitude spectrum ($\mathrm{log}(1+\tilde{\bar{\mathbf{1}}}^2_\mathcal{S})$, see Section \ref{sec:fourier}, \ref{sec:algorithm_overview})}
    \end{subfigure}%
    ~
    \begin{subfigure}[t]{0.33\textwidth}
        \centering
        \includegraphics[width=\textwidth, keepaspectratio]{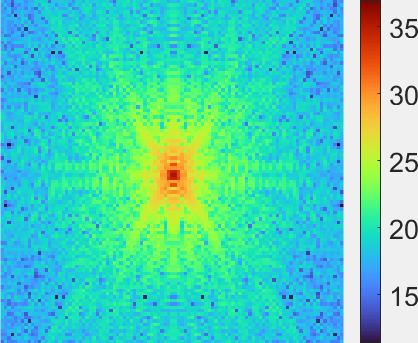}
        \caption{Symmetric Component}
    \end{subfigure}%
    ~
    \begin{subfigure}[t]{0.33\textwidth}
        \centering
        \includegraphics[width=\textwidth, keepaspectratio]{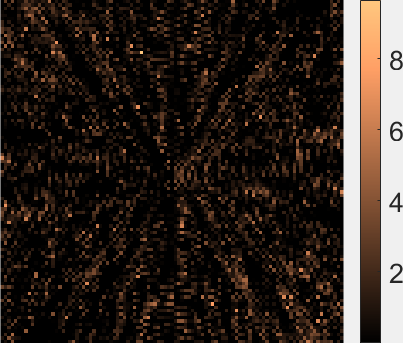}
        \caption{Asymmetric Component}
    \end{subfigure}
    \caption{Centroid-aligned silhouette-stack image of comet 67P-C/G observed across a full rotation ($\phi=[0,2\pi]$), with a $1^\circ$ longitude interval between consecutive images, in the spatial domain (top) and frequency domain (bottom). Individual silhouettes are registered by aligning their brightness centroid. Despite image-registration errors originating from this procedure, a high level of symmetry is preserved in the silhouette-stack image.}
    \label{fig:silh_stack_360deg_centrd}
\end{figure*}

\subsection{Silhouette-stack Image Model}

Building upon Sections \ref{sec:perfect_silh} and \ref{sec:imaging_error_sources}, we can derive a model for the silhouette-stack image that accounts for both surface shadowing and silhouette-registration errors. 

\begin{theorem}
\label{th:1_O=1_O_symm+e}
    Given a set of observed silhouettes $\{ \mathcal{O}(\phi), \phi \in [\phi_0,\phi_f] \}$, the integral silhouette indicator function $\tilde{\bar{\mathbf{1}}}_\mathcal{O}(u,v;\phi_0,\phi_f)$ can be written as:

    \begin{equation}
        \tilde{\bar{\mathbf{1}}}_\mathcal{O}(u,v;\phi_0,\phi_f) = \bar{\mathbf{1}}_{\mathcal{O}_s}(u,v;\phi_0,\phi_f) + e(u,v;\phi_0,\phi_f)
    \end{equation}

    where $\bar{\mathbf{1}}_{\mathcal{O}_s}$ is a symmetric function with respect to $\boldsymbol{\omega}_\mathrm{proj}$ and $e:\mathbb{P}^2 \rightarrow \mathbb{R}$ is the error function given by:

    \begin{equation}
        e(u,v;\phi_0,\phi_f) = \bar{\mathbf{1}}_{\mathcal{O}_n}(u,v;\phi_0,\phi_f) + \nu_\mathrm{reg}(u,v;\phi_0,\phi_f).
    \end{equation}
    
\end{theorem}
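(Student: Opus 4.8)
The plan is to assemble the result directly from the two preceding structural results, namely Definition \ref{def:1_O_bar_tilde}, which introduces the registration error as an additive perturbation, and Lemma \ref{lemma:O_symm}, which decomposes the shadowed silhouette stack into symmetric and asymmetric parts. No new geometric argument is required: the substantive content has already been established, and the theorem is essentially a regrouping of terms.

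First I would start from Definition \ref{def:1_O_bar_tilde}, writing
\begin{equation}
    \tilde{\bar{\mathbf{1}}}_\mathcal{O}(u,v;\phi_0,\phi_f) = \bar{\mathbf{1}}_\mathcal{O}(u,v;\phi_0,\phi_f) + \nu_\mathrm{reg}(u,v;\phi_0,\phi_f).
\end{equation}
Then I would substitute the decomposition of the shadowed silhouette stack provided by Equation \ref{eq:1_O_bar_symm_+_asym} in Lemma \ref{lemma:O_symm}, namely $\bar{\mathbf{1}}_\mathcal{O} = \bar{\mathbf{1}}_{\mathcal{O}_s} + \bar{\mathbf{1}}_{\mathcal{O}_n}$, to obtain
\begin{equation}
    \tilde{\bar{\mathbf{1}}}_\mathcal{O}(u,v;\phi_0,\phi_f) = \bar{\mathbf{1}}_{\mathcal{O}_s}(u,v;\phi_0,\phi_f) + \bar{\mathbf{1}}_{\mathcal{O}_n}(u,v;\phi_0,\phi_f) + \nu_\mathrm{reg}(u,v;\phi_0,\phi_f).
\end{equation}
Collecting the two non-symmetric contributions into a single function $e := \bar{\mathbf{1}}_{\mathcal{O}_n} + \nu_\mathrm{reg}$ yields the claimed form.

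The symmetry of $\bar{\mathbf{1}}_{\mathcal{O}_s}$ with respect to $\boldsymbol{\omega}_\mathrm{proj}$ does not need to be re-derived: it is exactly the property established for the symmetric component in Theorem \ref{th:s_bar} and carried over to the shadowed case in Lemma \ref{lemma:O_symm}, since $\mathcal{O}_s$ is defined through the same symmetry-preserving construction (Definition \ref{def:s_signal}) applied to $\mathcal{O}(\phi)$ rather than $\mathcal{S}(\phi)$. I would simply cite those results. Because every ingredient is already in place, there is no real obstacle here; the only point requiring care is bookkeeping—ensuring that it is $\bar{\mathbf{1}}_{\mathcal{O}_s}$, and not $\bar{\mathbf{1}}_\mathcal{O}$, that is singled out as the symmetric piece, so that the registration error $\nu_\mathrm{reg}$, which carries no symmetry guarantee, is correctly absorbed into $e$ alongside the asymmetric silhouette component $\bar{\mathbf{1}}_{\mathcal{O}_n}$.
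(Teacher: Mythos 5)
Your proposal is correct and follows essentially the same route as the paper: the paper's own proof consists precisely of combining the registration-error model (Definition \ref{def:1_O_bar_tilde}) with the symmetric/asymmetric decomposition of Lemma \ref{lemma:O_symm}, which is exactly your substitution-and-regrouping argument. You have simply written out explicitly the algebra the paper leaves implicit, including the correct observation that the symmetry of $\bar{\mathbf{1}}_{\mathcal{O}_s}$ is inherited from the earlier results rather than re-derived.
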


\begin{proof}
    Theorem \ref{th:1_O=1_O_symm+e} is easily obtained combining the image-alignment error model (Definition \ref{def:1_O_bar_tilde}) with the symmetry-preserving and symmetry-disrupting components of the integral silhouette indicator function (Lemma \ref{lemma:O_symm}).
\end{proof}

Theorem \ref{th:1_O=1_O_symm+e} establishes the mathematical foundation for pole estimation using silhouette stacks in practical scenarios by decomposing the silhouette-stack image into its symmetric and asymmetric components. The theorem demonstrates that signal and noise terms combine additively in the model, with their relative magnitudes determined by various error sources. The signal-to-noise ratio improves as the camera-longitude range increases, but degrades with stronger surface shadowing and larger image-alignment errors.

Before continuing the analysis, we must address an important distinction between theory and implementation. While our previous derivations treat image data and camera longitudes as continuous quantities, practical applications involve discrete images captured from a finite set of camera positions and longitudes. The following section establishes discrete representations for these quantities and their relationship to the continuous formulation previously developed.

\begin{definition}
    \label{def:O_img}
    Given an observed silhouette's indicator function, $\mathbf{1}_\mathcal{O}(\phi)$, the corresponding observed-silhouette digital image $O(m,n;\phi) \in \mathbb{B}^{N\times N}$ is such that
    
    \begin{equation}
        O(m,n;\phi) = \mathrm{sup}\left( \mathbf{1}_O(u,v),\, u_n - 0.5 \leq u < u_n + 0.5,\, v_m - 0.5 \leq v < v_m + 0.5 \right)
    \end{equation}

    where $O(m,n)$ indicates the $mn$-th pixel ($m$-th row, $n$-th column element) in the image and $[u_n, v_m]^\top$ are the image coordinates of the $mn$-th pixel (see Figure \ref{fig:silh_scheme}).
    
\end{definition}

From Definition \ref{def:O_img}, the digital image of the observed silhouette can be interpreted as a Boolean occupancy grid where each pixel containing a portion of the observed surface is equal to 1 and all other pixels are equal to 0. Effectively, $O(m,n;\phi)$ is a discrete representation of the continuous region $\mathcal{O}(u,v;\phi)$.

\begin{definition}
\label{def:O_bar}
    Given a set of camera longitudes $\Phi = \{ \phi_1, \dots, \phi_M \}$ and the corresponding observed-silhouette images $\{ O(\phi_1), \dots, O(\phi_M) \}$, the silhouette-stack image $\bar{O}(m,n;\Phi)$ is defined as:

    \begin{equation}
         \bar{O}(m,n;\Phi) = \sum_{k=1}^M O(m,n; \phi_k)
     \end{equation}
     
\end{definition}

The following key assumption must be made in order to apply the continuous-silhouette model to discrete silhouette imagery.

\begin{assumption}
    \label{assumpt:O=O_mathcal}
    We assume that the silhouette-stack image $\bar{O}(m,n;\Phi)$ can be approximated as

    \begin{equation}
        \bar{O}(m,n;\Phi) \approx \tilde{\bar{\mathbf{1}}}_\mathcal{O}(u_n,v_m; \phi_1, \phi_M)
    \end{equation}
\end{assumption}

Assumption \ref{assumpt:O=O_mathcal} posits that the silhouette stack derived from a finite set of digital images closely matches its continuous-function counterpart. For this approximation to remain accurate, sufficiently high image resolution and adequate coverage in camera longitude are necessary, though the exact requirements depend on the specific scenario. This assumption is subsequently validated through experiments that employ realistic resolution and imaging-cadence values (Section \ref{sec:results}).

\begin{lemma}
    \label{lemma:O=X+E}
    From Theorem \ref{th:1_O=1_O_symm+e}, Definition \ref{def:O_bar}, and Assumption \ref{assumpt:O=O_mathcal}, the silhouette-stack image $\bar{O}(m,n;\Phi)$ can be written as:

    \begin{equation}
        \bar{O}(m,n;\Phi) = \bar{O}_\mathrm{symm}(m,n;\Phi) + Z(m,n;\Phi),\, m,n=1,\dots,N
    \end{equation}

    where $\bar{O}_\mathrm{symm}(m,n;\Phi)$ is a symmetric image with respect to $\boldsymbol{\omega}_\mathrm{proj}$ and $Z(m,n;\Phi)$ is the image due to error sources.
\end{lemma}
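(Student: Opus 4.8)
The plan is to treat this as a transfer result that carries the continuous decomposition of Theorem~\ref{th:1_O=1_O_symm+e} onto the discrete pixel grid through the sampling identity supplied by Assumption~\ref{assumpt:O=O_mathcal}. The natural construction is to define the two discrete images as pixel-sampled versions of the two continuous functions already furnished by the theorem: set $\bar{O}_\mathrm{symm}(m,n;\Phi) := \bar{\mathbf{1}}_{\mathcal{O}_s}(u_n,v_m;\phi_1,\phi_M)$ and $Z(m,n;\Phi) := e(u_n,v_m;\phi_1,\phi_M)$, where $[u_n,v_m]^\top$ are the pixel-center image coordinates from Definition~\ref{def:O_img}. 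With these assignments the additive structure is inherited automatically, so the real content reduces to justifying the sampling step and confirming that the symmetric term retains reflective symmetry about $\boldsymbol{\omega}_\mathrm{proj}$.

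First I would invoke Theorem~\ref{th:1_O=1_O_symm+e} to write the continuous integral indicator function as $\tilde{\bar{\mathbf{1}}}_\mathcal{O}(u,v;\phi_1,\phi_M) = \bar{\mathbf{1}}_{\mathcal{O}_s}(u,v;\phi_1,\phi_M) + e(u,v;\phi_1,\phi_M)$, valid for all image coordinates $(u,v)$. Next I would evaluate this identity pointwise at the pixel centers $(u_n,v_m)$ for $m,n=1,\dots,N$, an operation that preserves the additive splitting. Then I would apply Assumption~\ref{assumpt:O=O_mathcal} to replace the left-hand side by the discrete silhouette stack $\bar{O}(m,n;\Phi)$ of Definition~\ref{def:O_bar}, giving $\bar{O}(m,n;\Phi) \approx \bar{\mathbf{1}}_{\mathcal{O}_s}(u_n,v_m;\phi_1,\phi_M) + e(u_n,v_m;\phi_1,\phi_M)$. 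Substituting the definitions of $\bar{O}_\mathrm{symm}$ and $Z$ yields the claimed decomposition, with $Z$ playing the role of the error image as its continuous progenitor $e$ does in the theorem.

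The main obstacle—and the only step requiring genuine care—is showing that the sampled image $\bar{O}_\mathrm{symm}$ inherits symmetry about $\boldsymbol{\omega}_\mathrm{proj}$ in the \emph{discrete} sense. The continuous symmetry of $\bar{\mathbf{1}}_{\mathcal{O}_s}$ is exact in the pole-oriented coordinates of Definition~\ref{def:u'}, i.e.\ $\bar{\mathbf{1}}_{\mathcal{O}_s}(-u',v') = \bar{\mathbf{1}}_{\mathcal{O}_s}(u',v')$, but the reflection $u' \to -u'$ about the projected-pole axis generally carries a pixel center to a point that is not itself a pixel center, since the pixel lattice need not be aligned with $\boldsymbol{\omega}_\mathrm{proj}$ (which sits at the angle $\alpha$ of Definition~\ref{def:alpha=atan2}). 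I would therefore argue that discrete symmetry holds to within the very discretization tolerance already admitted by Assumption~\ref{assumpt:O=O_mathcal}: at sufficient image resolution the sampled values agree with their reflected counterparts up to sub-pixel interpolation error, so $\bar{O}_\mathrm{symm}$ is reflectively symmetric in the approximate sense consistent with the surrounding framework. Because the lemma is explicitly stated as a consequence of this assumption, I would frame the conclusion as matching the ``$\approx$'' already present, rather than asserting exact discrete symmetry.
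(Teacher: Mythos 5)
Your proposal is correct and follows exactly the route the paper intends: the paper's own proof is simply ``The proof is straightforward,'' meaning the lemma is obtained by sampling the continuous decomposition of Theorem~\ref{th:1_O=1_O_symm+e} at the pixel centers and invoking Assumption~\ref{assumpt:O=O_mathcal} to identify the result with the discrete stack of Definition~\ref{def:O_bar}, which is precisely your construction of $\bar{O}_\mathrm{symm}$ and $Z$. Your closing observation---that the reflection about $\boldsymbol{\omega}_\mathrm{proj}$ need not map pixel centers to pixel centers, so the discrete symmetry holds only up to the tolerance already built into Assumption~\ref{assumpt:O=O_mathcal}---is a point the paper silently glosses over, and your handling of it is consistent with the paper's framework.
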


\begin{proof}
    The proof is straightforward.
\end{proof}

\subsection{Image Symmetry Metrics}
\label{sec:image_symm_metrics}

In practice, silhouette-stack images rarely exhibit perfect symmetry, due to error sources such as surface shadowing and silhouette registration, as previously discussed. In this section, we introduce the image-symmetry metric used in the proposed technique to estimate the pole projection onto the camera plane. More details on the specific operations involved are reported in the appendix.

\begin{definition}
    \label{def:chi}
    Let $I_a \in \mathbb{R}^{N\times N}$ and $I_b \in \mathbb{R}^{N\times N}$ be two images. The normalized correlation coefficient $\chi$ between $I_a$ and $I_b$ is defined as:

\begin{equation}
\label{eq:chi}
    \chi(I_a,I_b) = \dfrac{\sum_{m=1}^{N}\sum_{n=1}^{N}(I_{a,mn}-\bar{I}_a)(I_{b,mn}-\bar{I}_b) }{\sqrt{\left(\sum_{m=1}^{N}\sum_{n=1}^{N} (I_{a,mn}-\bar{I}_a)^2 \right)\left(\sum_{m=1}^{N}\sum_{n=1}^{N} (I_{b,mn}-\bar{I}_b)^2 \right)}}
\end{equation}

where $\bar{I}_a$ and $\bar{I}_b$ are the mean pixel intensities for $I_a$ and $I_b$, respectively, i.e.:

\begin{equation}
    \bar{I}_i = \dfrac{1}{N^2} \sum_{m=1}^{N}\sum_{n=1}^{N} I_{i,mn},\,i = a,b.
\end{equation}
\end{definition}

\begin{definition}
    \label{def:psi_I}
    Given an image $I\in \mathbb{R}^{N\times N}$, its vertical-symmetry score $\psi(I)$ is given by:

    \begin{equation}
        \psi(I) = \chi(I,\underline{I})
    \end{equation}

    where $\underline{I}$ is the image obtained by reflecting $I$ about its vertical axis (Definition \ref{def:I_ref}).
\end{definition}

\begin{lemma}
        \label{th:argmax_alpha}
        Let $I\in\mathbb{R}^{N\times N}$ be an image which exhibits reflective symmetry with respect to the axis-of-symmetry angle $\alpha$ (Definition \ref{def:image_refl_symm_theta}). Also let $I_\theta$ be the image obtained by applying the rotation $R(\theta)$ to $I$ (see Definition \ref{def:I_theta}). Then,

            \begin{equation}
            \label{eq:pi/2-alpha+kpi}
        \{(\pi/2-\alpha) + k\pi,\, k\in\mathbb{Z} \} = \argmax_{\theta} \psi(I_\theta)
    \end{equation}
    
\end{lemma}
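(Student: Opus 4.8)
The plan is to characterize the maximizers of $\psi(I_\theta)$ by first bounding the symmetry score from above and then identifying exactly when the bound is attained. Since $\psi(I_\theta)=\chi(I_\theta,\underline{I_\theta})$ is a normalized correlation coefficient (Definitions \ref{def:chi} and \ref{def:psi_I}), the Cauchy--Schwarz inequality gives $\psi(I_\theta)\le 1$ for every $\theta$ (excluding the degenerate constant-image case, in which $\chi$ is undefined and which does not arise for a nontrivial silhouette stack). It therefore suffices to show that the value $1$ is attained precisely on the set $\{(\pi/2-\alpha)+k\pi,\,k\in\mathbb{Z}\}$.

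First I would pin down the equality condition. Writing the images as mean-subtracted vectors, $\chi(I_a,I_b)=1$ holds if and only if $I_a-\bar I_a$ and $I_b-\bar I_b$ are positively proportional. Because $\underline{I_\theta}$ is merely a pixel rearrangement of $I_\theta$ (reflection about the vertical axis, Definition \ref{def:I_ref}), it shares the same mean and the same centered norm as $I_\theta$; hence the proportionality constant can only be $+1$. Consequently $\psi(I_\theta)=1$ if and only if $I_\theta=\underline{I_\theta}$, that is, if and only if $I_\theta$ is itself reflectively symmetric about the vertical image axis.

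The core geometric step is to determine for which $\theta$ the rotated image $I_\theta=R(\theta)I$ (Definition \ref{def:I_theta}) is vertically symmetric. Let $\rho$ denote reflection about the vertical axis and let $\sigma_\alpha$ denote the reflection about the symmetry axis of $I$, so that $\sigma_\alpha I=I$ by hypothesis. The condition $\rho\,R(\theta)I=R(\theta)I$ is equivalent to $\bigl(R(\theta)^{-1}\rho\,R(\theta)\bigr)I=I$. Conjugating the vertical reflection by a rotation yields another reflection, namely the reflection about the line obtained by rotating the vertical axis by $-\theta$, which makes angle $\pi/2-\theta$ with the horizontal. Hence $I_\theta$ is vertically symmetric if and only if $I$ is symmetric about the line at angle $\pi/2-\theta$. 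Equating this with the symmetry axis of $I$ (angle $\alpha$ in the convention of Definition \ref{def:image_refl_symm_theta}) gives $\pi/2-\theta\equiv\alpha\pmod{\pi}$, i.e.\ $\theta=(\pi/2-\alpha)+k\pi$; the modulus $\pi$ rather than $2\pi$ reflects that a symmetry line is invariant under a half-turn, which is exactly the origin of the $k\pi$ family in the claim.

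The main obstacle is the implicit uniqueness of the symmetry axis. The matching step $\pi/2-\theta\equiv\alpha\pmod\pi$ requires that $\alpha$ be the \emph{only} axis about which $I$ is reflectively symmetric; otherwise (e.g.\ for a disk or any rotationally symmetric figure) further angles $\theta$ would also yield $\psi=1$ and the argmax set would strictly contain $\{(\pi/2-\alpha)+k\pi\}$. I would therefore appeal to the convention of Definition \ref{def:image_refl_symm_theta}, under which $\alpha$ denotes the (generically unique) axis of symmetry, so that the reverse inclusion holds and the two sets coincide. A minor secondary point is that $R(\theta)$ and $\underline{\,\cdot\,}$ are discrete operators involving interpolation; within the idealized continuous model underlying Lemma \ref{lemma:O=X+E} the exact commutation used above is valid, and the conclusion transfers to the discrete setting under the same approximation already assumed.
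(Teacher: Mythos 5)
Your proposal is correct and follows essentially the same route as the paper's proof: rotating by $\pi/2-\alpha$ aligns the symmetry axis with the vertical so that $\psi=1$ is attained, which is the maximum, and the $k\pi$ family follows from the line symmetry's period-$\pi$ invariance. You simply supply the details the paper dismisses as ``easy to show''---the Cauchy--Schwarz bound $\psi\le 1$, the equality condition $I_\theta=\underline{I_\theta}$ (using that reflection preserves the mean and centered norm), and the conjugation-of-reflections argument---and your ``main obstacle'' about non-unique symmetry axes is exactly the caveat the paper itself concedes in the remark immediately following the lemma.
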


\begin{proof}

    Since $I$ exhibits reflective symmetry with respect to $\alpha$, it is easy to show that applying a rotation $R(\pi/2-\alpha)$ to $I$ produces another image $I_{\pi/2-\alpha}$ such that $\psi(I_{\pi/2-\alpha}) = \mathrm{max}(\psi) = 1$, i.e., $I_{\pi/2-\alpha}$ exhibits reflective symmetry with respect to the vertical axis. Since reflective symmetry about $\alpha$ also implies reflective symmetry about $\alpha+k\pi,\,k\in\mathbb{Z}$, Equation \ref{eq:pi/2-alpha+kpi} is obtained. 
\end{proof}

Note that if $I$ exhibits multiple axes of symmetry, $Equation \ref{eq:pi/2-alpha+kpi}$ holds true for angles other than $\alpha$ as well.

\subsection{Symmetry Detection in the Frequency Domain}
\label{sec:fourier}

Our approach to image-symmetry detection leverages frequency domain analysis of silhouette-stack images. This section presents the theoretical foundations and rationale for this methodology.

\subsubsection{Motivation}

The symmetry of silhouette-stack images can be compromised by several factors, including partial surface coverage, shadowing effects, and image registration errors (Sections \ref{sec:perfect_silh} and \ref{sec:imaging_error_sources}). Additionally, uncertainty in the object's center of mass position presents a fundamental challenge for symmetry detection. To address these limitations, we propose a frequency-domain approach based on the Discrete Fourier Transform (DFT)\cite{brigham1988fast}. Our method operates on the amplitude spectrum of the DFT (Definition \ref{def:dft}), which maintains the reflective symmetry properties of the original image while achieving translation invariance. Previous research has demonstrated the effectiveness of DFT-based approaches for symmetry detection in real images containing complex structures and imperfect symmetries\cite{keller2006signal}. The mathematical foundations and key properties of the DFT relevant to our analysis are provided in the appendix. The DFT can be computed using efficient algorithms, most notably the Fast Fourier Transform (FFT)\cite{brigham1988fast}. Further, previous work has presented efficient image representations for DFT-based symmetry detection, such as the pseudopolar Fourier Transform\cite{keller2006signal,bermanis20103}. In this work, we use a standard image representation and FFT to compute the DFT.

\subsubsection{Properties of Fourier Amplitude Spectra}

In this work, we leverage three properties of DFT amplitude spectra: symmetry, translation invariance, and robustness to noise. We start by introducing known symmetry properties without proof.

\begin{lemma}
    \label{lemma:conj_symm}
    
        Let $I\in \mathbb{R}^{N\times N}$ be a real-valued image and let $F=\mathcal{F}(I)\in \mathbb{C}^{N\times N}$ be its DFT (Definition \ref{def:dft}). Then, $F$ exhibits conjugate symmetry, i.e.:

    \begin{equation}
        F(x,y) = F^*(-x,-y),\forall \; x,y
    \end{equation}

    where $F^*$ denotes the complex conjugate of $F$ and $(x,y)$ are the pixel coordinates with respect to the image center.
    
\end{lemma}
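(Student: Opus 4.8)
The plan is to establish the identity by direct computation from the definition of the two-dimensional DFT (Definition \ref{def:dft}), using only the hypothesis that $I$ is real-valued. First I would write the transform as the double sum
\begin{equation}
F(x,y) = \sum_{m}\sum_{n} I(m,n)\, \exp\!\left( -\frac{2\pi i}{N}(xm + yn) \right),
\end{equation}
where the sums run over the $N\times N$ pixel grid and $i$ is the imaginary unit. The entire argument hinges on how the Fourier kernel behaves under complex conjugation, so this is the expression I would manipulate.

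Next I would take the complex conjugate of both sides. Since conjugation distributes over finite sums and products, this yields
\begin{equation}
F^*(x,y) = \sum_{m}\sum_{n} I^*(m,n)\, \exp\!\left( +\frac{2\pi i}{N}(xm + yn) \right).
\end{equation}
The key step is then invoking the hypothesis that $I$ is real, so that $I^*(m,n) = I(m,n)$ at every pixel. The only other observation required is that conjugating the kernel flips the sign of its exponent, which is algebraically identical to replacing the frequency pair $(x,y)$ by $(-x,-y)$ inside the original, unconjugated kernel.

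Substituting $I^* = I$ and matching the resulting expression against the definition of $F$ evaluated at $(-x,-y)$, I would identify
\begin{equation}
F^*(x,y) = \sum_{m}\sum_{n} I(m,n)\, \exp\!\left( -\frac{2\pi i}{N}((-x)m + (-y)n) \right) = F(-x,-y).
\end{equation}
Rearranging gives the claimed conjugate symmetry $F(x,y) = F^*(-x,-y)$, and since $x$ and $y$ were arbitrary the identity holds for all frequency coordinates.

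This argument involves no genuine obstacle; it is a routine verification that real-valuedness of the spatial signal forces Hermitian symmetry of its spectrum. The only point requiring care is the indexing convention: because the statement refers to coordinates measured with respect to the image center, I would remark that the negative indices $(-x,-y)$ are to be interpreted modulo $N$ via the inherent periodicity of the DFT, so that the centered and standard index formulations coincide and the claim is well defined across the full $N\times N$ grid.
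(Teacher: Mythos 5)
Your proof is correct. Note that the paper itself states Lemma \ref{lemma:conj_symm} \emph{without proof}, introducing it as a known property of the DFT, so there is no authorial argument to compare against; your direct computation---conjugating the defining double sum, invoking $I^* = I$, and recognizing the conjugated kernel as the kernel evaluated at $(-x,-y)$---is the standard derivation of Hermitian symmetry and is sound. Your closing remark about interpreting negative frequency indices modulo $N$, so that the centered-coordinate statement is well defined on the $N\times N$ grid, is exactly the right point of care given the paper's convention of measuring $(x,y)$ from the image center.
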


\begin{corollary}
     \label{cor:A_symm}
     Let $A = |\mathcal{F}(I)|\in\mathbb{R}^{N\times N}$ be the DFT amplitude spectrum of $I$ (\ref{def:dft}). Then, $A$ exhibits central symmetry, i.e.:

     \begin{equation}
        A(x,y) = A(-x,-y),\, x,y = 1,\dots,N
    \end{equation}
    
\end{corollary}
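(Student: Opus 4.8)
The plan is to obtain the central symmetry of the amplitude spectrum directly from the conjugate symmetry of the DFT already established in Lemma \ref{lemma:conj_symm}. Since $A$ is by definition the pointwise modulus $A = |\mathcal{F}(I)| = |F|$, the entire argument reduces to applying the modulus operation to both sides of the conjugate-symmetry identity and exploiting the elementary fact that a complex number and its conjugate have identical magnitude.

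Concretely, I would first invoke Lemma \ref{lemma:conj_symm} to write $F(x,y) = F^*(-x,-y)$ for all pixel coordinates $(x,y)$ measured relative to the image center. Taking the modulus of both sides yields $|F(x,y)| = |F^*(-x,-y)|$. I would then use the standard identity $|z^*| = |z|$ for any $z\in\mathbb{C}$, which gives $|F^*(-x,-y)| = |F(-x,-y)|$. Chaining these equalities and substituting the definition $A = |F|$ produces $A(x,y) = A(-x,-y)$, which is exactly the claimed central symmetry.

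There is no genuine obstacle in this corollary; it is an immediate consequence of Lemma \ref{lemma:conj_symm}. The only point requiring care is bookkeeping about the coordinate convention: the reflection $(x,y)\mapsto(-x,-y)$ must be interpreted as a reflection through the image center on the discrete grid, consistent with the centered indexing under which Lemma \ref{lemma:conj_symm} is stated. Once that convention is fixed, the proof is a one-line modulus computation with no further subtlety.
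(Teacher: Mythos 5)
Your proof is correct and is exactly the argument the paper intends: the corollary is stated immediately after Lemma \ref{lemma:conj_symm} precisely so that central symmetry of $A$ follows by taking the modulus of the conjugate-symmetry identity and using $|z^*|=|z|$. The paper omits the proof as immediate, and your one-line derivation (including the remark about centered pixel coordinates) fills it in faithfully.
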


\begin{lemma}
    \label{lemma:A_symm}
    If $I\in\mathbb{R}^{N\times N}$ exhibits reflective symmetry with respect to the axis-of-symmetry angle $\alpha$, then its DFT amplitude spectrum $A$ also exhibits reflective symmetry with respect to $\alpha$.
\end{lemma}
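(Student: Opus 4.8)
The plan is to reduce the general claim to the special case in which the axis of symmetry is the vertical image axis, exploit the fact that a vertical-axis reflection acts exactly on the discrete pixel grid, and then transport the result back to the original axis by rotation equivariance of the amplitude spectrum.

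First I would normalize the axis. By the argument in Lemma \ref{th:argmax_alpha}, applying the rotation $R(\pi/2-\alpha)$ to $I$ produces an image $I'$ that is reflectively symmetric about the vertical axis. Since the DFT amplitude spectrum is rotation-equivariant --- rotating the image by an angle rotates its amplitude spectrum by the same angle --- it suffices to prove the claim for $I'$ and the vertical axis: if $A' = |\mathcal{F}(I')|$ is symmetric about the vertical axis, then rotating back by $R(\alpha-\pi/2)$ shows that $A=|\mathcal{F}(I)|$ is symmetric about the axis at angle $\alpha$, which is the desired conclusion.

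Next I would settle the vertical-axis case, which is the clean core of the argument. Vertical-axis symmetry of $I'$ means $I'(x,y)=I'(-x,y)$ for all pixels. Taking the DFT of both sides and using the single-coordinate reflection property of the DFT (reflecting one spatial coordinate reflects the corresponding frequency coordinate, i.e. $\mathcal{F}\{I'(-x,y)\}(x,y)=F'(-x,y)$ with $F'=\mathcal{F}(I')$) immediately yields $F'(x,y)=F'(-x,y)$. Taking moduli gives $A'(x,y)=A'(-x,y)$, i.e. $A'$ is reflectively symmetric about its vertical axis. Equivalently, and perhaps more transparently, one can observe that reflection is an orthogonal, self-inverse transformation $M$ of the plane, so that $\mathcal{F}\{I\circ M\}=\mathcal{F}\{I\}\circ M$ for a real image; the symmetry hypothesis $I\circ M = I$ then transfers verbatim to $|\mathcal{F}(I)|$, giving the result for an arbitrary axis in a single step. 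The previously established conjugate symmetry of Lemma \ref{lemma:conj_symm} and the central symmetry of Corollary \ref{cor:A_symm} can be used as an internal consistency check: the central symmetry $(x,y)\mapsto(-x,-y)$ is exactly the composition of reflections about two orthogonal axes, so a spectrum symmetric about the axis at angle $\alpha$ is automatically symmetric about the axis at angle $\alpha+\pi/2$ as well.

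The main obstacle I anticipate is not algebraic but is the tension between the continuous and discrete settings in the axis-normalization step. Reflection and the central symmetry of Corollary \ref{cor:A_symm} act exactly on the $N\times N$ grid, since a vertical reflection is simply a permutation of columns; however, a rotation by an arbitrary angle $\pi/2-\alpha$ does not map the pixel grid onto itself and therefore requires interpolation, so strictly speaking the rotation-equivariance used in the reduction holds only approximately on the discrete lattice. I would resolve this exactly as the paper does elsewhere: carry the argument in the continuous-image idealization of Section \ref{sec:perfect_silh} (where $\mathcal{F}$ denotes the continuous Fourier transform and rotations are exact), and treat the discrete amplitude spectrum as its sampled counterpart, consistent with the approximation-level assumptions already invoked. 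For axes aligned with the grid ($\alpha\in\{0,\pi/2\}$) the entire statement holds exactly and requires none of these caveats.
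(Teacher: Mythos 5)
The paper offers no proof to compare against: Lemma \ref{lemma:A_symm} is introduced under the blanket remark that known symmetry properties of the DFT are stated ``without proof.'' Your argument is correct and fills that gap, so the only comparison to make is internal to your own write-up. Your second, one-step argument is the stronger one and should be the main proof: a reflection $M=\mathrm{Ref}(\alpha)$ about a line through the origin is orthogonal, symmetric, and self-inverse, so the change of variables $\mathbf{y}=M\mathbf{x}$ in the Fourier integral gives $\mathcal{F}\{I\circ M\}=\mathcal{F}\{I\}\circ M$ (no realness assumption is needed, contrary to your parenthetical), and the hypothesis $I\circ M=I$ then yields $A\circ M=A$ immediately, for every $\alpha$. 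This makes your first route---normalizing the axis with $R(\pi/2-\alpha)$ and invoking rotation equivariance---redundant, and it is precisely that rotation step which drags in the interpolation problem you then spend a paragraph defusing: a reflection argument needs no intermediate rotation, so the only lattice-compatibility issue left is the one already baked into the lemma's hypothesis. On that point your handling of the continuous/discrete tension is appropriate and arguably more careful than the paper's own, since Definition \ref{def:image_refl_symm_theta} presupposes that reflected pixel coordinates land on the grid, which is exact only for special axes; passing to the continuous idealization (or restricting to grid-compatible $\alpha$) is the honest resolution. One small correction for the vertical-axis case if you state it against the paper's actual conventions: the flip of Definition \ref{def:I_ref} is about the array center, not the coordinate origin, so with the DFT of Definition \ref{def:dft} one gets $\mathcal{F}\{\underline{I}\}(x,y)=e^{-2\pi i y(N+1)/N}\,F(x,-y)$ rather than $F(x,-y)$ exactly; the extra factor has unit modulus, which is exactly why the claimed symmetry holds for the amplitude spectrum $A$ even though it fails for $F$ itself.
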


\begin{corollary}
\label{cor:A_symm_pi/2}
    If $I\in\mathbb{R}^{N\times N}$ exhibits reflective symmetry with respect to the axis-of-symmetry angle $\alpha$, then $A$ exhibits reflective symmetry with respect to $\alpha + k\dfrac{\pi}{2},\, k\in \mathbb{Z}$.
\end{corollary}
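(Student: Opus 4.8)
The plan is to combine the two symmetries of the amplitude spectrum $A$ that are already in hand: the reflective symmetry about the axis at angle $\alpha$ established in Lemma \ref{lemma:A_symm}, and the central symmetry $A(x,y)=A(-x,-y)$ established in Corollary \ref{cor:A_symm}. The key geometric observation is that central symmetry about the origin is a point reflection (equivalently, a rotation by $\pi$), and that composing a reflection about an axis at angle $\alpha$ with this point reflection produces a reflection about the perpendicular axis at angle $\alpha+\pi/2$. Once reflective symmetry about both $\alpha$ and $\alpha+\pi/2$ is available, the full family $\alpha+k\pi/2$ follows from the fact that an axis of reflective symmetry is a full line, so symmetry about an angle $\theta$ is the same statement as symmetry about $\theta+\pi$.

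Concretely, first I would write the reflection about the axis at angle $\alpha$ through the spectrum center as a linear map $M_\alpha$ acting on the centered pixel coordinates $(x,y)$, and write the central symmetry as the point reflection $P=-\mathrm{Id}$. Both operations share the same center, namely the origin of the centered-coordinate system used in Corollary \ref{cor:A_symm}. Since $A$ is invariant under $M_\alpha$ (Lemma \ref{lemma:A_symm}) and under $P$ (Corollary \ref{cor:A_symm}), it is invariant under their composition $M_\alpha P$. A short computation identifies $M_\alpha P$ with the reflection $M_{\alpha+\pi/2}$ about the axis at angle $\alpha+\pi/2$, so $A$ is reflectively symmetric about $\alpha+\pi/2$ as well.

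Finally, I would close the argument by applying the line-versus-angle identification to both axes: symmetry about $\alpha$ is equivalent to symmetry about $\alpha+k\pi$, and symmetry about $\alpha+\pi/2$ is equivalent to symmetry about $\alpha+\pi/2+k\pi$. Together these two families exhaust $\{\alpha+k\pi/2 : k\in\mathbb{Z}\}$, which is the claimed conclusion.

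The only delicate point, and the part I would be most careful about, is ensuring that the reflection axis of Lemma \ref{lemma:A_symm} and the center of the point symmetry of Corollary \ref{cor:A_symm} are referred to the \emph{same} origin, namely the spectrum center. If the two symmetries were centered at different points, their composition would be a glide reflection rather than a pure reflection, and the conclusion would fail. Since both earlier statements are phrased with respect to the centered pixel coordinates $(x,y)$, this alignment holds, and no further bookkeeping about the discrete lattice is needed beyond confirming that the centers coincide.
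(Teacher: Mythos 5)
Your proposal is correct and follows essentially the same route as the paper, which proves the corollary ``by construction, by combining central symmetry (Corollary \ref{cor:A_symm}) with reflective symmetry (Lemma \ref{lemma:A_symm})''. You have simply made explicit the composition argument (reflection about $\alpha$ composed with the point reflection yields the reflection about $\alpha+\pi/2$, with both centered at the spectrum origin) that the paper leaves as a sketch.
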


\begin{proof}
    Corollary \ref{cor:A_symm_pi/2} can be shown by construction, by combining central symmetry (Corollary \ref{cor:A_symm}) with reflective symmetry (Lemma \ref{lemma:A_symm}).
\end{proof}

As shown in Corollary \ref{cor:A_symm_pi/2}, when an image $I$ exhibits reflective symmetry, two equivalent axis-of-symmetry solutions emerge in the amplitude spectrum $A$. This property has important implications for both symmetry detection and pole estimation. We provide a more detailed discussion of this phenomenon in Section \ref{sec:pole_ambig}, where we also present practical strategies for resolving the resulting ambiguity.

We now examine the translation-invariant property of the DFT amplitude spectrum, beginning with related definitions.

\begin{definition}
\label{def:circular_shift}
    Given an image $I\in\mathbb{R}^{N\times N}$, the image obtained by applying a circular-shift translation $\mathbf{t}_\circ = [t_{\circ,u}, t_{\circ,v}]^\top$ is an image $I_{\mathbf{t}_\circ}\in\mathbb{R}^{N\times N}$ such that:

    \begin{equation}
    \label{eq:I_t}
        I_{\mathbf{t}_\circ}(m, n) = I\left((m - t_{\circ,v}) \, \bmod \, N, (n - t_{\circ,u}) \, \bmod \, N \right)
    \end{equation}
    
\end{definition}

A circular shift (Definition \ref{def:circular_shift}) wraps pixels that extend beyond image boundaries to the opposite side, maintaining the signal's cyclical properties across both horizontal and vertical dimensions. The modulo operator ($\bmod$) constrains pixel indices to the valid range of $1,\dots,N$. This circular shift property has direct relevance to our imaging scenario: when an image has uniform (or removed) background, object translations within the camera's field of view can be mathematically represented as circular shifts, provided the object remains fully contained within the field of view as specified in Section \ref{sec:assumptions}.

\begin{theorem}
    \label{th:shift}
    Let $I_{\mathbf{t}_\circ}\in \mathbb{R}^{N\times N}$ be the image obtained by applying a circular-shift translation $\mathbf{t}_\circ$ to the image $I\in \mathbb{R}^{N\times N}$. Let $A$ and $A_{\mathbf{t}_\circ}$ be the DFT amplitude spectra of $I$ and $I_{\mathbf{t}_\circ}$, respectively. Then:

    \begin{equation}
        A = A_{\mathbf{t}_\circ}
    \end{equation}
\end{theorem}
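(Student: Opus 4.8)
The plan is to invoke the DFT shift theorem: a circular shift in the spatial domain corresponds to multiplication by a unit-modulus phase factor in the frequency domain, and since the amplitude spectrum discards phase information, it is left unchanged. First, I would write out $F_{\mathbf{t}_\circ} = \mathcal{F}(I_{\mathbf{t}_\circ})$ directly from Definition \ref{def:dft}, inserting the circular-shift relation of Equation \ref{eq:I_t}. This produces a double sum over the spatial indices $m,n$ of the term $I\!\left((m - t_{\circ,v})\bmod N,\,(n - t_{\circ,u})\bmod N\right)$ multiplied by the DFT kernel.

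The key step is the change of summation variables $m' = (m - t_{\circ,v})\bmod N$ and $n' = (n - t_{\circ,u})\bmod N$. Because both the image samples and the complex-exponential kernel are $N$-periodic in their spatial indices, summation over a single full period $m,n = 1,\dots,N$ is invariant under this reindexing: the wrap-around introduced by the $\bmod N$ operator is exactly compensated by the periodicity of the kernel. This is where I expect the main obstacle to lie, since the factorization is exact \emph{only} for a circular (periodic) shift; a linear shift with boundary truncation would leak energy across the image edge and break the identity. The argument must therefore lean explicitly on the cyclic structure guaranteed by Definition \ref{def:circular_shift} and on the field-of-view containment assumption of Section \ref{sec:assumptions}, which is what licenses modeling an in-image object translation as a circular shift in the first place.

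After reindexing, the kernel separates into the original kernel evaluated at $(m',n')$ times a phase factor that depends only on the frequency coordinates and the shift $\mathbf{t}_\circ$ (a term of the form $\mathrm{e}^{-\jmath 2\pi (x\, t_{\circ,v} + y\, t_{\circ,u})/N}$ under the convention fixed by Definition \ref{def:dft}), which does not depend on the summation indices and hence factors out of the sum. The remaining sum is exactly $F = \mathcal{F}(I)$, so I obtain $F_{\mathbf{t}_\circ} = \mathrm{e}^{-\jmath 2\pi(x\, t_{\circ,v} + y\, t_{\circ,u})/N}\, F$. Finally, taking the modulus of both sides and using that the phase factor has unit modulus, the exponential disappears, yielding $A_{\mathbf{t}_\circ} = |F_{\mathbf{t}_\circ}| = |F| = A$, which completes the proof.
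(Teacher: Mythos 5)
Your proof is correct, but it is worth noting that the paper itself offers no proof at all: it states Theorem \ref{th:shift} as the ``commonly known'' Fourier Transform Shift Theorem and defers to the literature (the citation to Roberts). Your derivation supplies exactly the standard argument that the citation stands in for: expanding $\mathcal{F}(I_{\mathbf{t}_\circ})$ via Definition \ref{def:dft}, reindexing the double sum with $m' = (m - t_{\circ,v}) \bmod N$, $n' = (n - t_{\circ,u}) \bmod N$, using the $N$-periodicity of the complex-exponential kernel to absorb the wrap-around, factoring out the resulting unit-modulus phase $\mathrm{exp}\left[-2\pi i \left(x\, t_{\circ,v} + y\, t_{\circ,u}\right)/N\right]$, and taking the modulus. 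You also correctly isolate the one genuinely delicate point---that the identity is exact only for \emph{circular} shifts, since a truncating linear shift would break the reindexing---which the paper addresses only informally in the surrounding discussion (the field-of-view containment remark). One small clarification: that containment assumption is not needed to prove the theorem as stated, which already posits a circular shift; it is needed only to justify modeling a physical object translation as a circular shift when the theorem is applied, so it belongs to the application rather than the proof. Net effect: your write-up makes the paper's frequency-domain machinery self-contained where the paper merely cites, at the cost of a page of routine algebra.
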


Theorem \ref{th:shift}, commonly known as the Fourier Transform Shift Theorem, establishes a fundamental property of DFT amplitude spectra\cite{roberts1987digital}. While image translation affects the phase spectrum (see Definition \ref{def:dft}), the amplitude spectrum remains invariant under translation. This invariance property is particularly valuable for space-based pole estimation, as it eliminates the need to precisely determine the object's center-of-mass location in the image. When analyzing silhouette-stack images, the DFT amplitude spectrum---and by extension, any symmetry properties---remains unchanged regardless of translation. It is important to note that this translation invariance property holds only when the stacked silhouette is fully contained within the camera's field of view, consistent with the assumptions outlined in Section \ref{sec:assumptions}.

The DFT amplitude spectrum exhibits inherent robustness to image noise due to its global representational properties. In the frequency domain, global spatial patterns such as symmetries manifest primarily in low-frequency components, while noise typically corresponds to high-frequency components. This natural frequency-domain separation enhances our ability to distinguish symmetry patterns from asymmetric image perturbations. The global nature of the Fourier Transform thus provides an effective framework for isolating and analyzing symmetric structures within noisy images\cite{keller2006signal}.

\subsubsection{Symmetry Detection with Noisy Images}

We can now derive results for DFT-based image symmetry detection in the presence of error sources. We will rely on the symmetry score in Definition \ref{def:psi_I} as a symmetry metric and the image error model obtained in Theorem \ref{th:1_O=1_O_symm+e}.

\begin{lemma}
\label{lemma:A_symm_plus_blob}
    Let $I\in\mathbb{R}^{N\times N}$ be an image such that

    \begin{equation}
        I = X + E
    \end{equation}
    
    where the image $X \in \mathbb{R}^{N\times N}$ is symmetric with respect to the axis-of-symmetry angle $\alpha$ and $E\in \mathbb{R}^{N\times N}$ is an asymmetric image (in the same form as shown in Lemma \ref{lemma:O=X+E}). Also let $F = \mathcal{F}(I)\in\mathbb{C}^{N\times N}$ be the DFT of $I$ and $A=|F|$ be the corresponding DFT amplitude spectrum (Definition \ref{def:dft}). Then, the squared amplitude spectrum $A^2$ can be written as

    \begin{equation}
        A^2 = A^2_X + A^2_N
    \end{equation}

    where $A^2_{X}$ is a symmetric image with respect to $\alpha$ and $A^2_N$ is an asymmetric image.
    
\end{lemma}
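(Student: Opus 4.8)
The plan is to reduce the statement to the linearity of the DFT together with the symmetry-transfer property already recorded in Lemma \ref{lemma:A_symm}. Since the DFT is linear, I would first write $F = \mathcal{F}(I) = \mathcal{F}(X) + \mathcal{F}(E) =: F_X + F_E$. The immediate obstacle is that the amplitude spectrum is \emph{not} additive: in general $|F_X + F_E| \neq |F_X| + |F_E|$, so the desired decomposition cannot be read directly from $A = |F|$. The key observation that makes the lemma work is that the \emph{squared} amplitude does expand cleanly, because $A^2 = |F|^2 = F F^*$ is a polynomial in $F_X$, $F_E$ and their conjugates.

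Carrying out this expansion, I would write
\[
A^2 = (F_X + F_E)(F_X + F_E)^* = |F_X|^2 + |F_E|^2 + 2\,\mathrm{Re}\!\left( F_X F_E^* \right),
\]
and then define the two components as $A_X^2 := |F_X|^2$ and $A_N^2 := |F_E|^2 + 2\,\mathrm{Re}(F_X F_E^*)$, so that $A^2 = A_X^2 + A_N^2$ holds by construction. To establish that $A_X^2$ is symmetric about $\alpha$, I would invoke Lemma \ref{lemma:A_symm}: because $X$ is reflectively symmetric about $\alpha$ by hypothesis, its amplitude spectrum $|F_X|$ inherits the same reflective symmetry, and squaring a symmetric function preserves that symmetry. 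This settles the nontrivial half of the claim.

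The main difficulty, and the part I would treat with care, is the characterization of $A_N^2$. The term $|F_E|^2$ is the power spectrum of the asymmetric error image $E$, and the cross term couples $F_X$ with $F_E$; since $E$ breaks reflective symmetry about $\alpha$, neither contribution is symmetric about $\alpha$ in general, so $A_N^2$ is asymmetric as stated. I would be explicit that ``asymmetric'' is meant in the complementary, signal-versus-noise sense inherited from Lemma \ref{lemma:O=X+E}---i.e., $A_N^2$ collects exactly the residual spectral power that is not symmetric about $\alpha$---rather than as a pointwise strict inequality, since isolated Fourier coefficients could in principle cancel into accidental symmetry. In this reading the lemma simply transports the additive ``symmetric signal plus asymmetric error'' structure of the spatial domain into the squared-amplitude spectrum, which is exactly what the subsequent symmetry-detection argument requires.
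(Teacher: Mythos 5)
Your proposal is correct and follows essentially the same route as the paper's proof: apply DFT linearity to get $F = F_X + F_E$, expand the squared amplitude to obtain $A^2 = |F_X|^2 + |F_E|^2 + 2\left( \mathrm{Re}(F_X)\mathrm{Re}(F_E) + \mathrm{Im}(F_X)\mathrm{Im}(F_E) \right)$ (your $2\,\mathrm{Re}(F_X F_E^*)$ is exactly this cross term, just written via conjugation rather than real/imaginary parts), and conclude symmetry of $A_X^2 = |F_X|^2$ from Lemma \ref{lemma:A_symm}. Your closing remark on the intended reading of ``asymmetric'' for $A_N^2$ is a point the paper leaves implicit, but it does not change the argument.
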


\begin{proof}

    From linearity of the DFT (Lemma \ref{lemma:dft_linearity}), we have:

    \begin{align}
        F &= \mathcal{F}(X + E)\\
        &= \mathcal{F}(X) + \mathcal{F}(E)\\
        &= F_X + F_E
    \end{align}

    where $F_X$ and $F_E$ are used to denote $\mathcal{F}(X)$ and $\mathcal{F}(E)$, for brevity. Using the properties of complex numbers, the squared amplitude spectrum $A^2$ can be rewritten as:

    \begin{align}
        A^2 &= |F_{X} + F_E|^2\\
        &= \left[\mathrm{Re}(F_{X} + F_E)\right]^2 + \left[\mathrm{Im}(F_{X} + F_E)\right]^2\\
        &= \left[\mathrm{Re}(F_{X})\right]^2 + \left[\mathrm{Re}(F_E)\right]^2 + 2\mathrm{Re}(F_{X})\mathrm{Re}(F_E) \nonumber \\
        & \quad + \left[\mathrm{Im}(F_{X})\right]^2 + \left[\mathrm{Im}(F_E)\right]^2 + 2\mathrm{Im}(F_{X})\mathrm{Im}(F_E)\\
        &= |F_{X}|^2 + |F_E|^2 + 2\left( \mathrm{Re}(F_{X})\mathrm{Re}(F_E) + \mathrm{Im}(F_{X})\mathrm{Im}(F_E) \right)\\
        &= A^2_{X} + A^2_N
    \end{align}

    where

    \begin{equation}
        A^2_{X} = |F_X|^2
    \end{equation}

    and

    \begin{equation}
         A^2_N = |F_E|^2 + 2\left( \mathrm{Re}(F_{X})\mathrm{Re}(F_E) + \mathrm{Im}(F_{X})\mathrm{Im}(F_E) \right)
    \end{equation}

    Since $X$ is symmetric, its squared amplitude spectrum $A^2_{X}$ is also symmetric (Lemma \ref{lemma:A_symm}).
\end{proof}

Lemma \ref{lemma:A_symm_plus_blob} characterizes how asymmetric components affect the DFT amplitude spectrum. While the transformation to the frequency domain introduces coupling between symmetric and asymmetric elements via the term $A^2_N$, a distinct symmetric component is preserved through the term $A^2_X$. This preservation of symmetry in the frequency domain is fundamental to our DFT-based symmetry-detection approach.

To further analyze the effect of asymmetric error sources on symmetry detection, we make the following assumption.

\begin{assumption}
\label{assumption:const_xcorr}
    Given the squared amplitude spectrum model $A^2 = A^2_{X} + A^2_N$ presented in Lemma \ref{lemma:A_symm_plus_blob}, the noise pixels $A^2_N(m,n)$ are random variables not exhibiting reflective symmetry with respect to any axis-of-symmetry angle $\theta$, in the sense that

    \begin{equation}
        \mathbb{E}\left[ \left(A^2_{N,\theta}(m,n) - \widebar{A^2_N} \right) \left(\underline{A^2_{N,\theta}}(m,n) - \widebar{A^2_N} \right) \right] = 0,\; m,n=1\dots,N,\; \theta=[0,2\pi)
    \end{equation}
    
where $A^2_{N,\theta}$ is the image obtained by rotating $A^2_N$ by an angle $\theta$  (Definition \ref{def:I_theta}), $\underline{A^2_{\theta,N}}$ is the image obtained by reflecting $A^2_{\theta,N}$ about the vertical axis (Definition \ref{def:I_ref}), and $\widebar{A^2_N}=\frac{1}{N}\sum_{m=1}^N\sum_{n=1}^N A^2_N(m,n)$ is the mean pixel intensity in $A^2_N$.
\end{assumption}

Assumption \ref{assumption:const_xcorr} is supported by the nature of the noise term $A^2_N$, which encompasses the cumulative asymmetric effects in the silhouette-stack image---including partial surface coverage, surface shadowing, and image-registration errors described in Sections \ref{sec:perfect_silh} and \ref{sec:imaging_error_sources}. When accumulated over a sufficient number of stacked silhouettes, these combined effects can be modeled as random frequency components distributed across the frequency domain without dominant modes. This assumption may not hold when the stacked silhouette exhibits symmetric patterns that are not aligned with the pole direction. Two notable cases illustrate potential violations: first, high-contrast lines resulting from silhouette stacking may appear asymmetrically if shadowing obscures them on one side of the pole. Second, a terminator line (separating lit and shadowed pixels) may manifest as a continuous linear feature rather than a fragmented pattern. However, these effects typically remain minor for objects with irregular shapes. Furthermore, non-polar symmetries tend to become dispersed during the stacking of multiple irregular silhouettes.

Using the above mathematical setup, we can state the main result for symmetry detection using DFT amplitude spectra in the presence of noise.

\begin{theorem}
    \label{th:alpha_argmax_noise}
    Let $I = X + E,\,I\in\mathbb{R}^{N\times N}$ be the image with symmetric component $X$ and asymmetric component $E$, according to Lemma \ref{lemma:A_symm_plus_blob}. Also let $A=|\mathcal{F}(I)|$ be the DFT amplitude spectrum of $I$. Then,

    \begin{equation}
    \label{eq:alpha_argmax}
        \left\{\left(\dfrac{\pi}{2}-\alpha\right) + k\dfrac{\pi}{2},\, k\in\mathbb{Z} \right\} = \argmax_\theta \mathbb{E}\left[\psi(A^2_\theta)\right]
    \end{equation}

    where $A^2_\theta$ is the image obtained by rotating $A^2$ by an angle $\theta$ (Definition \ref{def:I_theta}).
\end{theorem}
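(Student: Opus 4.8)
The plan is to reduce the expected symmetry score $\mathbb{E}[\psi(A^2_\theta)]$ to the deterministic symmetry score of the signal component $A^2_X$ and then invoke the noiseless result of Lemma \ref{th:argmax_alpha}. I would begin from the decomposition $A^2 = A^2_X + A^2_N$ of Lemma \ref{lemma:A_symm_plus_blob}, noting that $A^2_X$ carries \emph{two} symmetries: it is reflectively symmetric about $\alpha$ (Lemma \ref{lemma:A_symm_plus_blob}) and, being the squared amplitude spectrum of a real image, it is also centrally symmetric (Corollary \ref{cor:A_symm}). Combining these through Corollary \ref{cor:A_symm_pi/2} shows that $A^2_X$ is reflectively symmetric about every axis $\alpha + k\pi/2$, $k\in\mathbb{Z}$; this doubled symmetry is precisely what promotes the period-$\pi$ argmax of Lemma \ref{th:argmax_alpha} to the period-$\pi/2$ set in Equation \ref{eq:alpha_argmax}.

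Since the rotation $R(\theta)$ and the vertical reflection $\underline{(\cdot)}$ are linear, I have $A^2_\theta = A^2_{X,\theta} + A^2_{N,\theta}$ and $\underline{A^2_\theta} = \underline{A^2_{X,\theta}} + \underline{A^2_{N,\theta}}$. Writing $\psi(A^2_\theta)=\chi(A^2_\theta,\underline{A^2_\theta})$ in centered inner-product form and using that vertical reflection preserves both the spatial mean and the norm, the denominator of $\chi$ reduces to the energy $\|A^2_\theta - \overline{A^2_\theta}\|^2$; because a rigid rotation preserves this energy (up to the boundary and interpolation effects already neglected by the continuous model), the denominator is independent of $\theta$, so the argmax is controlled entirely by the numerator. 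Expanding the centered numerator into signal--signal, signal--noise, and noise--noise pieces and taking expectations, the noise--noise term vanishes pointwise by Assumption \ref{assumption:const_xcorr}, leaving the deterministic signal--signal correlation plus the two signal--noise cross terms.

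The final step is to argue that the surviving expectation is maximized exactly on the signal's symmetry axes. The signal--signal term equals the noiseless symmetry numerator of $A^2_X$, which by Lemma \ref{th:argmax_alpha} (applied to each of the axes $\alpha + k\pi/2$) attains its maximum precisely for $\theta \in \{(\pi/2-\alpha)+k\pi/2\}$, where $A^2_{X,\theta}$ becomes perfectly vertically symmetric, and is strictly smaller at every other $\theta$. I expect the main obstacle to be the two signal--noise cross terms $\mathbb{E}[\langle \widehat{A^2_{X,\theta}}, \underline{\widehat{A^2_{N,\theta}}}\rangle]$ (and its mirror), which Assumption \ref{assumption:const_xcorr} does not directly annihilate: to close the argument cleanly one must either strengthen the noise model to be zero-mean or uncorrelated with the signal, so that these terms vanish, or show that they contribute only a $\theta$-independent offset to $\mathbb{E}[\psi(A^2_\theta)]$ and therefore do not move the argmax. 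Handling the expectation of the normalized coefficient rigorously, rather than treating the denominator as constant, is the secondary technical point; I would discharge it via the energy-preservation remark above, which is consistent with the continuous, high-resolution regime assumed throughout Section \ref{sec:perfect_silh}.
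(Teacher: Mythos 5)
Your proposal is correct and follows essentially the same route as the paper's proof: decompose $A^2 = A^2_{X} + A^2_{N}$ (Lemma \ref{lemma:A_symm_plus_blob}), treat the correlation denominator as $\theta$-independent, expand the centered numerator into signal--signal, signal--noise, and noise--noise terms, eliminate the noise--noise term via Assumption \ref{assumption:const_xcorr}, and apply Lemma \ref{th:argmax_alpha} together with Corollary \ref{cor:A_symm_pi/2} to the surviving deterministic term. The only divergence is the signal--noise cross terms you flag as an obstacle: the paper dispatches them by asserting that the spatially mean-subtracted noise has zero expectation at every pixel (``by definition of expectation''), which silently conflates spatial mean subtraction with statistical zero mean---i.e., the paper implicitly adopts exactly the strengthened zero-mean noise model you identify as the needed fix, so your concern points at a real unstated assumption in the paper rather than a gap in your own plan.
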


\begin{proof}
    From Lemma \ref{lemma:A_symm_plus_blob}, the symmetry score $\psi$ in Equation \ref{eq:alpha_argmax} can be written as:

    \begin{align}
        \psi(A^2_\theta) &= \psi(A^2_{\theta,X} + A^2_{\theta,N})\\
        &= \chi(A^2_{\theta,X} + A^2_{\theta,N},\underline{A^2_{\theta,X}} + \underline{A^2_{\theta,N}}) \label{eq:psi_2}
    \end{align}
    
    We are interested in studying $\mathbb{E}\left[\psi(A^2_\theta)\right]$ as a function of $\theta$. Applying Definition \ref{def:psi_I}, we have:

    \begin{align}
    \label{eq:psi_A^2_theta}
        \psi(A^2_\theta) &= \dfrac{1}{\xi_{X,N}} \sum_{m=1}^N \sum_{n=1}^N \left(A^2_{\theta,X}(m,n) + A^2_{\theta,N}(m,n) - \widebar{A_\theta^2} \right) \nonumber \\ 
        & \quad\quad\quad\quad\quad\quad\quad \left( \underline{A^2_{\theta,X}}(m,n) + \underline{A^2_{\theta,N}}(m,n) - \widebar{A_\theta^2} \right)
    \end{align}

    where $\xi_{X,N} \in \mathbb{R}$ is the cross-correlation denominator in Equation \ref{eq:chi}, not varying with $\theta$ and hence considered constant here, and $\widebar{A_\theta^2}$ is the mean pixel intensity of $A^2_\theta$ (as well as $\underline{A^2_\theta}$). From linearity, $\widebar{A_\theta^2}$ can be written as

    \begin{equation}
        \widebar{A_\theta^2} = \widebar{A_{\theta,X}^2} + \widebar{A_{\theta,N}^2}
    \end{equation}
    
    where $\widebar{A_{\theta,X}^2}$ and $\widebar{A_{\theta,N}^2}$ are the mean pixel intensities of $A_{\theta,X}^2$ and $A_{\theta,N}^2$, respectively. If we define the mean-subtracted images $\grave{A}^2_{\theta,X}$ and $\grave{A}^2_{\theta,N}$ such that
    
        \begin{equation}
        \grave{A}^2_{\theta,X}(m,n) =  A^2_{\theta,X}(m,n) - \widebar{A_{\theta,X}^2}
    \end{equation}

    \begin{equation}
        \grave{A}^2_{\theta,N}(m,n) =  A^2_{\theta,N}(m,n) - \widebar{A_{\theta,N}^2}
    \end{equation}
    
then, Equation \ref{eq:psi_A^2_theta} can be rewritten as

    \begin{align}
        \psi(A^2_\theta) &= \dfrac{1}{\xi_{X,N}} \sum_{m=1}^N \sum_{n=1}^N \left[\grave{A}^2_{\theta,X}(m,n) + \grave{A}^2_{\theta,N}(m,n) \right] \left[ \underline{ \grave{A}^2_{\theta,X}}(m,n) + \underline{\grave{A}^2_{\theta,N}}(m,n) \right]\\
        &= \dfrac{1}{\xi_{X,N}} \sum_{m=1}^N \sum_{n=1}^N \left[ \grave{A}^2_{\theta,X}(m,n) \underline{\grave{A}^2_{\theta,X}}(m,n) + \grave{A}^2_{\theta,X}(m,n) \underline{\grave{A}^2_{\theta,N}}(m,n) + \nonumber \right. \\
        & \quad\quad\quad\quad\quad\quad\quad \left. \grave{A}^2_{\theta,N}(m,n) \underline{\grave{A}^2_{\theta,X}}(m,n) + \grave{A}^2_{\theta,N}(m,n) \underline{\grave{A}^2_{\theta,N}}(m,n) \right]
    \end{align}

Using the linearity of expectation, we can write:

\begin{align}
\label{eq:E[psi]}
    \mathbb{E}\left[ \psi(A^2_\theta) \right] &= \dfrac{1}{\xi_{X,N}} \sum_{m=1}^N \sum_{n=1}^N \mathbb{E}\left[ \grave{A}^2_{\theta,X}(m,n) \underline{\grave{A}^2_{\theta,X}}(m,n) \right] + \mathbb{E}\left[ \grave{A}^2_{\theta,X}(m,n) \underline{\grave{A}^2_{\theta,N}}(m,n) \right] + \nonumber \\
        & \quad\quad\quad\quad\quad\quad\quad \mathbb{E}\left[ \grave{A}^2_{\theta,N}(m,n) \underline{\grave{A}^2_{\theta,X}}(m,n) \right] + \mathbb{E}\left[ \grave{A}^2_{\theta,N}(m,n) \underline{\grave{A}^2_{\theta,N}}(m,n) \right]
\end{align}

The first term is the product between the deterministic, symmetric image and its reflected counterpart; hence:

\begin{equation}
    \mathbb{E}\left[ \grave{A}^2_{\theta,X}(m,n) \underline{\grave{A}^2_{\theta,X}}(m,n) \right] = \grave{A}^2_{\theta,X}(m,n) \underline{\grave{A}^2_{\theta,X}}(m,n)
\end{equation}

The second and third terms are the product between the signal and the random-noise component. Since $\underline{\grave{A}^2_{\theta,N}}$ is the mean-subtracted random variable, by definition of expectation we have:

\begin{align}
    \mathbb{E}\left[ \grave{A}^2_{\theta,X}(m,n) \underline{\grave{A}^2_{\theta,N}}(m,n) \right] &= \grave{A}^2_{\theta,X}(m,n)\, \cdot \mathbb{E}\left[ 
 \underline{\grave{A}^2_{\theta,N}}(m,n) \right] \\
 &= \grave{A}^2_{\theta,X}(m,n) \cdot 0 = 0
\end{align}

\begin{align}
    \mathbb{E}\left[ \grave{A}^2_{\theta,N}(m,n) \underline{\grave{A}^2_{\theta,X}}(m,n) \right] &= \mathbb{E}\left[ \grave{A}^2_{\theta,N}(m,n) \right] \cdot \underline{\grave{A}^2_{\theta,X}}(m,n) \\
 &= 0 \cdot \underline{\grave{A}^2_{\theta,X}}(m,n) = 0
\end{align}

Lastly, the fourth term is the product between the random-noise component and its reflected counterpart. Using Assumption \ref{assumption:const_xcorr}, we can write:

\begin{equation}
    \mathbb{E}\left[ \grave{A}^2_{\theta,N}(m,n) \underline{\grave{A}^2_{\theta,N}}(m,n) \right] = 0
\end{equation}

Combining the expectation results from each term, Equation \ref{eq:E[psi]} becomes:

\begin{align}
\label{eq:exp_psi}
    \mathbb{E}\left[ \psi(A^2_\theta) \right] &= \dfrac{1}{\xi_{X,N}} \sum_{m=1}^N \sum_{n=1}^N \grave{A}^2_{\theta,X}(m,n) \underline{\grave{A}^2_{\theta,X}}(m,n)\\
    &= \label{eq:exp_psi_2} \dfrac{\xi_X}{\xi_{X,N}} \psi(A_{\theta,X}^2)
\end{align}

where $\xi_X \in \mathbb{R}$ is the cross-correlation denominator in the expression of $\psi(A_{\theta,X}^2)$. The term $\xi_X/\xi_{X,N}$ is constant and hence does not alter the maxima in Equation \ref{eq:exp_psi_2}. $\psi(A_{\theta,X}^2)$ is the symmetry score of the deterministic component $A_{\theta,X}^2$, which is symmetric with respect to $\alpha$. Then, by applying Lemma \ref{th:argmax_alpha} and Corollary \ref{cor:A_symm_pi/2} to $\psi(A_{\theta,X}^2)$, we observe that Theorem \ref{th:alpha_argmax_noise} holds true.
\end{proof}

Theorem \ref{th:alpha_argmax_noise} establishes the theoretical foundation for our proposed pole estimation algorithm. Specifically, it demonstrates that the DFT amplitude spectrum enables extraction of the symmetric component from noise-corrupted silhouette stacks, in principle providing robust estimates of the axis-of-symmetry angle $\alpha$ and, consequently, the pole projection $\boldsymbol{\omega}_\mathrm{proj}$.

\subsubsection{Pole Ambiguities}
\label{sec:pole_ambig}

A silhouette-stack image with reflective symmetry about the projected-pole direction exhibits symmetry with respect to angles $\alpha + k\pi\,,k\in\mathbb{Z}$, creating a two-fold ambiguity in pole direction determination. This ambiguity is further compounded in the DFT amplitude spectrum, which displays reflective symmetry about both the original direction and its orthogonal direction, as demonstrated by Corollary \ref{cor:A_symm_pi/2}. The resulting symmetry with respect to angles $\alpha + k\pi/2\,,k\in\mathbb{Z}$ introduces a four-fold ambiguity in pole direction estimation.
Resolving these ambiguities is essential for accurate pole estimation. Several practical approaches can address this challenge. Visual inspection of imagery provides one direct solution. Alternatively, image-processing techniques can serve as initialization steps. Feature-based approaches can estimate surface point motion relative to the camera, with dense optical flow \cite{kroeger2016fast} being suitable for low-resolution images and feature tracking \cite{lucas1981iterative} for high-resolution cases. Under a hovering-camera model with known inertial attitude (Section \ref{sec:assumptions}), these feature-based algorithms can provide direction-of-motion measurements \cite{christian2021image} to identify the most probable pole hypothesis based on surface-relative camera motion.
In many practical scenarios, a-priori pole direction information is available from ground-based lightcurve observations \cite{kaasalainen2001optimization}, potentially eliminating the need for initialization procedures. An alternative approach involves maintaining multiple pole hypotheses throughout the estimation process, though this increases computational complexity. For the subsequent analysis, we assume the actual pole direction has been successfully identified from among the four possibilities.

\subsection{Pole Triangulation}
\label{sec:pole_triangulation}

Prior sections established the theoretical framework for estimating pole projections onto individual camera planes, where each projection $\boldsymbol{\omega}_\mathrm{proj}$ and its associated axis-of-symmetry angle $\alpha$ correspond to a specific hovering-camera view. However, determining the complete three-dimensional pole orientation $\boldsymbol{\omega}$, which has two degrees of freedom, requires multiple projections. We propose a method called \textit{pole triangulation} that resolves the pole direction by utilizing changes in camera-boresight orientation, or equivalently, changes in the camera plane. This approach synthesizes multiple in-plane angle estimates $\alpha$ obtained from different camera-boresight orientations to serve as indirect measurements of $\boldsymbol{\omega}$. While approaching a space object, natural variations in camera latitude typically occur, providing opportunities to capture silhouette image sets from multiple viewing orientations and facilitating effective pole triangulation through observer motion. We formulate pole triangulation as a least-squares problem, as described below.

\begin{theorem}
    \label{th:pole_triang}
    Let $\alpha_1, \dots, \alpha_\aleph$ be a set of angles describing the projected-pole directions (see Definition \ref{def:alpha=atan2}) observed from the corresponding hovering camera-view set $ \mathcal{V}_1, \dots, \mathcal{V}_\aleph$. Also let $(\mathbf{i}_{\mathcal{C}_j},\mathbf{j}_{\mathcal{C}_j},\mathbf{k}_{\mathcal{C}_j})$ be the unit vectors representing the x, y, and z axes of the $j$-th camera reference frame $\mathcal{C}_j$. Then, the 3D pole direction $\boldsymbol{\omega}$ can be estimated by solving the least-squares problem:

    \begin{equation}
    \label{eq:A omega = 0}
        M \,\boldsymbol{\omega} = 0
    \end{equation}

    where $M\in \mathbb{R}^{2\aleph \times 3}$ is given by

    \begin{equation}
    \label{eq: A = sin_alpha}
        M = \left[ \mathrm{sin}(\alpha_1)\mathbf{i}^\top_{\mathcal{C}_1},\, - \mathrm{cos}(\alpha_1)\mathbf{j}^\top_{\mathcal{C}_1},\, \dots,\, \mathrm{sin}(\alpha_\aleph)\mathbf{i}^\top_{\mathcal{C}_\aleph},\, - \mathrm{cos}(\alpha_\aleph)\mathbf{j}^\top_{\mathcal{C}_\aleph} \right]^\top.
    \end{equation}
    
\end{theorem}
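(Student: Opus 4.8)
The plan is to convert each scalar measurement $\alpha_j$ into a homogeneous linear constraint on $\boldsymbol{\omega}$, stack the per-view constraints into $M$, and then show that the constrained minimizer of $\|M\boldsymbol{\omega}\|$ is the asserted least-squares estimate. I would first turn the measured angle into geometry: by Definition~\ref{def:alpha=atan2}, $\alpha_j$ fixes the projected-pole direction in the $j$-th image plane as $\boldsymbol{\omega}_{\mathrm{proj},j}=[-\sin\alpha_j,\,-\cos\alpha_j]^\top$ (consistent with the convention that $\alpha_j=0$ points along $-v$). Under the pointing assumption of Section~\ref{sec:assumptions} the pole axis passes through the center of mass, which images at the principal point, so the projected pole is the image line through that point along $\boldsymbol{\omega}_{\mathrm{proj},j}$. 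Using the pinhole model (Equations~\ref{eq:u_Cp}--\ref{eq:C_nu}) and Definition~\ref{def:omega_proj}, I would evaluate the tangent of this line at the principal point to show that the image-plane projection of $\boldsymbol{\omega}$ is parallel to $[\,\mathbf{i}_{\mathcal{C}_j}^\top\boldsymbol{\omega},\ \mathbf{j}_{\mathcal{C}_j}^\top\boldsymbol{\omega}\,]^\top$, using that the camera axes expressed in $\mathcal{B}$ extract the camera-frame components of $\boldsymbol{\omega}$, i.e. $\mathbf{i}_{\mathcal{C}_j}^\top\boldsymbol{\omega}=\omega_{\mathcal{C}_j,x}$ and $\mathbf{j}_{\mathcal{C}_j}^\top\boldsymbol{\omega}=\omega_{\mathcal{C}_j,y}$.

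Second, I would encode the parallelism between $\boldsymbol{\omega}_{\mathrm{proj},j}$ and the image-plane projection of $\boldsymbol{\omega}$ as a vanishing 2D cross product, equivalently as the coplanarity condition $\boldsymbol{\omega}^\top(\mathbf{k}_{\mathcal{C}_j}\times\boldsymbol{\omega}_{\mathrm{proj},j})=0$ stating that $\boldsymbol{\omega}$ lies in the back-projected plane of the projected-pole line. Either route produces a homogeneous linear equation whose coefficient row is a combination of $\mathbf{i}_{\mathcal{C}_j}^\top$ and $\mathbf{j}_{\mathcal{C}_j}^\top$ weighted by $\sin\alpha_j$ and $\cos\alpha_j$; assembling these rows over $j=1,\dots,\aleph$ yields the matrix $M$ of Equation~\ref{eq: A = sin_alpha} and the system $M\boldsymbol{\omega}=0$.

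Finally, I would address solvability. The system is homogeneous and invariant under $\boldsymbol{\omega}\mapsto c\boldsymbol{\omega}$, matching the fact that $\alpha_j$ determines the pole only up to scale and sign; imposing $\|\boldsymbol{\omega}\|=1$ fixes the scale, and the residual sign ambiguity is resolved as in Section~\ref{sec:pole_ambig}. In the noiseless, non-degenerate case the true pole lies in the null space of $M$; under measurement noise the minimizer of $\|M\boldsymbol{\omega}\|^2$ subject to $\|\boldsymbol{\omega}\|=1$ is the right-singular vector of $M$ with smallest singular value (equivalently the smallest-eigenvalue eigenvector of $M^\top M$), which is exactly the least-squares estimate claimed by the theorem.

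The main obstacle is twofold. The first difficulty is bookkeeping: carrying the pinhole projection, the normalization in Definition~\ref{def:omega_proj}, and the $\mathrm{atan2}$ sign conventions through to the precise trigonometric coefficients and row layout of $M$ in Equation~\ref{eq: A = sin_alpha}. The second, more substantive difficulty is characterizing $\mathrm{rank}(M)$ to certify uniqueness: I must identify the degenerate configurations---a single view ($\aleph=1$), views whose back-projected planes coincide (parallel or antipodal boresights), or a pole aligned with the common boresight---for which $\mathrm{rank}(M)\le 1$ and triangulation fails, and then show that two views with sufficiently distinct boresight directions give $\mathrm{rank}(M)=2$, hence a one-dimensional null space and a pole that is unique up to the sign ambiguity already noted.
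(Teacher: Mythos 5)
Your outline follows the same route as the paper's own proof---convert each $\alpha_j$ into a homogeneous linear constraint on $\boldsymbol{\omega}$ through the camera-frame components $\mathbf{i}_{\mathcal{C}_j}^\top\boldsymbol{\omega}$, $\mathbf{j}_{\mathcal{C}_j}^\top\boldsymbol{\omega}$, stack the constraints, and take the smallest right-singular vector---and your rank/degeneracy discussion is a useful addition the paper omits. However, there is a genuine gap at the hand-off between your second step and the theorem: the parallelism (vanishing 2D cross-product) condition you derive produces exactly \emph{one} scalar equation per view, i.e.\ a single row proportional to $\sin(\alpha_j)\,\mathbf{i}_{\mathcal{C}_j}^\top-\cos(\alpha_j)\,\mathbf{j}_{\mathcal{C}_j}^\top$, hence an $\aleph\times 3$ system. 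The matrix of Equation \ref{eq: A = sin_alpha} is $2\aleph\times 3$: it enters $\sin(\alpha_j)\,\mathbf{i}_{\mathcal{C}_j}^\top$ and $-\cos(\alpha_j)\,\mathbf{j}_{\mathcal{C}_j}^\top$ as two \emph{separate} rows, which enforces $\sin(\alpha_j)\,(\mathbf{i}_{\mathcal{C}_j}^\top\boldsymbol{\omega})=0$ and $\cos(\alpha_j)\,(\mathbf{j}_{\mathcal{C}_j}^\top\boldsymbol{\omega})=0$ individually. These are not equivalent to your constraint, and the discrepancy is not bookkeeping you can absorb into conventions: substituting the noiseless relations $\mathbf{i}_{\mathcal{C}_j}^\top\boldsymbol{\omega}=\rho_j\cos\alpha_j$, $\mathbf{j}_{\mathcal{C}_j}^\top\boldsymbol{\omega}=\rho_j\sin\alpha_j$ used in the paper's own proof, each view leaves a residual of magnitude $\rho_j|\sin\alpha_j\cos\alpha_j|\neq 0$ in general, so the true pole is \emph{not} in the null space of the stated $M$. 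Worse, for a single view $M_1^\top M_1=\sin^2(\alpha_1)\,\mathbf{i}_{\mathcal{C}_1}\mathbf{i}_{\mathcal{C}_1}^\top+\cos^2(\alpha_1)\,\mathbf{j}_{\mathcal{C}_1}\mathbf{j}_{\mathcal{C}_1}^\top$, whose minimizing unit eigenvector is the boresight $\mathbf{k}_{\mathcal{C}_1}$ regardless of $\alpha_1$; the least-squares solution of the stated system is therefore biased toward the camera boresights.

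The source of the mismatch lies in the paper itself, and your write-up inherits it by asserting that your rows ``assemble into'' Equation \ref{eq: A = sin_alpha}. The paper's proof ``rearranges'' the scalar relation $\tan\alpha_j=(\boldsymbol{\omega}^\top\mathbf{j}_{\mathcal{C}_j})/(\boldsymbol{\omega}^\top\mathbf{i}_{\mathcal{C}_j})$ into the vector equation $(\boldsymbol{\omega}^\top\mathbf{i}_{\mathcal{C}_j})\sin(\alpha_j)\,\mathbf{i}_{\mathcal{C}_j}-(\boldsymbol{\omega}^\top\mathbf{j}_{\mathcal{C}_j})\cos(\alpha_j)\,\mathbf{j}_{\mathcal{C}_j}=0$ and then reads off its two components as independent rows; since $\mathbf{i}_{\mathcal{C}_j}\perp\mathbf{j}_{\mathcal{C}_j}$, that vector equation forces both coefficients to vanish separately and is strictly stronger than the scalar relation, so the rearrangement is invalid. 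Your single-row constraint (precisely the sum of the paper's two rows) is the correct one: it is satisfied exactly by the true pole, it supports the rank analysis you outline, and two views with distinct boresights generically yield a one-dimensional null space. What your argument actually proves, then, is a corrected statement with $M\in\mathbb{R}^{\aleph\times 3}$ and one row per view; the claim that these rows reproduce the $M$ of Equation \ref{eq: A = sin_alpha} is the gap, and it cannot be closed because that $M$ does not satisfy $M\boldsymbol{\omega}=0$ at the truth. A final, smaller point: with the convention of Definition \ref{def:alpha=atan2} one gets $\boldsymbol{\omega}_\mathrm{proj}=[-\sin\alpha_j,\,-\cos\alpha_j]^\top$, as you noted, so the correct row is $\cos(\alpha_j)\,\mathbf{i}_{\mathcal{C}_j}^\top-\sin(\alpha_j)\,\mathbf{j}_{\mathcal{C}_j}^\top$; the paper's proof implicitly measures $\alpha_j$ from the horizontal image axis, which swaps the sine and cosine, so any corrected statement must also reconcile this convention clash.
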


\begin{proof}
    Observe that the $j$-th pole projection along the camera plane, $\boldsymbol{\omega}'_{\mathrm{proj},j}$ (Equation \ref{eq:omega_proj=omega'/norm_omega'}), can be rewritten as:

    \begin{equation}
        \boldsymbol{\omega}'_{\mathrm{proj},j} = \left( \boldsymbol{\omega}^\top \mathbf{i}_{\mathcal{C}_j} \right)\mathbf{i}_{\mathcal{C}_j} + \left( \boldsymbol{\omega}^\top \mathbf{j}_{\mathcal{C}_j} \right)\mathbf{j}_{\mathcal{C}_j}
    \end{equation}

    Then, the in-plane pole-projection angle $\alpha_j$ can be related to the projected-pole components as:

    \begin{equation}
        \mathrm{tan}(\alpha_j) = \dfrac{\mathrm{sin}(\alpha_j)}{\mathrm{cos}(\alpha_j)} = \dfrac{\left( \boldsymbol{\omega}^\top \mathbf{j}_{\mathcal{C}_j} \right)\mathbf{j}_{\mathcal{C}_j}}{\left( \boldsymbol{\omega}^\top \mathbf{i}_{\mathcal{C}_j} \right)\mathbf{i}_{\mathcal{C}_j}}
    \end{equation}

    which can be rearranged as

    \begin{equation}
    \label{eq:sin-alpha_i_omega_i}
        \left( \boldsymbol{\omega}^\top \mathbf{i}_{\mathcal{C}_k} \right)\mathrm{sin}(\alpha_j)\mathbf{i}_{\mathcal{C}_j} - \left( \boldsymbol{\omega}^\top \mathbf{j}_{\mathcal{C}_j} \right)\mathrm{cos}(\alpha_j)\mathbf{j}_{\mathcal{C}_j} = 0.
    \end{equation}

    Rewriting Equation \ref{eq:sin-alpha_i_omega_i} in matrix form, we obtain:

    \begin{equation}
        \begin{bmatrix}
            \mathrm{sin}(\alpha_j)\mathbf{i}^\top_{\mathcal{C}_j} \\
            - \mathrm{cos}(\alpha_j)\mathbf{j}^\top_{\mathcal{C}_j}
        \end{bmatrix} \boldsymbol{\omega} = 0.
    \end{equation}

    Let $M_j=\left[ \mathrm{sin}(\alpha_j)\mathbf{i}^\top_{\mathcal{C}_j}, - \mathrm{cos}(\alpha_j)\mathbf{j}^\top_{\mathcal{C}_j} \right]^\top$. Then, the overdetermined linear system of equations in Equations \ref{eq:A omega = 0} and \ref{eq: A = sin_alpha} is obtained by vertically stacking matrices $M_1, \dots, M_\aleph$ corresponding to the angle observations $\alpha_1, \dots, \alpha_\aleph$. Applying Singular Value Decomposition (SVD)\cite{golub2013matrix}, one can write

    \begin{equation}
        M = U \, \Sigma \, V^\top
    \end{equation}

    where $U \in \mathbb{R}^{\aleph \times \aleph}$ and $V \in \mathbb{R}^{3 \times 3}$ are orthonormal matrices and $\Sigma \in \mathbb{R}^{\aleph \times 3}$ is a rectangular diagonal matrix of singular values. The estimate of $\boldsymbol{\omega}$, up to scale, is given by the last column of $V$.
    
\end{proof}

The least-squares solution from Theorem \ref{th:pole_triang} does not constrain the pole direction $\boldsymbol{\omega}$ to unit norm. Therefore, after estimation, $\boldsymbol{\omega}$ must be normalized to obtain a valid direction vector. Also observe that the hovering views $\mathcal{V}_1, \dots, \mathcal{V}_\aleph$ need not be located at the same radial distance from the target's center in order for pole triangulation to be effective.


\section{Algorithm Overview}
\label{sec:algorithm_overview}

The PoleStack algorithm builds upon the principles reported in Section \ref{sec:theor_dev} and is described below. A set of image batches $\{ \mathcal{I}_1, \dots, \mathcal{I}_\aleph \}$ is given, where the batch $\mathcal{I}_j=\{ I_{j,1}, \dots, I_{j,M_j} \}$ is collected from the hovering-camera viewset $\mathcal{V}_j$ (see Section \ref{sec:problem_formulation}). Suppose that images within a batch have been co-registered---e.g., by aligning their center of brightness---and that a silhouette image $O_{j,k}\in\mathbb{B}^{N\times N}$ (Definition \ref{def:O_img}) has been previously extracted from the corresponding image $I_{j,k}\in\mathbb{R}^{N\times N}$. The PoleStack algorithm is divided into two high-level steps:

\begin{enumerate}
    \item For each image batch $\mathcal{I}_j$, estimate the pole-projection angle $\alpha_j$ by finding the direction of maximum symmetry in the silhouette-stack image $\bar{O}_j$, using Algorithm \ref{alg:in-plane}. The angle estimate is denoted as $\hat{\alpha}_j$.
    \item Given a set of pole-projection angle estimates $\{ \hat{\alpha}_1, \dots, \hat{\alpha}_\aleph \}$, estimate the 3D pole direction $\boldsymbol{\omega}$ by solving Equations \ref{eq:A omega = 0} and \ref{eq: A = sin_alpha} (pole triangulation). The pole-direction estimate is denoted as $\hat{\boldsymbol{\omega}}$.
\end{enumerate}

\begin{algorithm}
\caption{Pole-projection angle estimation}
\label{alg:in-plane}
\begin{algorithmic}[1]
\Require $\{O_{j,1}, \dots, O_{j,M}\},\, O_{j,k}\in\mathbb{B}^{N\times N}$ \Comment{Observed silhouette images from camera viewset $\mathcal{V}_j$ (Definition \ref{def:O_img})}
\Require $\tau\in\mathbb{R}$ \Comment{Cutoff frequency for the DFT amplitude spectrum (Definition \ref{def:circ_cropping})}
\Require $\Theta=\{ \theta_1, \dots, \theta_{N_\theta} \},\, \theta_i\in\mathbb{R}$ \Comment{Query angles for symmetry evaluation}
\Ensure $\hat{\alpha}$ \Comment{Pole-projection angle estimate}
\State $\bar{O}_j(m,n) \gets \sum_{k=1}^M O_{j,k}(m,n),\, m,n = 1,\dots,N$ \Comment{Compute silhouette-stack image}
\State $A \gets |\mathcal{F}(\bar{O}_j)|$ \Comment{Compute DFT amplitude spectrum (Definition \ref{def:dft})}
\State $A_\mathrm{filt} \gets \mathrm{Crop}(A; \tau)$ \Comment{Apply low-pass filter using circular crop (Definition \ref{def:circ_cropping})}
\State $E_\mathrm{filt} \gets \mathrm{log}(1 + A_\mathrm{filt}^2)$ \Comment{Compress image by computing the log-power spectrum} \label{state:log}
\For{$i = 1,\dots,N_{\theta}$}
    \State $E_{\mathrm{filt},\theta_i} \gets \mathrm{Rot}(E_\mathrm{filt}; \theta_i)$ \Comment{Rotate image $E_\mathrm{filt}$ by an angle $\theta_i$ (Definition \ref{def:I_theta})}
    \State $\beta_i \gets \psi(E_{\mathrm{filt},\theta_i})$ \Comment{Evaluate and log $i$-th symmetry score (Definition \ref{def:psi_I})}
\EndFor
\State $\hat{\alpha} \gets \argmax\limits_{\Theta} \{ \beta_1, \dots, \beta_{N_\theta} \}$ \Comment The estimate $\hat{\alpha}$ is the angle $\theta_i$ which maximizes reflective symmetry

\end{algorithmic}
\end{algorithm}


Algorithm \ref{alg:in-plane} returns the direction of maximum symmetry within the silhouette-stack image. The inputs for the procedure are a set of silhouette images $\{O_{j,1}, \dots, O_{j,M}\}$, a set of axis-of-symmetry hypotheses represented by query angles $\Theta = \{ \theta_1, \dots, \theta_{N_\theta} \}$, and a user-set cutoff frequency $\tau$ for the amplitude spectrum, described below. First, individual silhouette images $O_{j,k}$ are co-added to obtain the stacked image $\bar{O}_j$. Next, the DFT amplitude spectrum of $\bar{O}_j$, $A$, is computed.

To reduce the effect of image noise on symmetry detection, a low-pass filter is then applied to the amplitude spectrum in order to preserve low-frequency components, associated with the primary symmetry information, while eliminating higher-frequency noise. In the amplitude-spectrum image, frequencies increase from the center outward, hence applying a low-pass filter is equivalent to creating a circular crop of the amplitude-spectrum image, as shown in Definition \ref{def:circ_cropping}. The circular crop only preserves pixels whose radial distance from the image center is lower than the maximum-frequency threshold, $\tau$.

Next, the dynamic range of the squared amplitude spectrum is compressed using a logarithmic transformation (see State \ref{state:log}, Algorithm \ref{alg:in-plane}). This compression step amplifies symmetric patterns while preserving the spectrum's fundamental symmetry properties through its monotonic relationship with the original amplitude values. Lastly, the symmetry score $\psi$ is evaluated across all input query angles. The pole-projection angle estimate $\hat{\alpha}$ corresponds to the query angle $\theta_i$ associated with the maximum symmetry score.


\section{Experimental Results}
\label{sec:results}

We employ numerical simulations to assess performance of the PoleStack algorithm. Section \ref{sec:in_plane_est} describes the simulation setup and results associated with pole-projection angle estimation, whereas Section \ref{sec:out_of_plane_est} evaluates 3D-pole estimation performance based on pole-triangulation.

\subsection{Pole-projection Angle Estimation}
\label{sec:in_plane_est}

We test the proposed approach for pole-projection angle estimation (Algorithm \ref{alg:in-plane}) using synthetic images of previously visited small celestial bodies. In the simulation, the observed body completes one full rotation about its pole, set as an arbitrary fixed direction with respect to the inertial frame. Observations are collected from a hovering-camera (i.e., constant-latitude and range, see Section \ref{sec:hovering_camera_model}) viewset. Images are simulated using the small body-rendering tool based on Blender Cycles presented by Villa et al\cite{villa2023image}. The simulation setup is outlined in Table \ref{tab:params_images}. We generate two different image sets using available shape models from asteroid 101955 Bennu\cite{lauretta2019unexpected} and comet 67P Churyumov-Gerasimenko (67P/C-G)\cite{thomas2015morphological}. Bennu exhibits a diamond-shaped structure with a high level of axial symmetry, whereas comet 67P is a bilobed, asymmetrical object.

We test the algorithm by simulating a large sun-phase angle (90 degrees), to assess robustness to challenging lighting conditions and substantial symmetry reduction in the stacked silhouette due to self-shadowing. Images are based on orthographic camera projection to resemble long-range observations, usually encountered throughout the target-approach phase. Two example images from the simulated sets are shown in Figure \ref{fig:img_0}. Due to the high sun phase, the observed silhouettes are corrupted by surface-shadowing effects, in particular for the 67P case as one lobe casts shadows onto the other at certain observation geometries.

\begin{table}[]
    \centering
    \caption{Simulation setup for in-plane pole estimation}
    \begin{tabular}{|c|c|c|}
        \hline
        Parameter & Value & Comments\\
        \hline
        Imaged bodies & 101955 Bennu, 67P/C–G & Different Shapes\\
        $\phi_f-\phi_0$ & $360\,\mathrm{deg}$ & Camera longitude range\\
        $\phi_k-\phi_{k-1}$ & $1\,\mathrm{deg}$ & Longitude Increments Between Images\\
        $\lambda$ & $14\,\mathrm{deg}$ & Camera latitude \\
        Sun Phase & $90\,\mathrm{deg}$ & Challenging case \\
        $\alpha$ & $20\,\mathrm{deg}$ & True in-plane pole angle\\
        Image resolution & $1024\times 1024\,\mathrm{pixels}$ & \\
        Camera projection & Orthographic & Simulating long-range images (e.g., approach)\\
        $\theta_i - \theta_{i-1}$ & $1\,\mathrm{deg}$ & Query-angle increments (see Algorithm \ref{alg:in-plane})\\
        $\tau$ & 100 pixels & Cutoff frequency for DFT amplitude spectrum\\
        Image rotation method & Nearest Neighbor & \\
        \hline
    \end{tabular}
    \label{tab:params_images}
\end{table}

\begin{figure*}[t!]
    \centering
    \begin{subfigure}[t]{0.4\textwidth}
        \centering
        \includegraphics[width=\textwidth, keepaspectratio]{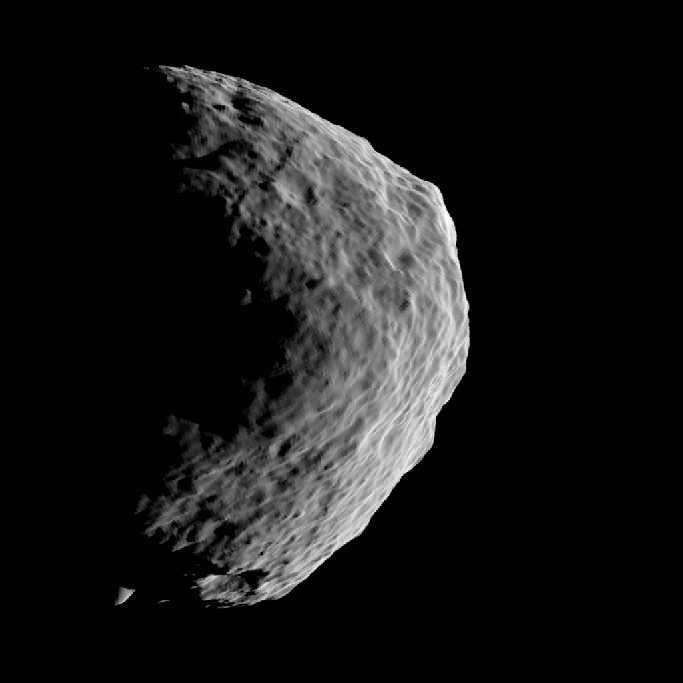}
        \caption{Asteroid Bennu}
    \end{subfigure}%
    ~ 
    \begin{subfigure}[t]{0.4\textwidth}
        \centering
        \includegraphics[width=\textwidth, keepaspectratio]{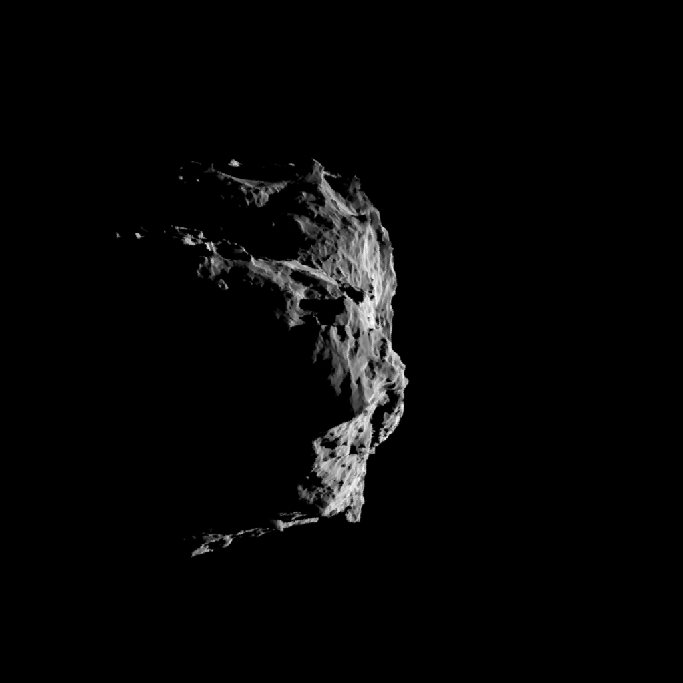}
        \caption{Comet 67P}
    \end{subfigure}
    \caption{First image from the two simulated image sets used to test pole-projection angle estimation. The object silhouettes are significantly corrupted by surface shadowing, due to the selected high sun phase (90 degrees). Images are centered around the computed brightness centroid.}
    \label{fig:img_0}
\end{figure*}

For each image set, we present results based on two cases: (1) assuming that silhouette images and the corresponding center-of-mass locations are perfectly aligned with each other, referred to as the perfect-alignment case; (2) stacking silhouette images by aligning the computed center-of-brightness locations, i.e., the brightness-centroid alignment case. We use the classical moment algorithm to compute the center of brightness\cite{owen2011methods}. Figure \ref{fig:ampl_spectr} shows the silhouette-stack images and the corresponding DFT amplitude spectra obtained for such two cases. The effect of centerfinding errors is mostly noticeable in the frequency domain. Interestingly, the errors introduced by centerfinding produce a smoothing effect on artifacts in the frequency domain, while preserving the overall symmetry about the pole direction. This phenomenon can be explained by the dilution of sharp foreground edges in the silhouette stack caused by small, random-like translational offsets between individual silhouettes. The amplitude spectra exhibit higher energy and structure at low frequency, motivating the use of a low-pass filter to improve performance. 


\begin{figure*}[htbp!]
    \centering
    \begin{subfigure}[t]{0.3\textwidth}
        \centering
        \includegraphics[width=\textwidth, keepaspectratio]{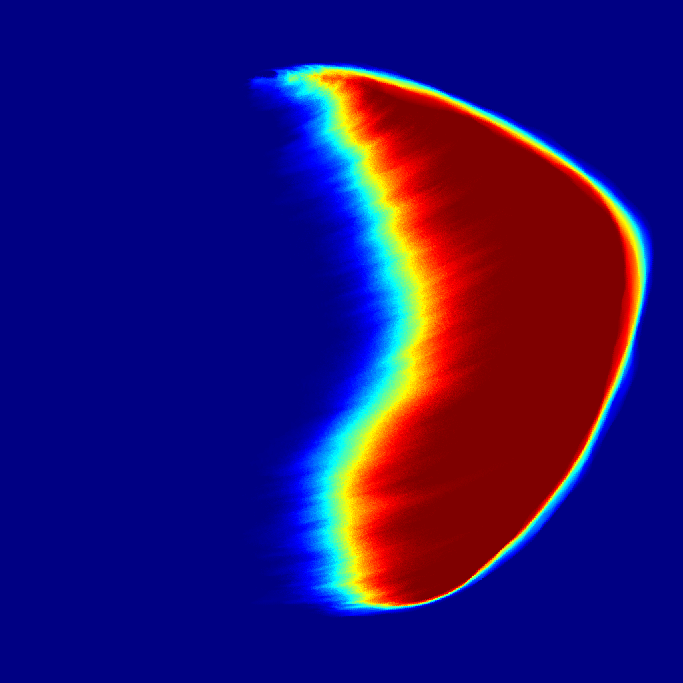}
    \end{subfigure}%
    ~ 
    \begin{subfigure}[t]{0.3\textwidth}
        \centering
        \includegraphics[width=\textwidth, keepaspectratio]{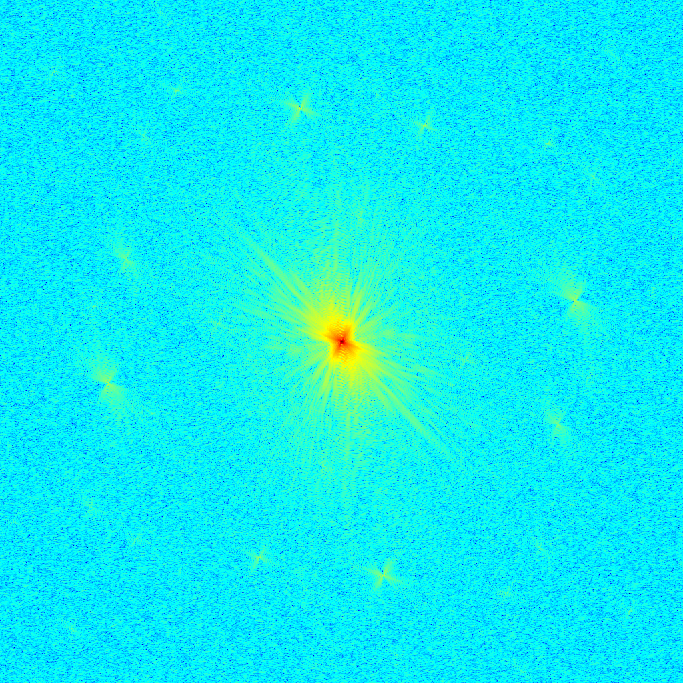}
    \end{subfigure}%
    ~
    \begin{subfigure}[t]{0.3\textwidth}
        \centering
        \includegraphics[width=\textwidth, keepaspectratio]{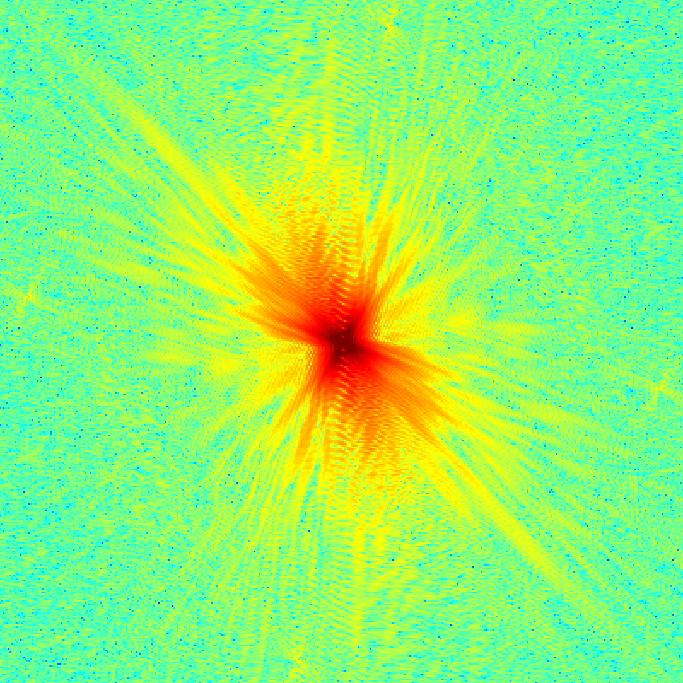}
    \end{subfigure}

    \begin{subfigure}[t]{0.3\textwidth}
        \centering
        \includegraphics[width=\textwidth, keepaspectratio]{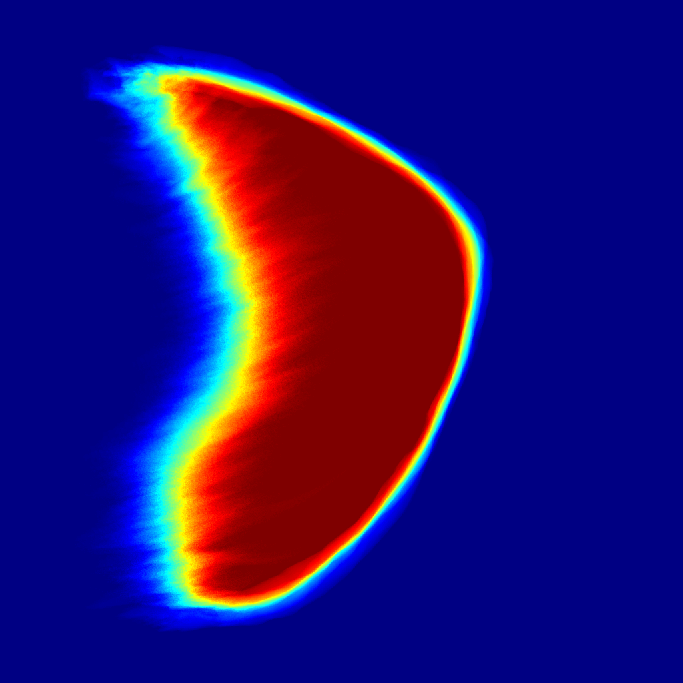}
    \end{subfigure}%
    ~ 
    \begin{subfigure}[t]{0.3\textwidth}
        \centering
        \includegraphics[width=\textwidth, keepaspectratio]{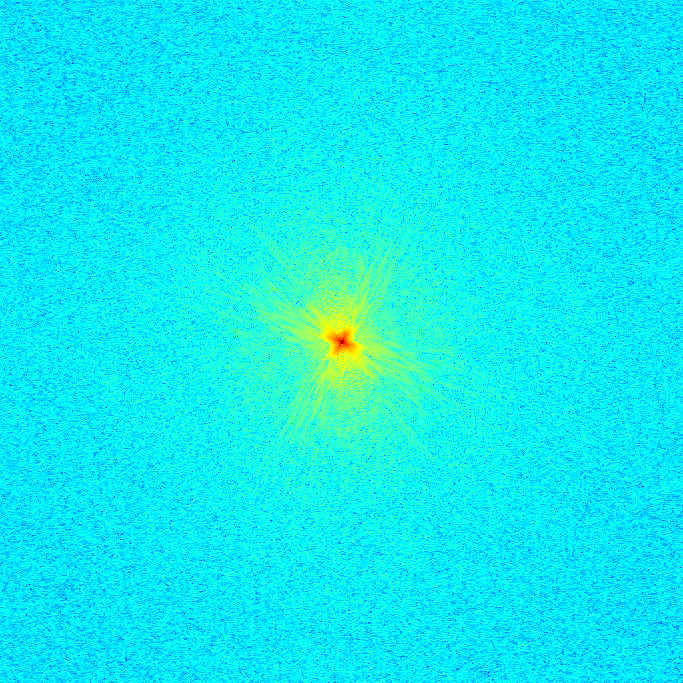}
    \end{subfigure}%
    ~
    \begin{subfigure}[t]{0.3\textwidth}
        \centering
        \includegraphics[width=\textwidth, keepaspectratio]{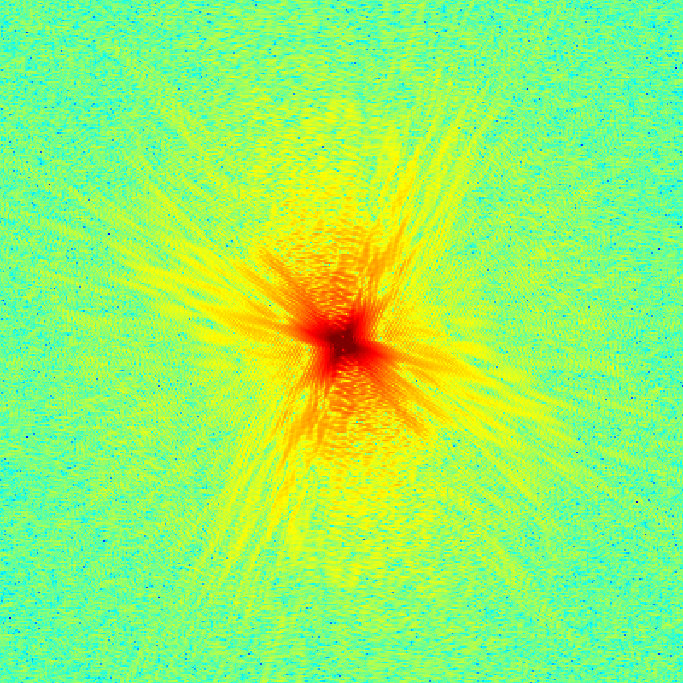}
    \end{subfigure}

    \begin{subfigure}[t]{0.3\textwidth}
        \centering
        \includegraphics[width=\textwidth, keepaspectratio]{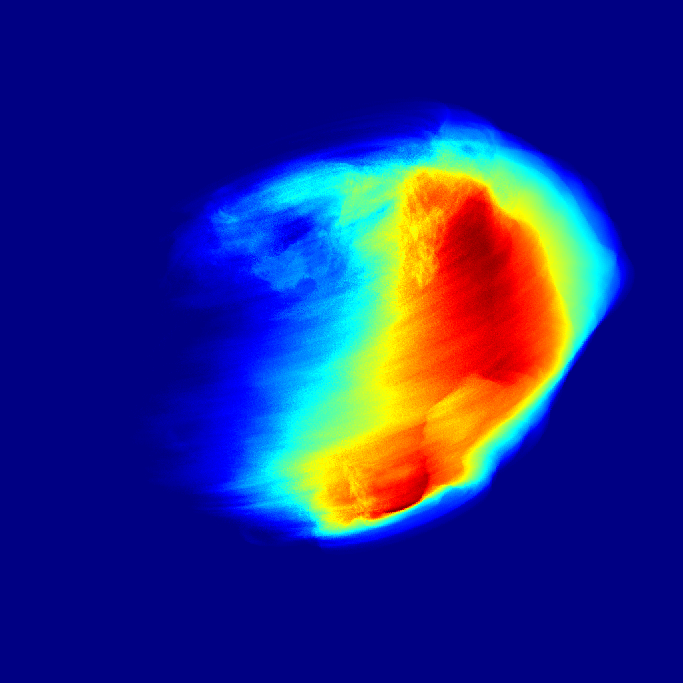}
    \end{subfigure}%
    ~ 
    \begin{subfigure}[t]{0.3\textwidth}
        \centering
        \includegraphics[width=\textwidth, keepaspectratio]{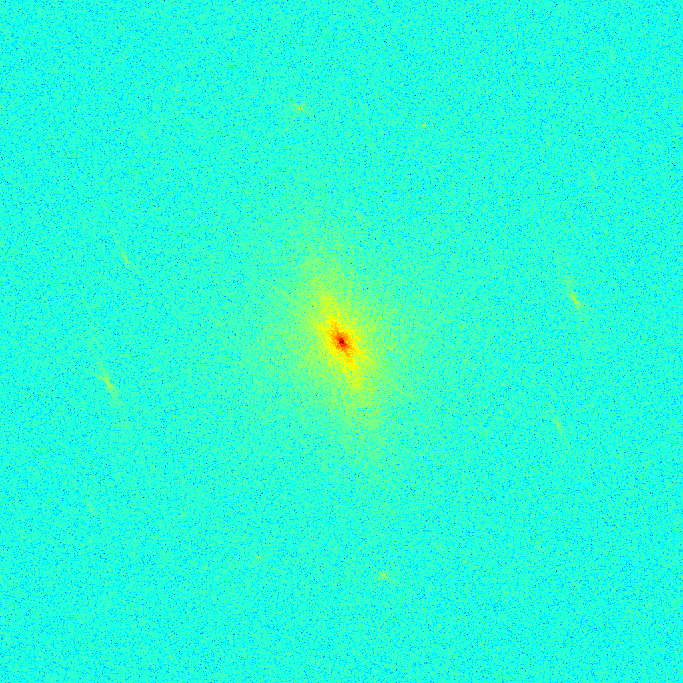}
    \end{subfigure}%
    ~
    \begin{subfigure}[t]{0.3\textwidth}
        \centering
        \includegraphics[width=\textwidth, keepaspectratio]{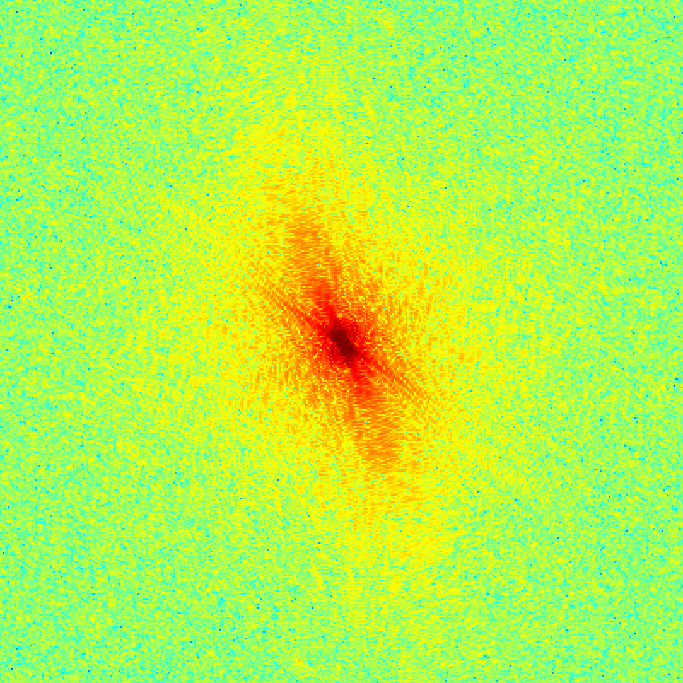}
    \end{subfigure}

    \begin{subfigure}[t]{0.3\textwidth}
        \centering
        \includegraphics[width=\textwidth, keepaspectratio]{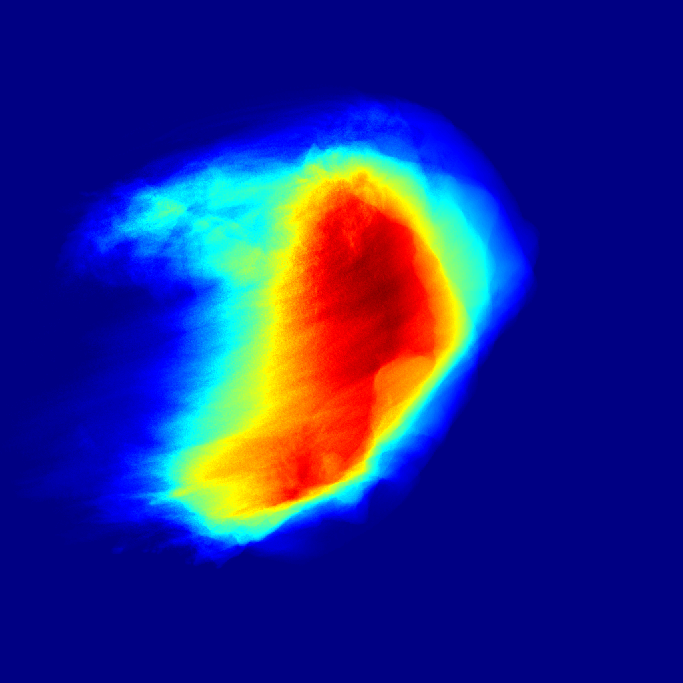}
        \caption{Silhouette-stack image ($\bar{O}$)}
    \end{subfigure}%
    ~ 
    \begin{subfigure}[t]{0.3\textwidth}
        \centering
        \includegraphics[width=\textwidth, keepaspectratio]{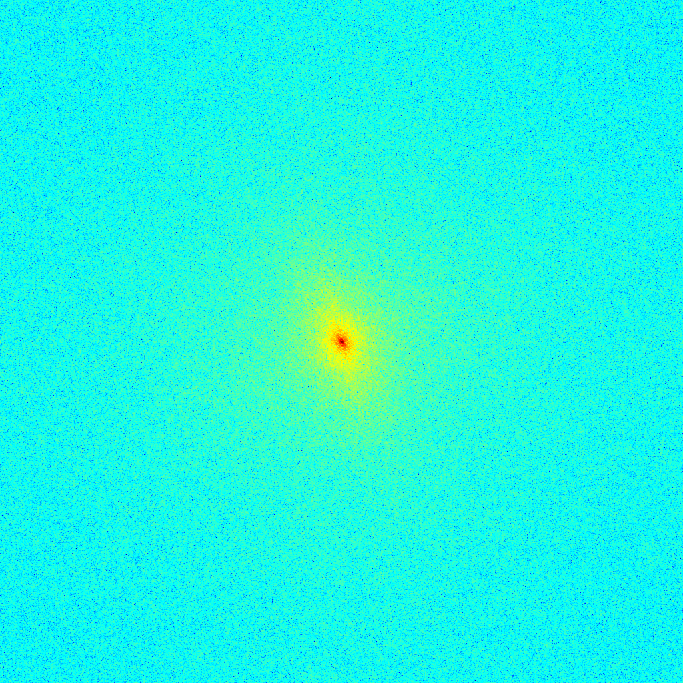}
        \caption{Log-power amplitude spectrum ($\mathrm{log}(1+A^2)$)}
    \end{subfigure}%
    ~
    \begin{subfigure}[t]{0.3\textwidth}
        \centering
        \includegraphics[width=\textwidth, keepaspectratio]{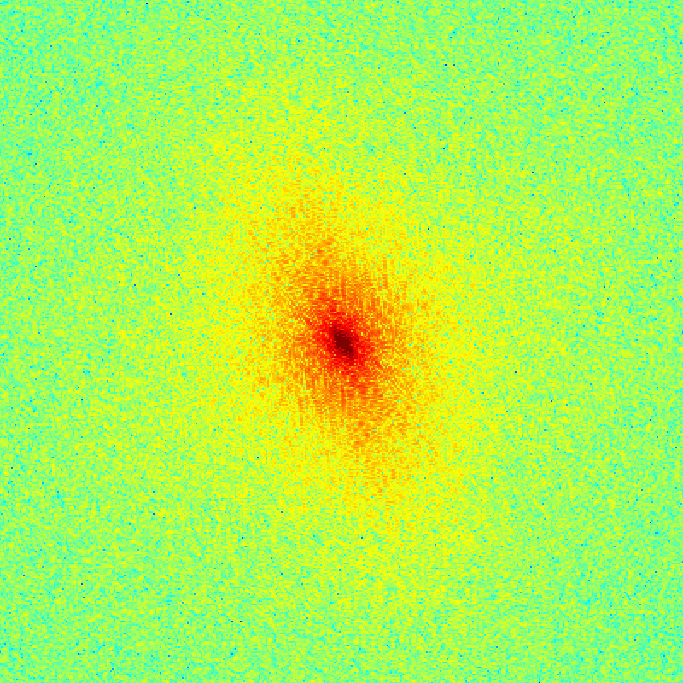}
        \caption{Compressed amplitude spectrum magnified about the center ($200\times 200$-pixel window); image clipping is used to facilitate visualization.}
    \end{subfigure}

    \caption{Silhouette-stack images in the spatial (left) and frequency (center and right) domain for different case studies: Bennu, perfect alignment (row 1); Bennu, brightness-centroid alignment (row 2); 67P, perfect alignment (row 3); 67P, brightness-centroid alignment (row 4).}
    \label{fig:ampl_spectr}
\end{figure*}

\begin{figure}
    \centering
    \includegraphics[width=0.7\linewidth]{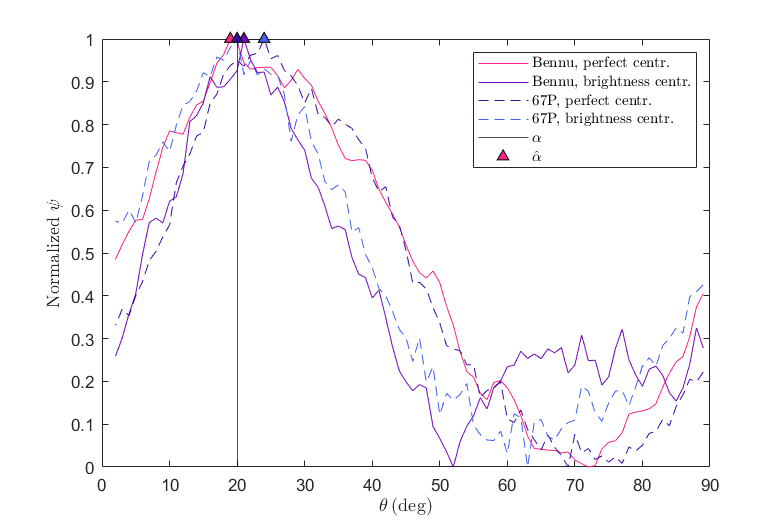}
    \caption{Pole-projection angle estimates (triangle markers) and associated normalized symmetry scores (lines). Error magnitudes span between $0\,\mathrm{deg}$ (Bennu, brightness-centroid alignment case) and $3\,\mathrm{deg}$ (67P, perfect-alignment case), up to angle-search discretization errors. The vertical line represents the true pole-projection angle.}
    \label{fig:in_plane_est}
\end{figure}

Estimation results for the pole-projection angle $\alpha$ are shown in Figure \ref{fig:in_plane_est}. For all cases, the symmetry score $\psi$ increases in the vicinity of the true value, $\alpha$. The resulting error magnitudes are between 0 deg and 3 deg, depending on the specific case. The largest error (3 deg) is obtained for the 67P, perfect-alignment case, which could be explained by the presence of more asymmetric structure as discussed in the previous paragraph. It should be noted that input query angles only span a single quadrant of the DFT amplitude spectrum, as discussed in Section \ref{sec:pole_ambig}.

\subsubsection{Effect of Data Volume Reduction}

In some cases, only low-resolution images, possibly collected across partial camera-longitude intervals, may be available. Hence, we study performance of the algorithm for pole-projection angle estimation when simultaneously reducing (1) image resolution, $N$, and (2) the camera-longitude range, $[\phi_0,\phi_f]$. We refer to the combined effect of such factors as ``data volume reduction". Image resolution is reduced to 25\% of the original value (see Table \ref{tab:params_images}), resulting in $256\times 256$-pixel frames; the camera-longitude range is constrained to a hemispherical arc, i.e., $[\phi_0,\phi_f]=[0,180]\,\mathrm{deg}$. For this experiment, the more realistic brightness-centroid alignment case is studied, and the cutoff frequency threshold is set to $\tau=126\,\mathrm{pixels}$ to preserve the entire DFT amplitude spectrum.

The resulting DFT amplitude spectra (Figure \ref{fig:ampl_resized}) show that the symmetry signal is qualitatively preserved, despite the decrease in data volume. Intriguingly, estimation errors (Figure \ref{fig:in_plane_est_resized}) are comparable to those obtained from higher image resolution and greater camera-longitude span.

\begin{figure*}[t!]
    \centering
    \begin{subfigure}[t]{0.4\textwidth}
        \centering
        \includegraphics[width=\textwidth, keepaspectratio]{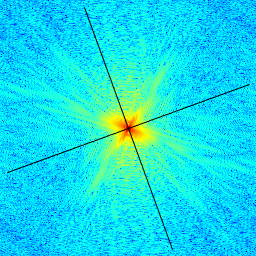}
        \caption{Asteroid Bennu}
    \end{subfigure}%
    ~ 
    \begin{subfigure}[t]{0.4\textwidth}
        \centering
        \includegraphics[width=\textwidth, keepaspectratio]{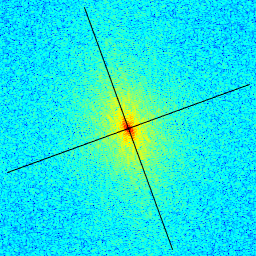}
        \caption{Comet 67P}
    \end{subfigure}
    \caption{Log-power amplitude spectra ($\mathrm{log}(1+A^2)$) for the reduced data volume case, at $256\times 256$-pixel resolution and observations collected across a hemispherical arc. The pole projection and its perpendicular axis are reported (black lines).}
    \label{fig:ampl_resized}
\end{figure*}

\begin{figure}
    \centering
    \includegraphics[width=0.7\linewidth]{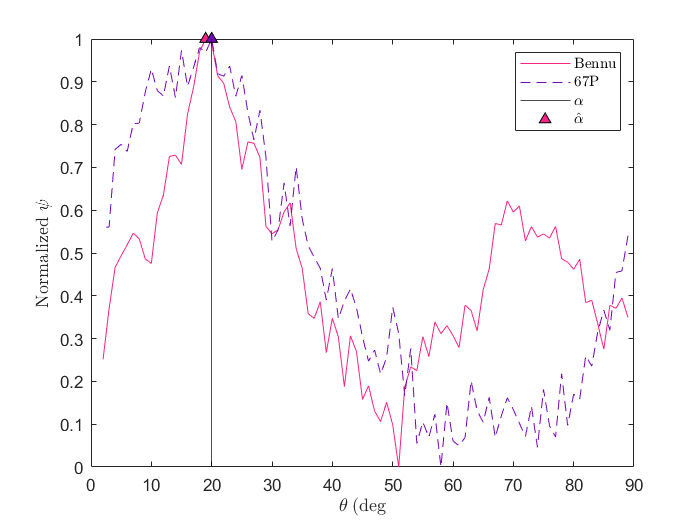}
    \caption{Pole-projection estimates for the reduced data volume case, using $256\times 256$-pixel images collected across a hemispherical arc. Pole-projection error magnitudes for the two cases are $1\,\mathrm{deg}$ (Bennu) and $0\,\mathrm{deg}$ (67P). Brightness-centroid alignment is employed for both cases.}
    \label{fig:in_plane_est_resized}
\end{figure}

\subsection{Pole Triangulation Results}
\label{sec:out_of_plane_est}

The proposed pole-triangulation method (see Section \ref{sec:pole_triangulation}) is studied and validated using a Monte Carlo simulation. We are interested in characterizing performance as a function of camera-viewpoint changes across observations. A total of $10^5$ Monte Carlo iterations are performed. For each iteration, the orientation of the true pole $\boldsymbol{\omega}$ and the camera axes associated with $\aleph$ camera views are randomly sampled from the unit sphere. Synthetic measurements of pole-projection angles $\{\hat{\alpha}_1, \dots, \hat{\alpha}_\aleph\}$ collected from the corresponding camera views are used as inputs for pole triangulation. These measurements are generated by perturbing the true pole-projection angle $\alpha$ by a randomly sampled, normally distributed error component with zero mean and variance $\sigma_\alpha^2$; random samples exceeding the $3\sigma_\alpha$ bounds are discarded and a new value is sampled. The pole estimate $\hat{\boldsymbol{\omega}}$ is computed using pole triangulation, according to Equations \ref{eq:A omega = 0} and \ref{eq: A = sin_alpha}. Results are presented for various values of $\sigma^2_\alpha$ and $\aleph$.

For each camera-view pair, viewpoint changes are quantified by the angular separation between camera-boresight axes, defined by the angle $\beta_{ij} \in [0,\pi]$ given by:

\begin{equation}
    \beta_{ij} = \mathrm{cos}^{-1}(\mathbf{k}_{\mathcal{C}_i}^\top \mathbf{k}_{\mathcal{C}_j}),\, i,j = 1,\dots,\aleph
\end{equation}

where $\mathbf{k}_{\mathcal{C}_i}$ and $\mathbf{k}_{\mathcal{C}_j}$ are the camera-boresight axes of the $i$-th and $j$-th views, respectively.

We quantify pole-estimation errors based as the angular separation $\epsilon\in[0,\pi]$ between the true ($\boldsymbol{\omega}$) and estimated ($\hat{\boldsymbol{\omega}}$) pole directions, i.e.:

\begin{equation}
    \epsilon = \mathrm{cos}^{-1}(\hat{\boldsymbol{\omega}}^\top\boldsymbol{\omega})
\end{equation}

Monte Carlo results are reported in Figures \ref{fig:FPE_triang_mc_sigma_1deg}-\ref{fig:FPE_triang_views}. Figure \ref{fig:FPE_triang_mc_sigma_1deg} shows pole estimation errors as a function of camera-view changes when using two camera views (i.e., $\aleph=2$) and $\sigma_\alpha=1^\circ$. This measurement noise value is representative of errors obtained through the analyses presented in Section \ref{sec:in_plane_est}. We find that $\epsilon$ is minimum around $\beta=\pi/2$, where the mean value of $\epsilon$ is close to $\sigma_\alpha$. This can be explained by the fact that a pair of orthogonal camera planes, and the associated pole-projection angle measurements, maximize the observability of the pole direction. Conversely, small angular separations between camera planes---i.e., for $\beta$ close to $0$ or $\pi$---lead to poor observability of the pole direction since pole projections are similar to each other. Importantly, mean-error results also exhibit a distinct knee at relatively low angular separation values, indicating a diminishing return in estimation accuracy for increasingly larger camera-view angular separations. In Figure \ref{fig:FPE_triang_mc_sigma_1deg}, the mean estimation error approaches the minimum value for $\beta \approx 20\,\mathrm{deg}$. Figure \ref{fig:FPE_triang_mc_sigmas} shows the mean estimation errors for various $\sigma_\alpha$. Although these errors generally increase with $\sigma_\alpha$ overall, the curves remain similar up to the highest tested value of $\sigma_\alpha=10^\circ$.

\begin{figure*}[t!]
    \centering
    \begin{subfigure}[t]{0.5\textwidth}
        \centering
        \includegraphics[width=\textwidth, keepaspectratio]{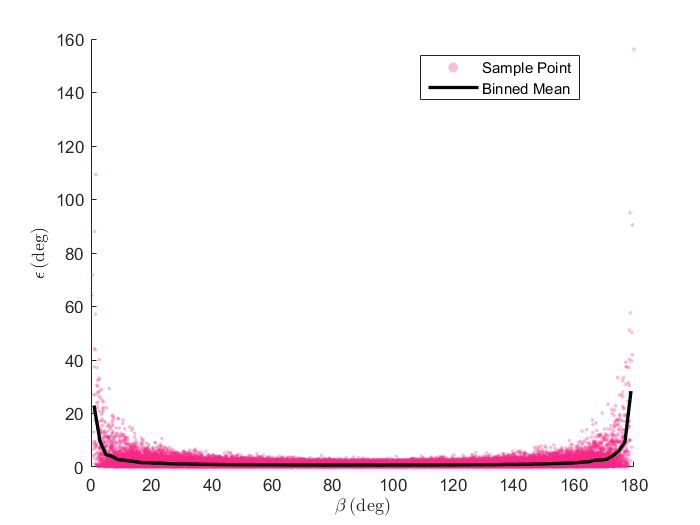}
        \caption{Original plot}
    \end{subfigure}%
    ~ 
    \begin{subfigure}[t]{0.5\textwidth}
        \centering
        \includegraphics[width=\textwidth, keepaspectratio]{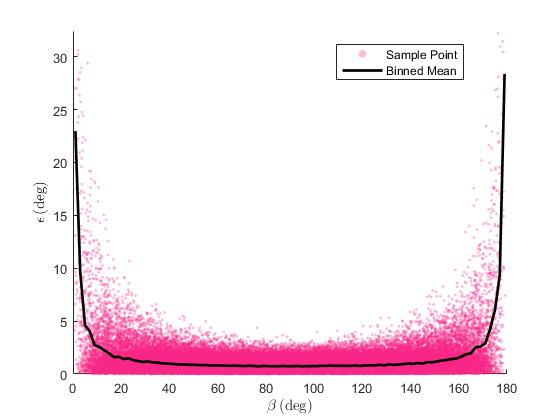}
        \caption{Magnified around mean values}
    \end{subfigure}
    \caption{Monte Carlo simulation results for the pole estimation error $\epsilon$ as a function of the camera-boresight angular separation $\beta$ between two views, given $\sigma_\alpha=1^\circ$. Each dot represents an individual Monte Carlo sample, and the solid line indicates the binned mean error within $2^\circ$ bins.}
    \label{fig:FPE_triang_mc_sigma_1deg}
\end{figure*}

\begin{figure}
    \centering
    \includegraphics[width=0.7\linewidth]{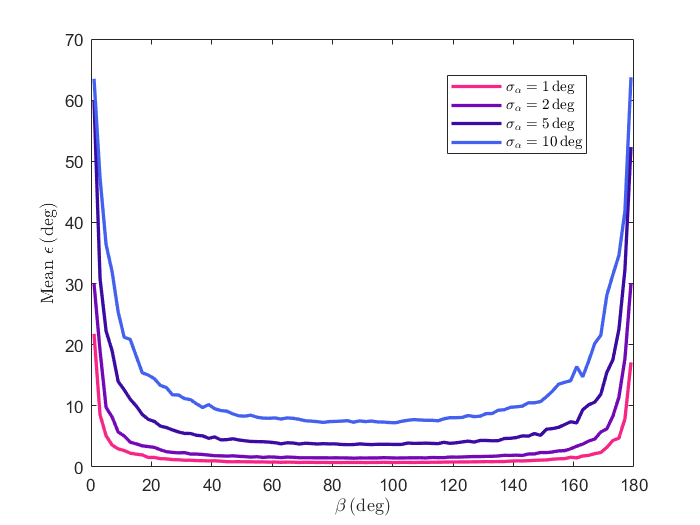}
    \caption{Monte Carlo simulation results for the binned mean ($2^\circ$ bins) of the pole estimation errors $\epsilon$ as a function of the boresight angular separation $\beta$, for different values of $\sigma_\alpha$.}
    \label{fig:FPE_triang_mc_sigmas}
\end{figure}

Lastly, Figure \ref{fig:FPE_triang_views} illustrates how the number of camera views $\aleph$ affects pole estimation performance, with each view randomly sampled as described earlier. The results show that increasing $\aleph$ reduces both the error variance and a the outlier rate, dropping from 1\% to 0.005\% when $\aleph$ increases from 2 to 4.

\begin{figure}
    \centering
    \includegraphics[width=0.7\linewidth]{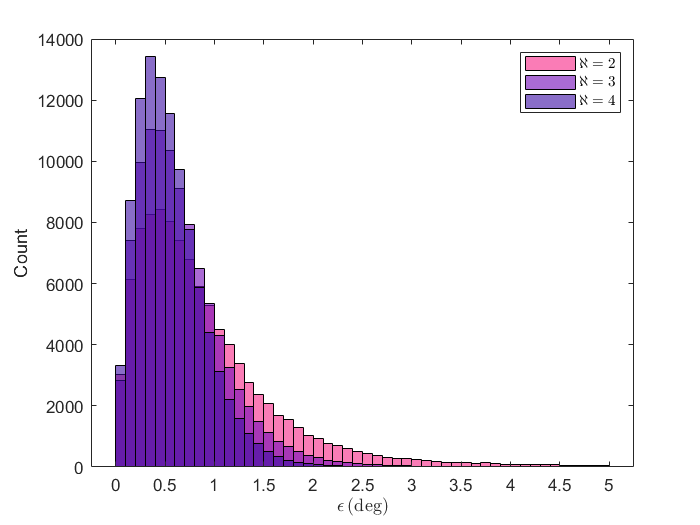}
    \caption{Monte Carlo simulation results showing estimation error histograms for various numbers of camera views ($\aleph$) at $\sigma_\alpha=1$. The horizontal axis is truncated at $5^\circ$ for clarity; the number of samples with errors exceeding $5^\circ$ is 1061 (1\%), 30 (0.03\%), and 5 (0.005\%), for $\aleph=2,3,4$, respectively.}
    \label{fig:FPE_triang_views}
\end{figure}

\section{Conclusions}
\label{sec:conclusions}

In this work, we present PoleStack, an algorithm for estimating the rotation pole of an irregularly shaped object by stacking silhouette observations from multiple images. We develop the theoretical framework to demonstrate that silhouette observations from a hovering camera exhibit reflective symmetry with respect to the object’s pole projection onto the image plane. As such, the projected-pole direction is estimated as the direction of maximum symmetry in the image obtained by stacking (i.e., co-adding) consecutive silhouette observations. By transforming the silhouette-stack image into the frequency domain, the resulting representation is invariant to translation, and thus knowledge of the object’s center of mass in the camera plane is not required for pole estimation. By triangulating multiple pole-projection measurements, the 3D pole orientation can then be estimated. Building upon these principles, we obtain a relatively simple algorithm which consists of a few steps.

We demonstrate the robustness of PoleStack against common error sources, including image-alignment errors (e.g., due to centroiding) and surface shadowing, both of which can degrade the symmetry in silhouette-stack images. Using numerical simulations, we achieve degree-level accuracy in pole-projection estimates for both a near-axisymmetric, diamond-like shape (asteroid Bennu) and a bilobed shape (comet 67P/C-G). In both cases, the silhouette observations are significantly affected by surface shadowing (collected at a $90^\circ$ sun phase angle). Furthermore, we find that pole-projection estimation performance remains largely unaffected when the camera-longitude arc is reduced (i.e., when observing partial rotations) and when using lower-resolution images.

The accuracy of the 3D pole direction estimation generally depends on the pole-projection measurement noise and the angular separation between hovering-camera views. Results indicate that degree-level 3D pole estimation accuracy can be achieved with an angular separation between camera boresight of $10^\circ$ or more. Such boresight deviation could be obtained by leveraging natural spacecraft-latitude changes that occur during the target-approach phase. If changes in the camera latitude are relatively slow with respect to the imaging cadence, portions of the approach trajectory could be well approximated by the hovering-camera model. Each portion could then be used to extract a pole-projection measurement which would contribute to the 3D-pole estimate. The pole estimate will continue to improve as additional hovering-camera image sets and measurements become available throughout the approach.

In future work, we will extend our approach to the rotation-axis estimation of tumbling objects. Furthermore, although this study focuses on small celestial bodies, the technique could also be applied to artificial objects such as uncooperative spacecraft.


\section*{Acknowledgments}
The authors would like to acknowledge the support of the Strategic University Research Partnerships Program at the Jet Propulsion Laboratory. A portion of this research was carried out at the Jet Propulsion Laboratory, California Institute of Technology, under a contract with the National Aeronautics and Space Administration. \copyright 2025 California Institute of Technology. Government sponsorship acknowledged.

Portions of this manuscript were revised using generative AI to improve readability and clarity, without altering the underlying content. The authors assume full responsibility for all final content.

\bibliography{sample}

\section*{Appendix}

\subsection*{Image Symmetries and Transformations}

\begin{definition}
\label{def:reflective_transformation}
    Let $\mathbf{x}\in\mathbb{R}^2$ be a 2D vector. The reflective transformation of $\mathbf{x}$ with respect to the line passing by the origin and with an angle $\theta$ with respect to the horizontal axis is a matrix $\mathrm{Ref}(\theta)\in\mathbb{R}^{2\times 2}$ given by:

    \begin{equation}
        \mathrm{Ref}(\theta) =
        \begin{bmatrix}
            \mathrm{cos}(2\theta) & \mathrm{sin}(2\theta)\\
            \mathrm{sin}(2\theta) & -\mathrm{cos}(2\theta)\\
        \end{bmatrix}
    \end{equation}

    The corresponding reflected point $\mathbf{x}_{\mathrm{Ref},\theta}$ is such that:

    \begin{equation}
        \mathbf{x}_{\mathrm{Ref},\theta} = \mathrm{Ref}(\theta) \, \mathbf{x}.
    \end{equation}
\end{definition}

\begin{definition}
\label{def:image_refl_symm_theta}
Let $I\in\mathbb{R}^{N\times N}$ be an image. Then, $I$ exhibits reflective symmetry with respect to $\theta$ if and only if

\begin{equation}
    I(m_{\mathrm{Ref},\theta},n_{\mathrm{Ref},\theta}) = I(m,n),\, m,n = 1,\dots, N
\end{equation}

where $m_{\mathrm{Ref},\theta},n_{\mathrm{Ref},\theta}$ are the pixel indices corresponding to the reflection of the $mn$-th pixel, i.e.:

\begin{equation}
\begin{bmatrix}
u_{n_{\mathrm{Ref}(\theta)}}\\
v_{m_{\mathrm{Ref}(\theta)}}\\
\end{bmatrix} = \mathrm{Ref}(\theta)
\begin{bmatrix}
    u_n \\ v_m
\end{bmatrix}.
\end{equation}
    
\end{definition}

\begin{definition}
\label{def:vert_refl_symm}
An image $I\in\mathbb{R}^{N\times N}$ is said to exhibit vertical reflective symmetry if and only if:
\begin{equation}
I(m,n) = I(m,N-n+1),\; m,n=1,\dots,N.
\end{equation}

\end{definition}

\begin{definition}
    \label{def:img_function}
    Let $I\in\mathbb{R}^{N\times N}$ be an image. We define $f_I: \mathbb{Z}^2 \rightarrow \mathbb{R}$ as the discrete function representing $I$, such that

    \begin{equation}
        f_I(m,n) = \begin{cases}
            I(m,n) & \text{if} \;\; m,n \leq N\\
            0 & \text{otherwise}
        \end{cases}
    \end{equation}
    
\end{definition}

\begin{definition}
\label{def:interp_fctn}
    We define $g_I: \mathbb{R}^2 \rightarrow \mathbb{R}$ as an interpolating function (e.g., nearest neighbor, bilinear, or bicubic) of $f_I$ (Definition \ref{def:img_function}), such that

    \begin{equation}
        g_I(u_n,v_m) = f_I(m,n),\, \, m,n = 1,\dots,N
    \end{equation}

\end{definition}

\begin{definition}
    \label{def:I_theta}
    Let $I\in\mathbb{R}^{N\times N}$ be an image and $R(\theta) \in \mathbb{R}^{2\times 2}$ a 2D rotation by the angle $\theta$, given by

    \begin{equation}
        R(\theta) =
        \begin{bmatrix}
            \mathrm{cos}(\theta) & -\mathrm{sin}(\theta)\\
            \mathrm{sin}(\theta) & \mathrm{cos}(\theta)
        \end{bmatrix}
    \end{equation}
    
 We define the image-rotation operator, $\mathrm{Rot}$, as the transformation which rotates the image $I$ according to $R(\theta)$, such that the rotated image $I_\theta = \mathrm{Rot}(I; \theta),\, I_\theta \in \mathbb{R}^{N'_r \times N'_c}$ is given by:

    \begin{equation}
        I_\theta(m,n) = g_I(u_{n,\theta},v_{m,\theta}),\, m=1,\dots,V,\, n=1,\dots,W
  \end{equation}

  where $N'_r$ and $N'_c$ indicate the number of rows and columns of the bounding region in $I_\theta$ containing nonzero pixels, $g_I$ is an interpolation function (Definition \ref{def:interp_fctn}) and $(u_{n,\theta},v_{m,\theta})$ are image coordinates given by

  \begin{equation}
  \label{eq:u_m,v_n}
\begin{bmatrix}
u_n \\
v_m
\end{bmatrix} = R(\theta)
\begin{bmatrix}
u_{n,\theta} \\
v_{m,\theta}
\end{bmatrix}.
\end{equation}

\end{definition}

Due to the discrete nature of images, image rotation requires an interpolation procedure to assign pixel intensities to the rotated image. Multiple interpolation techniques exist, such as bilinear and bicubic methods, which make use of $2\times 2$ and $4\times 4$ pixel neighborhoods, respectively\cite{castleman1996digital}. Further, notice that the total size of the rotated image may differ from the original image size due to potential changes in the horizontal and vertical components introduced by rotation. 





\begin{definition}
    \label{def:I_ref}
    Given an image $I \in \mathbb{R}^{N\times N}$, its reflection about the vertical axis is an image $\underline{I}\in\mathbb{R}^{N\times N}$ such that

    \begin{equation}
        \underline{I}(m,n) = I(m,N-n+1),\,m,n=1,\dots,N
    \end{equation}
\end{definition}

\begin{definition}
\label{def:circ_cropping}

    Let $I\in\mathbb{R}^{N_r\times N_c}$ be a rectangular image with $N_r$ rows and $N_c$ columns, represented by the discrete function $f_I$, according to Definition \ref{def:img_function}. We define the circular-cropping operator, $\mathrm{Crop}$, as the transformation producing a cropped, square image $I_{\mathrm{crop},\tau} = \mathrm{Crop}(I; \tau),\, I_{\mathrm{crop},\tau} \in \mathbb{R}^{N'\times N'}$, such that the discrete function $f_{I_{\mathrm{crop},\tau}}$ representing $I_{\mathrm{crop},\tau}$ is given by: 

    \begin{equation}
        f_{I_{\mathrm{crop},\tau}}(m,n) = \begin{cases}
            f_I(m,n), & \text{if} \;\; \| [u_n, v_m] \| \leq \tau \\
            0, & \text{otherwise}
        \end{cases}
    \end{equation}

    where $\tau$ is a cropping parameter defining the radius of the circular crop.
    
\end{definition}

\subsection*{The Discrete Fourier Transform}

The Discrete Fourier Transform (DFT) is a computational method that transforms a finite set of evenly spaced data points into a corresponding set of complex numbers. The latter represent the signal's composition in terms of frequency components. The DFT can be generalized to multiple dimensions. For image processing, we are interested in the 2D DFT. Notably, the Fast Fourier Transform (FFT) is an efficient algorithm to compute the DFT of a sequence, used in a variety of applications such as signal and image processing\cite{brigham1988fast}.

\begin{definition}
    \label{def:dft}
    Let $I\in \mathbb{R}^{N\times N}$ be a real-valued image. The Discrete Fourier Transform (DFT) of $I$ is the $N\times N$ complex-valued image $F=\mathcal{F}(I),\, F \in \mathbb{C}^{N\times N}$ such that:

\begin{equation}
\label{eq:F_xy}
    F(x,y) = \sum_{m=1}^{N} \sum_{n=1}^{N}I(m,n) \, \mathrm{exp}\left[ -2\pi i \left( \frac{xm}{N} + \frac{yn}{N}\right) \right].
\end{equation}

 where $i$ in Equation \ref{eq:F_xy} denotes the imaginary units. The DFT amplitude spectrum is the image $A \in \mathbb{R}^{N\times N}$ defined as
 
\begin{equation}
    A(x,y) = |F(x,y)|.
\end{equation}

whereas the DFT phase spectrum is the image $F_\mathrm{ph} \in \mathbb{R}^{N\times N}$ defined as:

\begin{equation}
F_\mathrm{ph}(x,y) = \mathrm{arctan}\left( \dfrac{\mathrm{Im}(F(x,y)}{\mathrm{Re}(F(x,y))} \right) 
\end{equation}

where $\mathrm{Re(F)}$ and $\mathrm{Im(F)}$ denote the real and imaginary parts of $F$, respectively.
\end{definition}

The 2D DFT decomposes an image into the sum of complex exponential functions. It is closely related to the 1D DFT, where a 1D signal is broken down into a finite set of sinusoidal functions, each with a different amplitude and phase. In fact, the 2D DFT is equivalent to computing the 1D DFT of each image row to produce an intermediate image, and then computing the 1D DFT of each column of the latter image. (The order of the row-wise and column-wise 1D DFT operations does not matter.) The information content in the frequency-domain image is the same as in the space-domain image and the latter can be obtained from the former by applying the so-called inverse DFT.

The following property of the DFT is provided here without proof.

\begin{lemma}
\label{lemma:dft_linearity}
The DFT is a linear operator. That is, given two images $I_a,I_b\in\mathbb{R}^{N\times N}$ and two parameters $\zeta_a,\zeta_b\in\mathbb{R}$, we have:

\begin{equation}
    \mathcal{F}(\zeta_a I_a + \zeta_b I_b) = \zeta_a F_a + \zeta_b F_b
\end{equation}

where $F_a = \mathcal{F}(I_a)$ and $F_b = \mathcal{F}(I_b)$.
\end{lemma}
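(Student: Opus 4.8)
The plan is to establish linearity directly from the defining summation of the DFT (Definition \ref{def:dft}), exploiting the fact that the Fourier kernel is a fixed, data-independent complex weight. First I would form the left-hand side $\mathcal{F}(\zeta_a I_a + \zeta_b I_b)$ and evaluate it at an arbitrary frequency index $(x,y)$, substituting the combined image $\zeta_a I_a + \zeta_b I_b$ into Equation \ref{eq:F_xy}. This produces a single double sum over $m,n$ whose summand is $\left(\zeta_a I_a(m,n) + \zeta_b I_b(m,n)\right)\,\mathrm{exp}\left[-2\pi i\left(\frac{xm}{N}+\frac{yn}{N}\right)\right]$.

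The central step is to note that the exponential weight does not depend on the pixel intensities, so distributing the summand and invoking the linearity of finite summation splits the double sum into $\zeta_a \sum_{m,n} I_a(m,n)\,\mathrm{exp}[\cdots] + \zeta_b \sum_{m,n} I_b(m,n)\,\mathrm{exp}[\cdots]$, with the scalars $\zeta_a,\zeta_b$ pulled outside their respective sums. Each of the two remaining double sums is, by Definition \ref{def:dft}, precisely $F_a(x,y)$ and $F_b(x,y)$, so the whole expression collapses to $\zeta_a F_a(x,y) + \zeta_b F_b(x,y)$. Since $(x,y)$ was arbitrary, the identity holds at every frequency index, which gives the claimed relation $\mathcal{F}(\zeta_a I_a + \zeta_b I_b) = \zeta_a F_a + \zeta_b F_b$ as an equality of complex-valued images.

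I do not expect any genuine obstacle here: the result is an immediate consequence of the distributivity of scalar multiplication over addition together with the additivity of finite sums, both of which transfer without change to complex-valued summands. The only point worth stating explicitly is that the kernel is independent of the image data, which is exactly what permits the clean separation of the two contributions; this is also the observation that later licenses the appeal to linearity in the amplitude-spectrum decomposition of Lemma \ref{lemma:A_symm_plus_blob}. Because the statement is so elementary, the paper's choice to present it \emph{without} a detailed derivation is well justified, and a formal proof amounts to little more than the three lines of kernel distribution sketched above.
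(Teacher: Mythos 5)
Your proof is correct: substituting $\zeta_a I_a + \zeta_b I_b$ into Equation \ref{eq:F_xy}, noting the kernel is data-independent, and splitting the finite double sum is exactly the canonical argument. The paper states Lemma \ref{lemma:dft_linearity} without any proof at all, so your derivation simply supplies the standard elementary justification the authors chose to omit; there is no gap and no divergence to report.
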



    

\end{document}